\theoremstyle{plain}
\newtheorem{theorem}{Theorem}[section]
\newtheorem{lemma}{Lemma}
\newtheorem{claim}{Claim}
\newtheorem{corollary}[theorem]{Corollary}
\theoremstyle{definition}
\theoremstyle{remark}
\icmltitlerunning{Offline Imitation from Observation via Primal Wasserstein State Occupancy Matching}
\begin{document}

\twocolumn[
\icmltitle{Offline Imitation from Observation via\\ Primal Wasserstein State Occupancy Matching}

% It is OKAY to include author information, even for blind
% submissions: the style file will automatically remove it for you
% unless you've provided the [accepted] option to the icml2024
% package.

% List of affiliations: The first argument should be a (short)
% identifier you will use later to specify author affiliations
% Academic affiliations should list Department, University, City, Region, Country
% Industry affiliations should list Company, City, Region, Country

% You can specify symbols, otherwise they are numbered in order.
% Ideally, you should not use this facility. Affiliations will be numbered
% in order of appearance and this is the preferred way.
\icmlsetsymbol{equal}{*}

\begin{icmlauthorlist}
\icmlauthor{Kai Yan}{uiuc}
\icmlauthor{Alexander G. Schwing}{uiuc}
\icmlauthor{Yu-Xiong Wang}{uiuc}
% \icmlauthor{Firstname4 Lastname4}{sch}
% \icmlauthor{Firstname5 Lastname5}{yyy}
% \icmlauthor{Firstname6 Lastname6}{sch,yyy,comp}
% \icmlauthor{Firstname7 Lastname7}{comp}
% %\icmlauthor{}{sch}
% \icmlauthor{Firstname8 Lastname8}{sch}
% \icmlauthor{Firstname8 Lastname8}{yyy,comp}
%\icmlauthor{}{sch}
%\icmlauthor{}{sch}
\end{icmlauthorlist}

% \icmlaffiliation{yyy}{Department of XXX, University of YYY, Location, Country}
% \icmlaffiliation{comp}{Company Name, Location, Country}
\icmlaffiliation{uiuc}{{\color{black}The Grainger College} of Engineering, University of Illinois Urbana-Champaign, Urbana, Illinois, USA}

\icmlcorrespondingauthor{Kai Yan}{kaiyan3@illinois.edu}
%\icmlcorrespondingauthor{Firstname2 Lastname2}{first2.last2@www.uk}

% You may provide any keywords that you
% find helpful for describing your paper; these are used to populate
% the "keywords" metadata in the PDF but will not be shown in the document
\icmlkeywords{Machine Learning, ICML, Reinforcement Learning, Imitation Learning, Wasserstein Distance}

\vskip 0.3in
]

% this must go after the closing bracket ] following \twocolumn[ ...

% This command actually creates the footnote in the first column
% listing the affiliations and the copyright notice.
% The command takes one argument, which is text to display at the start of the footnote.
% The \icmlEqualContribution command is standard text for equal contribution.
% Remove it (just {}) if you do not need this facility.

\printAffiliationsAndNotice{}  % leave blank if no need to mention equal contribution
%\printAffiliationsAndNotice{\icmlEqualContribution} % otherwise use the standard text.

\begin{abstract}
In real-world scenarios, arbitrary interactions with the environment can often be costly, and actions of expert demonstrations are not always available. To reduce the need for both, offline Learning from Observations (LfO) is extensively studied: the agent learns to solve a task given only expert states and \textit{task-agnostic} non-expert state-action pairs. The state-of-the-art DIstribution Correction Estimation (DICE) methods, as exemplified by SMODICE, minimize the state occupancy divergence between the learner's and empirical expert policies. However, such methods are limited to either $f$-divergences (KL and $\chi^2$) or Wasserstein distance with Rubinstein duality, the latter of which constrains the underlying distance metric crucial to the performance of Wasserstein-based solutions. To enable more flexible distance metrics, we propose Primal Wasserstein DICE (PW-DICE). It minimizes the primal Wasserstein distance between the learner and expert state occupancies and leverages a contrastively learned distance metric. Theoretically, our framework is a \textit{generalization} of SMODICE, and is \textit{the first work} that \textit{unifies} $f$-divergence and Wasserstein minimization. Empirically, we find that PW-DICE improves upon several state-of-the-art methods. The code is available at \url{https://github.com/KaiYan289/PW-DICE}.
\end{abstract}

\section{Introduction}
\label{sec:intro}
Recent years have witnessed remarkable advances in \emph{offline} Reinforcement Learning (RL)~\cite{chen2021decisiontransformer,CQL,kostrikov2021iql}: sequential decision-making problems are addressed with independently collected interaction data  rather than an online interaction which is often costly to conduct (e.g., autonomous driving~\cite{autonomous-driving}). Even without online interaction, methods achieve high sample efficiency. Such methods, however, require reward labels that are often missing when data {\color{black} are} collected in the wild~\cite{youtube-navigation}. In addition, an informative reward is expensive to obtain for many tasks, such as robotic manipulation, as it  requires a carefully hand-crafted design~\cite{yu2019meta}.
% advances: IQL, CQL, DT
To bypass the need for reward labels, offline Imitation Learning (IL) has prevailed~\cite{Hakhamaneshi2022FIST,NIPS2016_cc7e2b87,DBLP:conf/iclr/KimSLJHYK22}. It enables the agent to learn from existing demonstrations without reward labels. However, just like reward labels, expert demonstrations are also expensive and often scarce, as they need to be recollected repeatedly for every task of interest. Among different types of expert data shortages, there is one widely studied type: \textit{offline Learning from Observations (LfO).} In LfO, only the expert state, instead of both state and action, is recorded. This setting is useful when learning from experts with different embodiment~\cite{ma2022smodice} or from video demonstrations~\cite{Chen2021LearningGR}, where the expert action is either not applicable or not available. 
% mismatch is also not tested actually, so video is added

Many methods have {\color{black} thus} been proposed in offline LfO, including inverse RL~\cite{Zolna2020OfflineLF, Torabi2018GenerativeAI, DBLP:conf/iclr/KostrikovADLT19}, similarity-based reward labeling~\cite{TCN2017, Chen2021LearningGR}, and action pseudo-labeling~\cite{bco, Kumar2019LearningNS}. The state-of-the-art solution {\color{black} to} LfO is the family of DIstribution Correction Estimation (DICE) methods, which are LobsDICE~\cite{Kim2022LobsDICEOL} and SMODICE~\cite{ma2022smodice}: both methods perform a convex optimization in the dual space to minimize the $f$-divergence of the state occupancy (visitation frequency) between the learner and the empirical expert policies approximated from the dataset. Notably, DICE methods mostly focus on $f$-divergences~\cite{Kim2022LobsDICEOL, ma2022smodice, valuedice, DBLP:conf/iclr/KimSLJHYK22} (mainly KL-divergence and $\chi^2$-divergence; see Appendix~\ref{sec:mathcon} for definition), metrics that ignore some underlying geometric properties of the distributions~\cite{WGANWF}. While there is a DICE {\color{black}variant}, SoftDICE~\cite{sun2021softdice}, that introduces the Wasserstein distance to DICE methods, it adopts the Kantorovich-Rubinstein duality~\cite{KR:58,COTFNT}, which limits the choice of the underlying distance metric: duality requires the underlying metric to be Euclidean~\cite{WGANWF}. This limitation of the distance metric is not only theoretically infavorable, but also impacts practical performance. Concretely, we find the distance metric in Wasserstein-based methods to be crucial for performance (Sec.~\ref{sec:method_motivation}).  

\begin{figure*}[t]
    \centering
    \begin{minipage}[c]{0.42\linewidth}
\subfigure[Problem Setting]{\includegraphics[height=3.5cm]{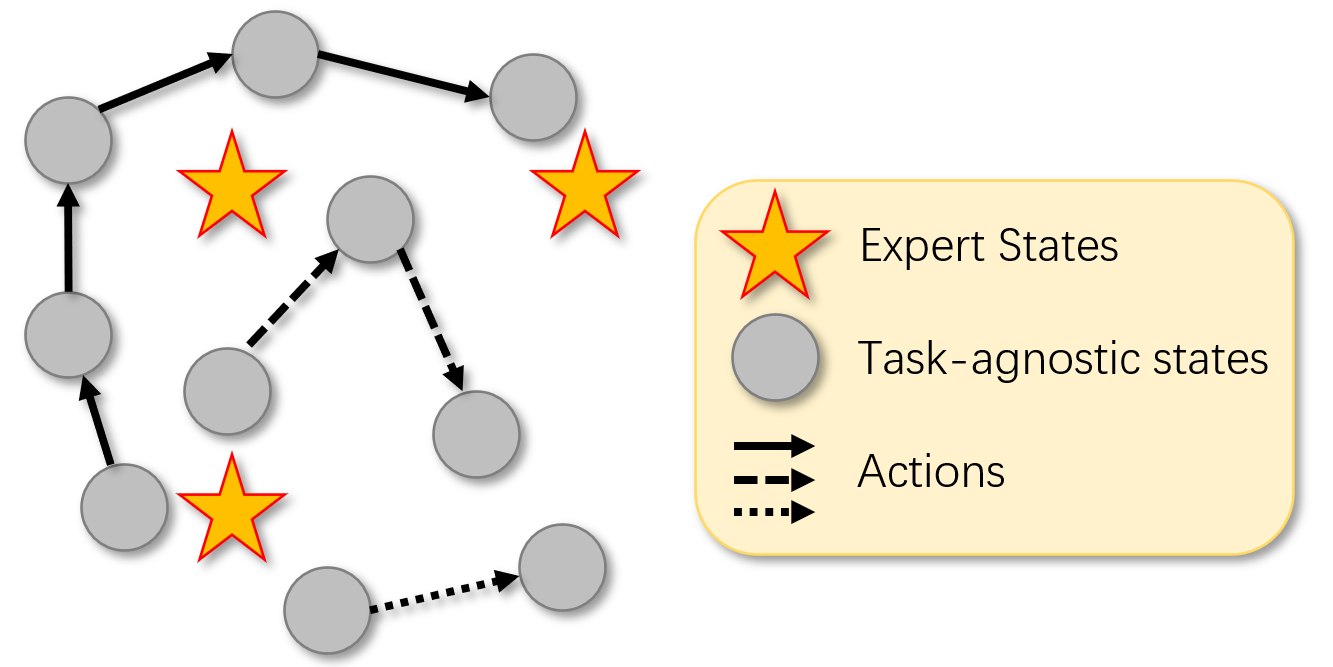}}
\end{minipage}
\begin{minipage}[c]{0.3\linewidth}
\subfigure[Wasserstein Optimization]{\includegraphics[height=3.5cm]{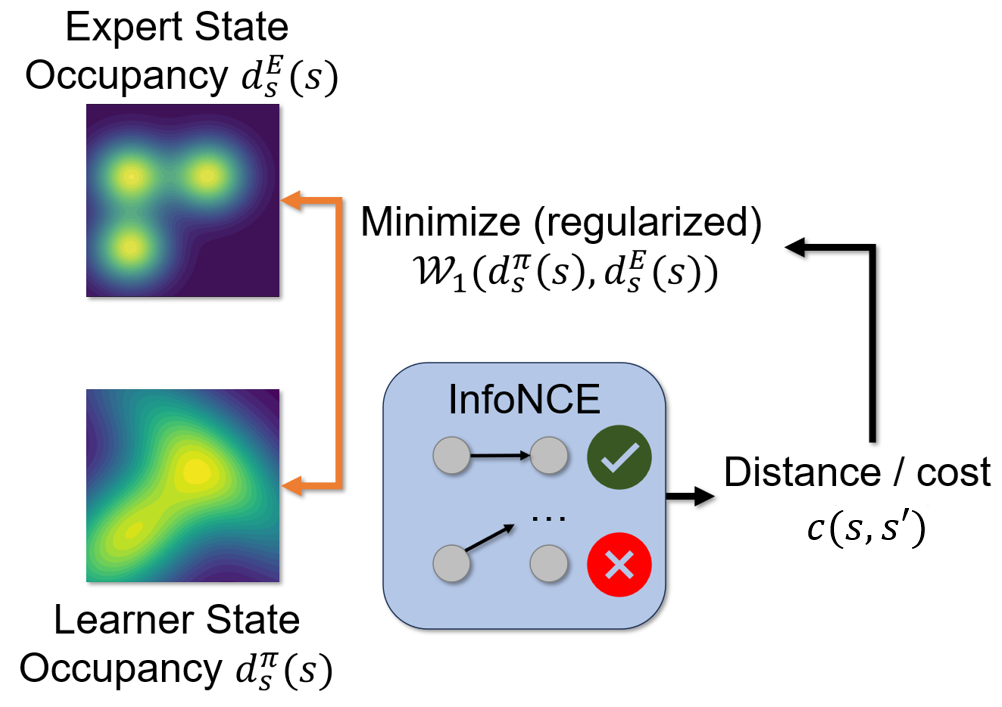}}
\end{minipage}
\begin{minipage}[c]{0.25\linewidth}
\subfigure[Weighted BC]{\includegraphics[height=3.5cm]{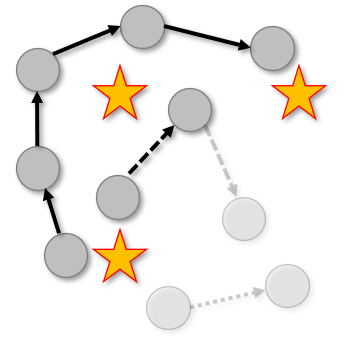}}
\end{minipage}
    \caption{An illustration of our method, PW-DICE. \textbf{a)} Problem setting: different trajectories are illustrated by different styles of arrows. \textbf{b)} PW-DICE minimizes regularized 1-Wasserstein distance between the learner's state occupancy $d^\pi_s(s)$ and the expert state occupancy $d^E_s(s)$. The underlying distance function is contrastively learned to represent the reachability between the states. \textbf{c)} With the matching result, weights are calculated for downstream weighted Behavior Cloning (BC) to retrieve the policy. High transparency indicates a small weight for the state and its corresponding action.}
    \label{fig:teaser}
\end{figure*}

To enable more flexible distance metrics, we propose Primal Wasserstein DICE (PW-DICE), a DICE method that optimizes the primal form of the Wasserstein distance. PW-DICE is illustrated in Fig.~\ref{fig:teaser}. With {\color{black} an} adequate regularizer for offline pessimism~\cite{jin2021pessimism}, the joint minimization of the Wasserstein matching variable and the learner policy can be  formulated as a convex optimization over the Lagrange space.
%becomes unconstrained bi-level maxmin optimization in the Lagrange dual space, which has a closed-form solution for the inner-level and thus becoming a single-level convex optimization over the Lagrange space. 
The policy is then retrieved by weighted behavior cloning with weights determined by the Lagrange function. Different from SMODICE and LobsDICE, {\color{black} our} underlying distance metric can be chosen arbitrarily, and different from all prior work, we explore the possibility of contrastively learning the metric from data. Compared to existing Wasserstein-based work that either uses Rubinstein dual or chooses simple, fixed metrics (e.g., Euclidean for PWIL~\cite{dadashi2020primal} and cosine for OTR~\cite{luo2023otr}), our effort endows PW-DICE with much more flexibility. Meanwhile, with specifically chosen hyperparameters, SMODICE can be obtained as a special case of PW-DICE, which theoretically guarantees our performance.

We summarize our contributions as follows: {\color{black} 1) to our best knowledge, this is \textit{the first work} that sheds light on the practical importance of the underlying distance metric in LfO;} 
2) we propose a novel offline LfO method, PW-DICE, which uses the primal Wasserstein distance for LfO, gaining more flexibility regarding the distance metric than prior work, while removing the assumption for data coverage;
3) we theoretically prove that PW-DICE is a generalization of SMODICE, thus providing \textit{the first unified framework} for Wasserstein-based and $f$-divergence-based DICE methods;
4) we empirically show that our method achieves better results than the state of the art on multiple offline LfO testbeds.
\section{Preliminaries}

% different from NeurIPS 2023 submission, we do not say "task-specific" here.

\textbf{Markov Decision Process.} The Markov Decision Process (MDP) is a widely adopted formulation for sequential decision-making problems. An MDP has five components: a state space $S$, an action space $A$, a transition function $T$, a reward $r$, and a discount factor $\gamma$. An MDP evolves in discrete steps: at step $t\in\{0,1,2,\dots\}$, {\color{black} the} state $s_t\in S$ is given, and {\color{black} an} agent, {\color{black} following} its policy $\pi(a_t|s_t)\in\Delta(A)$ ({\color{black} where }$\Delta(A)$ is the probability simplex over $A$), chooses an action $a_t\in A$. After receiving $a_t$, the MDP transits to a new state $s_{t+1}\in S$ according to the transition probability function $T(s_{t+1}|s_t,a_t)$, and yields a reward $r(s_t,a_t)\in\mathbb{R}$ as feedback. The agent needs to maximize the discounted total reward $\sum_t\gamma^tr(s_t,a_t)$ with discount factor $\gamma\in[0,1]$. A complete run of the MDP is defined as an episode, with the state(-action) pairs collected along the trajectory $\tau$. The state occupancy, which is the visitation frequency of states given policy $\pi$, is $d^\pi_s(s)=(1-\gamma)\sum_t\gamma^t\Pr(s_t=s)$. See Appendix~\ref{sec:mathcon}  for more rigorous definitions of the state occupancy and other occupancies.

\textbf{Offline Imitation Learning from Observations (LfO).} In offline LfO, the agent needs to learn from two sources of data: 1) the \textit{expert} dataset $E$ with state-only trajectories $\tau_E=\{s_1, s_2,\dots, s_{n_1}\}$ that solve the exact target task, and 2) the \textit{task-agnostic} non-expert dataset $I$ consisting of less relevant state-action trajectories $\tau_I=\{(s_1,a_1),(s_2,a_2),\dots,(s_{n_2},a_{n_2})\}$. Ideally, the agent learns the environment dynamics from $I$, and tries to follow the expert states in $E$ with information about the MDP inferred from $I$. The state-of-the-art methods in offline LfO are SMODICE~\cite{ma2022smodice} and LobsDICE~\cite{Kim2022LobsDICEOL}. The two methods are in spirit similar, with the former minimizing state occupancy divergence and the latter optimizing adjacent \textit{state-pair} occupancy divergence.

% we have a similar derivation; no need to repeat here

\textbf{Wasserstein Distance.} The Wasserstein distance, also known as Earth Mover's Distance (EMD)~\cite{kantorovich1960mathematical}, is widely used as the distance between two probability distributions. It captures the geometry of the underlying space better and does not require any intersection between the support sets. For two distributions $p\in\Delta(S),q\in\Delta(S)$ over state space $S$, the Wasserstein distance\footnote{Unless otherwise specified, we only consider $1$-Wasserstein distance in this paper.}  with {\color{black} an} underlying metric $c(x,y):S\times S\rightarrow\mathbb{R}$ can be written as $\mathcal{W}(p,q)=\inf_{\Pi\in S\times S}\int_{x\in S}\int_{y\in S}\Pi(x,y)c(x,y)$, which is the \textit{primal form} of the Wasserstein distance; {\color{black} $\Pi$ is the matching variable between $p$ and $q$.} Wasserstein also has an equivalent Kantorovich-Rubinstein dual form~\cite{KR:58}, which is $\mathcal{W}(p,q)=\max_{\|f\|_L\leq 1}\mathbb{E}_{x\sim p}f(x)-\mathbb{E}_{y\sim q}f(y)$, where $\|f\|_L\leq 1$ means that the function $f$ is $1$-Lipschitz. While this form is  often adopted by the machine learning community, the Lipschitz constraint is usually implemented by a gradient regularizer {\color{black} in practice}. As the gradient is defined using a Euclidean distance, the underlying distance metric for Rubinstein duality is also restricted to Euclidean~\cite{WGANWF}, which is often suboptimal.
% Wasserstein distance
\section{Method}
\label{sec:method}

This section is organized as follows: in Sec.~\ref{sec:method_motivation}, we first validate our motivation, i.e., the importance of selecting an adequate distance metric by comparing metrics using existing Wasserstein-based solutions; then, we detail our proposed optimization objective in Sec.~\ref{sec:reg}; 
%before converting it into a single-level convex optimization in Sec.~\ref{sec:fenchel}; 
finally, we discuss our choice of {\color{black} the} distance metric in Sec.~\ref{sec:contra}. See Tab.~\ref{tab:notelist} in Appendix~\ref{sec:notlist} for a reference of the notation, and Appendix~\ref{sec:proof} for detailed derivations.
\begin{figure}[t]
    \centering
    \includegraphics[width=\linewidth]{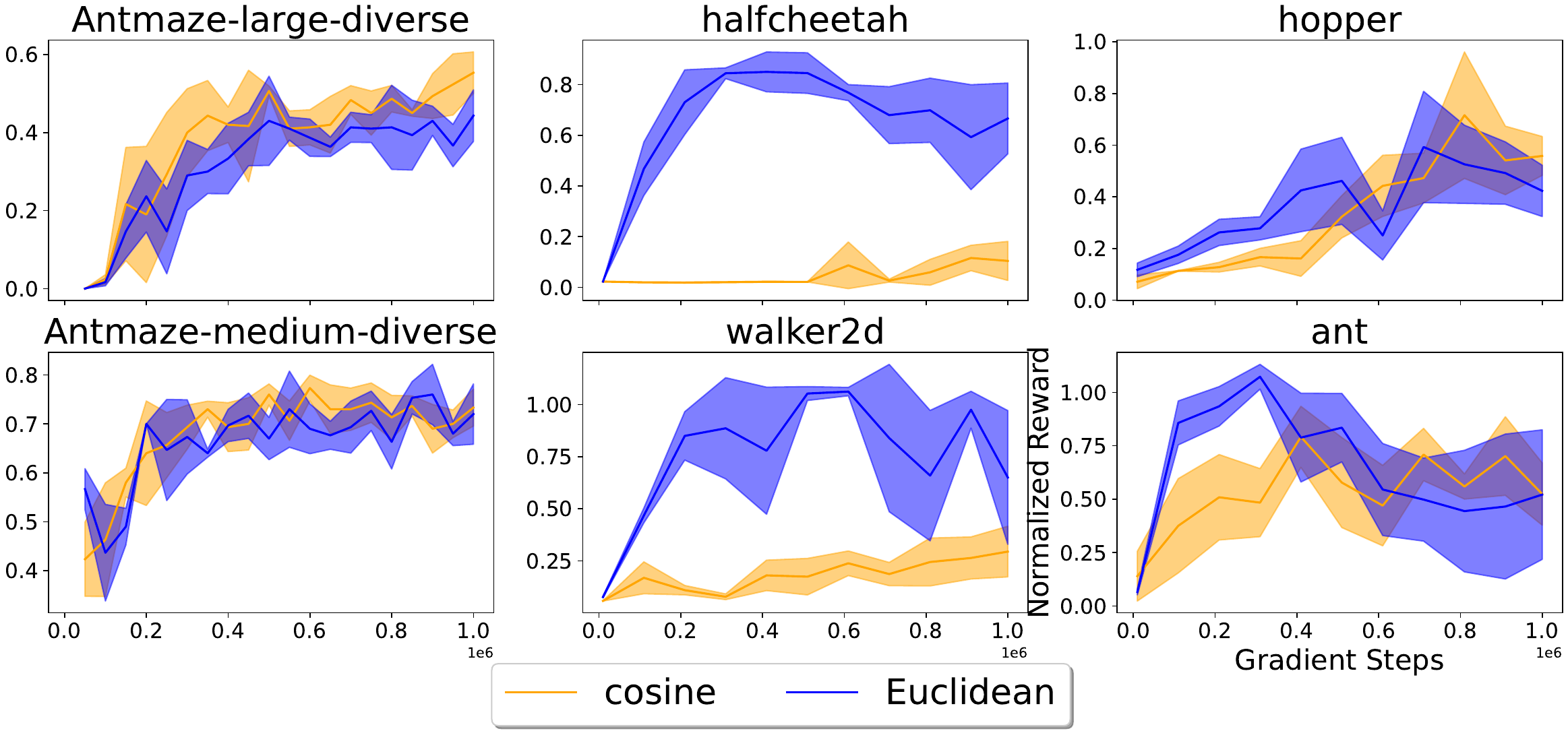}
    \caption{Performance comparison between {\color{black}the} default (normalized cosine) distance metric and Euclidean distance metric using OTR~\cite{luo2023otr}  (first column), and SMODICE~\cite{ma2022smodice} (second and third column{\color{black}s}). The result shows that the underlying distance metric is crucial for the {\color{black}performance} of Wasserstein-based methods.}
    \label{fig:motivation}
\end{figure}
\subsection{Validation of Motivation}
\label{sec:method_motivation}
% for better readability, we assume S is finite and the minimum value is reachable (inf->min); the idea is the same

As mentioned in Sec.~\ref{sec:intro}, our goal is to improve the idea of divergence minimization between the learner's policy and the expert policy estimated from the expert dataset. For this we suggest to use the primal Wasserstein distance {\color{black}which allows flexibly using an} arbitrary underlying distance metric. To show the importance of distance metrics and the advantage of being able to select them, we study Optimal Transport Reward (OTR)~\cite{luo2023otr}, a current Wasserstein-based IL method that can be applied to our LfO setting. OTR optimizes the primal Wasserstein distance between every trajectory in the task-agnostic dataset and the expert trajectory, and uses the result to assign a reward to each state in the task-agnostic dataset. Then, offline RL is applied to retrieve the optimal policy. Fig.~\ref{fig:motivation} shows results of OTR on the D4RL MuJoCo dataset (see Sec.~\ref{sec:MuJoCoexp} for more) with testbeds appearing in both SMODICE~\cite{ma2022smodice} and OTR. We test both the cosine-similarity-based occupancy used in~\citet{luo2023otr} and the Euclidean distance as the underlying distance metric. The results illustrate that distance metrics have a significant impact on {\color{black}outcomes}. Thus, {\color{black}selecting} a good metric is crucial for the performance of Wasserstein-based solutions.  

This experiment motivates our desire to develop a method that permits {\color{black}the} use of arbitrary distance metrics in Wasserstein-based formulations. Further, the {\color{black} observed} performance difference inspires us to automate {\color{black} the} selection of the metric{\color{black}, going} beyond classic metrics such as cosine and Euclidean. We discuss the formulation of our suggested method next.

\subsection{Optimization Objective}
\label{sec:reg}

Our goal is to optimize the primal Wasserstein distance between the model-estimated state occupancy $d^\pi_s(s)$ induced by the policy $\pi$ and the empirical state occupancy  $d^E_s(s)$ estimated from expert data. 
This can be formalized via the following program: 
\begin{equation}
\min_{\Pi, \pi}\sum_{s_i\in S}\sum_{s_j\in S}\Pi(s_i, s_j)c(s_i,s_j), \text{s.t. }  d^\pi_{sa}\geq 0, \Pi\geq 0; 
\label{eq:primal_main}
\end{equation}
\begin{gather*}
\forall s\in S, d^\pi_s(s)=(1-\gamma)p_0(s)+\gamma\sum_{\bar{s},\bar{a}}d^\pi_{sa}(\bar{s},\bar{a})p(s|\bar{s},\bar{a});\\
\scalebox{0.95}{$\forall s_i, s_j\in S,\sum_k\Pi(s_k, s_j)=d^E_s(s_j), \sum_k\Pi(s_i,s_k)=d^\pi_s(s_i).$}
\end{gather*}
In Eq.~\eqref{eq:primal_main}, we use $\Pi(s_i, s_j)$ as the matching variable between the two state occupancy distributions $d^\pi_s(s)$ and $d^E_s(s)$, and $c(s_i, s_j)$ is the distance between states $s_i$ and $s_j$. 
%$d^E_s$ is the state occupancy of the expert policy, $d^\pi_s$ is the state occupancy induced by policy $\pi$, and 
Further, $d^\pi_{sa}$ is the state-action occupancy of our learned policy $\pi$, and $p_0\in\Delta(S)$ is the distribution of the MDP's initial states. Note that there are two types of constraints in Eq.~\eqref{eq:primal_main}: the first row, together with $d^\pi_{sa}\geq 0$, is the marginal constraint for the matching variable $\Pi$. The second row and $\Pi\geq 0$ are the \textit{Bellman flow constraints}~\cite{ma2022smodice} that ensure correspondence between $d^\pi_s$ and a feasible policy $\pi$.

For a tabular MDP, Eq.~\eqref{eq:primal_main} can be solved by any Linear Programming (LP) solver, as both the objective and the constraints are linear. However, using an LP solver is impractical for any MDP with continuous state or action spaces. In these cases, we need to convert the problem into a program that is easy to optimize. %, and preferably single-level to avoid instability in iterations. 
The common way to remove constraints is to consider the Lagrangian dual problem. However, the Lagrangian dual problem of an LP with constraints is also an LP with constraints. In order to reduce constraints in the dual program, we smooth the objective by using: 
\begin{equation}
\begin{aligned}
&%\min_{\Pi\geq 0, \pi\geq 0} 
\Pi(s_i,s_j)c(s_i,s_j)+\epsilon_1 D_f(\Pi\|U)+\epsilon_2 D_f(d^\pi_{sa}\|d^I_{sa}).%\\
%&\forall s\in S, d^\pi_s(s)=(1-\gamma)p_0(s)+\gamma\sum_{\bar{s},\bar{a}}d^\pi_{sa}(\bar{s},\bar{a})p(s|\bar{s},\bar{a});\\
%&\forall s_i, s_j\in S,\sum_k\Pi(s_k, s_j)=d^E_s(s_j), \sum_k\Pi(s_i,s_k)=d^\pi_s(s_i).
%&\text{s.t. }  d^\pi_{sa}\geq 0, \Pi\geq 0, \text{Bellman flow equation over $\pi$ holds;} \\
%&\forall s_i, s_j\in S,\sum_k\Pi(s_k, s_j)=d^E_s(s_j), \sum_k\Pi(s_i,s_k)=d^\pi_s(s_i).
\label{eq:reg}
\end{aligned}
\end{equation}
% \begin{gather*}
% \small

% \end{gather*}
In Eq.~\eqref{eq:reg}, $d^I_s$ and $d^I_{sa}$ are the empirical state occupancy and state-action occupancy of the task-agnostic dataset $I$ respectively. Further, we let $U(s,s')=d^E_s(s)d^I_s(s')$, i.e., $U$ is the product of two independent distributions $d^E_s$ and $d^I_s$. Moreover,   $\epsilon_1>0, \epsilon_2>0$ are hyperparameters, and $D_f$ can be any $f$-divergence (we will focus on {\color{black}the} KL-divergence in this paper). Note, despite the use of an $f$-divergence, different from SMODICE~\cite{ma2022smodice} or LobsDICE~\cite{Kim2022LobsDICEOL}, this formulation does not require data coverage of the task-agnostic data over expert data. The two regularizers are ``pessimistic'': they encourage the agent to stay within the support set of the dataset which is common in offline IL/RL~\cite{jin2021pessimism}.

Given the  smooth objective, we  apply %Lagrange dual over Eq.~\eqref{eq:reg} and optimize over the bi-level dual. While one could directly use KKT conditions to calculate a closed-form solution for the inner-level optimization and turn it into a single-level optimization, such a method would lead to $\exp$ in the objective which is unstable~\cite{Kim2022LobsDICEOL}). Fortunately, by applying a theorem~\cite{ma2022smodice} with 
Fenchel duality to derive a  robust single-level optimization in the dual space. See Appendix~\ref{sec:optdetail} for a detailed derivation. % for this part.

The dual program  when letting $D_f$ refer to a KL-divergence (see Appendix~\ref{sec:chidiv} for a discussion on $\chi^2$-divergence) reads as follows: 
%
%Finally, by considering the elements in $A$ (see Appendix~\ref{sec:component}), we get our final objective
% 
\begin{equation}
\small
\begin{aligned}
\label{eq:final}
&\min_\lambda\ \epsilon_1\log \mathbb{E}_{s_i\sim I, s_j\sim E}\exp\left(\frac{\lambda_{i+|S|}+\lambda_{j+2|S|}-c(s_i,s_j)}{\epsilon_1}\right)\\
+&\epsilon_2\log \mathbb{E}_{(s_i,a_j)\sim I}\exp\left(\frac{-\gamma \mathbb{E}_{s_k\sim p(\cdot|s_i,a_j)}\lambda_k+\lambda_i-\lambda_{i+|S|}}{\epsilon_2}\right)\\&-\left[(1-\gamma)\mathbb{E}_{s\sim p_0}\lambda_{:|S|}+\mathbb{E}_{s\sim E}\lambda_{2|S|:3|S|}\right].
\end{aligned}
\end{equation}
Intuitively, the dual variables $\lambda\in\mathbb{R}^{3|S|}$ in the objective are divided into three parts, each of size $|S|$: $\lambda_i, i\in\{1,2,\dots,|S|\}$ can be seen as a variance of the \textit{value function}, where $-\gamma \mathbb{E}_{s_k\sim p(\cdot|s_i,a_j)}\lambda_k+\lambda_i-\lambda_{i+|S|}$ is its Bellman residual, or negative $\text{TD}(0)$ advantage. $\lambda_{i+|S|}$ and $\lambda_{i+2|S|}$ are costs attached to a particular state: if we compare Wasserstein matching to shipping probability mass, $\lambda_{i+|S|}$ would be the loading cost from states of $\pi$ and $\lambda_{i+2|S|}$ would be the unloading cost to the states of $E$. 
Note, by Theorem~\ref{thm:smodice} (see Appendix~\ref{sec:optdetail} for {\color{black}a} detailed derivation), we have $d^\pi_{sa}=d^I_{sa}\cdot \text{softmax}\left(\frac{-\gamma \mathbb{E}_{s_k\sim p(\cdot|s_i,a_j)}\lambda_k+\lambda_i-\lambda_{i+|S|}}{\epsilon_2}\right)$ at the optimum, and the denominator of the softmax is summing over all state-action pairs. 

With $\lambda$ optimized, we retrieve the desired policy $\pi$ by weighted behavior cloning, maximizing the following objective:
\begin{equation}
\small
\begin{aligned}
\label{eq:policy}
&\mathbb{E}_{(s_i,a_j)\sim d^\pi_{sa}}\log \pi(a|s)=\mathbb{E}_{(s_i,a_j)\sim I}\frac{d^\pi_{sa}(s_i,a_j)}{d^I_{sa}(s_i,a_j)}\log \pi(a_j|s_i)\\
\propto\ & \mathbb{E}_{(s_i,a_j)\sim I}\exp\left(\frac{-\gamma \mathbb{E}_{s_k}\lambda_k+\lambda_i-\lambda_{i+|S|}}{\epsilon_2}\right)\log \pi(a_j|s_i).
\end{aligned}
\end{equation}
In practice, we use $1$-sample estimation for $p(\cdot|s_i, a_j)$, {\color{black}a method} found to be simple and effective {\color{black} in prior work}~\cite{ma2022smodice, Kim2022LobsDICEOL}. That is, we sample $(s_i,a_j,s_k)\sim I$ from the dataset instead of $(s_i, a_j)$, and use $\lambda_k$ corresponding to $s_k$ as an estimation for $\mathbb{E}_{s_k\sim p(\cdot|s_i,a_j)}\lambda_k$. Since the number of states can be infinite in practice, we use a 3-head neural network to estimate $\lambda_s, \lambda_{s+|S|}$ and $\lambda_{s+2|S|}$ given state $s$. See Appendix~\ref{sec:pseudocode} for pseudo-code of our algorithm where we iteratively optimize the dual by Eq.~\eqref{eq:final} and obtain the policy $\pi$ by Eq.~\eqref{eq:policy}. %to optimize the dual and eventually obtain the policy $\pi$.

Importantly, note that our formulation can be seen as a generalization of SMODICE~\cite{ma2022smodice}. It is not hard to see why and we point this out next. SMODICE's objective with KL divergence reads as follows: 
\begin{equation}
\begin{aligned}
\min_V &\log E_{(s,a)\sim I}\left[\exp(R(s)+\gamma \mathbb{E}_{s'\sim(s,a)}-V(s))\right]+\\
&(1-\gamma)\mathbb{E}_{s\sim p_0}\left[V(s)\right],
\end{aligned}
\end{equation}
where $V(s)$ is a value function and $R(s)$ is the reward assigned for states. It is easy to see that $\lambda_i$ corresponds to $V(s)$ in SMODICE, and SMODICE is a special case of PW-DICE with $\epsilon_1\rightarrow 0, \epsilon_2=1$, and $c(s,s')=-R(s)$. We highlight that in the SMODICE setting, the distance $c(s,s')$ only depends on the first state $s$. Thus, the total matching cost is fixed for any matching plan given particular {\color{black}state occupancy of} $\pi$, i.e., $d^\pi_s$. %is nullified by the fact that the cost 
%is unrelated to the matching plan and 
Meanwhile, the pessimistic regularizer with large coefficient $\epsilon_2$ dominates. This generalization property also holds for other divergences such as $\chi^2$. See Appendix~\ref{sec:proof} for a more rigorous derivation.

%%%%%%%%%%%%%%%%%%%%%%%%%%%%%%%%%%%%%%%%%%%%%%%%%%%%%%%%%%%%%%%
\iffalse
Note, the formulation can be seen as a generalization of SMODICE~\cite{ma2022smodice}. More specifically, we have the following theorem (see Appendix~\ref{sec:proof} for proof):

\begin{theorem}
\label{thm:general}
If $c(s_i,s_j)=-\log\frac{d^E_s(s_i)}{d^I_s(s_i)}, \epsilon_2=1$, then as $\epsilon_1\rightarrow 0$, Eq.~\eqref{eq:final} is equivalent to SMODICE objective with KL divergence.  
\end{theorem}

For different $D_f$, similarly we have the following corollary:

\begin{corollary}
\label{thm:corol}
If $c(s_i,s_j)=-\log\frac{d^E_s(s_i)}{d^I_s(s_i)}, \epsilon_2=1$, then as $\epsilon_1\rightarrow 0$, Eq.~\eqref{eq:separate} is equivalent to SMODICE with any $f$-divergence.
\end{corollary}
Thus, our PW-DICE work is a unification of $f$-divergence and Wasserstein distance minimization.
\fi
%%%%%%%%%%%%%%%%%%%%%%%%%%%%%%%%%%%%%%%%%%%%%%%%%%%%%%%%%%%%%%%
\subsection{Underlying Distance Metric}
\label{sec:contra}

Given Eq.~\eqref{eq:final} and Eq.~\eqref{eq:policy}, it remains to choose the distance metric $c(s_i, s_j)$. For tabular cases, one could use the simplest distance, i.e., $c(s_i,s_j)=1$ if $s_i\neq s_j$, and $0$ otherwise. However, such {\color{black}a} distance only provides ``sparse'' information in the continuous case. The distance will mostly be $0$, and will degrade to all zeros if there is no common state in the expert dataset $E$ and the task-agnostic dataset $I$. To address this, {\color{black}prior work has} explored many heuristic choices, such as cosine similarity~\cite{luo2023otr} or a Euclidean~\cite{sun2021softdice} distance. However, such  choices are often suboptimal for particular environments, as shown when validating our motivation in Sec.~\ref{sec:method_motivation}. %prone to different representations over the same state, e.g., scaling in particular dimensions. 

In this work, inspired by both CURL~\cite{laskin2020curl} and SMODICE~\cite{ma2022smodice}, we propose a weighted sum of $R(s)=\log\frac{d^E_s(s)}{(1-\alpha)d^I_s(s)+\alpha d^E_s(s)}$ and the Euclidean distance between an embedding learned by the InfoNCE~\cite{infonce} loss. To be more specific, we let  the distance metric $c$  be
\begin{equation}
    c(s_i,s_j)=R(s_i)+\beta\|g(s_i)-g(s_j)\|^2_2,
\end{equation} 
where $g(s_i), g(s_j)$ are learned embeddings for the states $s_i, s_j$ respectively, $\alpha$ is a positive constant close to $0$, and $\beta\geq 0$ is a hyperparameter. 

The distance function consists of two parts. The first part, $R(s_i)$, is a modified version of the SMODICE reward function $\log\frac{d^E_s(s)}{d^I_s(s)}$. Intuitively, high $\log\frac{d^E_s(s)}{d^I_s(s)}$ indicates that the state $s$ is more frequently visited by the expert than agents generating the task-agnostic data, which is probably desirable. Such reward can be obtained by training a discriminator {\color{black}$h(s)$} that takes expert states from $E$ as label $1$ and non-expert ones as label $0$. If {\color{black}$h$ is optimal, i.e., $h(s)=h^*(s)=\frac{d^E_s(s)}{d^E_s(s)+d^I_s(s)}$, then we have $\frac{d^E_s(s)}{d^I_s(s)}=\log\frac{h^*(s)}{1-h^*(s)}$.} Based on this, we change the denominator $d^E_s(s)$ to $(1-\alpha)d^I_s(s)+\alpha d^E_s(s)$ to lift the theoretical assumption that the task-agnostic dataset $I$ covers the expert dataset $E$, i.e., $d^I_s(s)>0$ wherever $d^E_s(s)>0$. 

The second part uses the embedding $g(s)$ learned with InfoNCE~\cite{infonce}, which {\color{black}is} also {\color{black}adopted} in CURL~\cite{laskin2020curl} and FIST~\cite{Hakhamaneshi2022FIST}. Different from CURL, where the contrastive learning is an auxiliary loss {\color{black}in addition to} RL for better extraction of features, and FIST, which tries to find the similarity between the current state and a state in the dataset, we want $g(s)$ and $g(s')$ to be similar if and only if they are reachable along trajectories in the task-agnostic dataset. 
For this, we sample a batch of consecutive state pairs $(s_i,s_i'), i\in\{1,2,\dots\}$, and use the following loss function:
\begin{equation}
\small
\label{eq:infonce}
\log\frac{\exp\left(g(s_i)^TWg(s'_i)\right)}{\exp\left(g(s_i)^TWg(s'_i)\right)+\sum_{j\neq i}\exp\left(g(s_i)^TWg(s'_j)\right)}.
\end{equation}
Here, $g(s_i)$ can be seen as an \textit{anchor} in contrastive learning, $W$ is a learned matrix, $g(s'_i)$ is its \textit{positive key}, and $g(s'_j), j\neq i$ is its \textit{negative key}. 
%where $q$ is the query (anchor), $W$ is a learned weight matrix, $k_+$ is positive key, and $k_-$ are negative keys. To train the embedding function $f$, for every gradient step, we sample a batch of adjacent state pairs $\{(s_i, s'_i)|i\in\{1,2,\dots, K\}\}$; then, for $q=f(s_i)$, we set $k_+=f(s'_i)$ and the set of $k_-$ to be $\{f(s'_j)|j\neq i\}$; this essentially amounts to a $K$-way classification task, where for the $i$-th sample the correct label is $i$. 
%\vspace{1pt}
Intuitively, the idea is to learn a good embedding space where the vicinity of a state can be assessed by the Euclidean distance between the embedding vectors. We define the vicinity as the ``reachability'' between states: if one state can reach the other through a trajectory in the task-agnostic data, then states should be close, otherwise they are far from each other. This definition groups states that lead to success  in the embedding space (see Fig.~\ref{fig:visualization} for a visualization), while being robust to actual numerical values of the state (see Sec.~\ref{sec:MuJoCoexp} for empirical evaluations).

%See Appendix~\ref{sec:expdetail} for details and Appendix~\ref{sec:robust} for empirical validation.

\section{Experiments}

We evaluate PW-DICE  across multiple environments. We strive to answer two main questions: 1) can the Wasserstein objective indeed lead to a closer match between the learner's and the expert policies (Sec.~\ref{sec:tabular})?; and 2) can PW-DICE improve upon $f$-divergence based methods on more complicated environments, and does a flexible underlying distance metric indeed help (Sec.~\ref{sec:MuJoCoexp})?

\subsection{Primal Wasserstein vs.\ $f$-Divergence}
\label{sec:tabular}
\textbf{Baselines.} We compare to the two major state-of-the-art baselines, %closely discussed in the paper, which are 
SMODICE~\cite{ma2022smodice} and LobsDICE~\cite{Kim2022LobsDICEOL}. We test two variants of our method: 1) Linear Programming (LP) by directly solving Eq.~\eqref{eq:primal_main}; and 2) Regularized (Reg) which solves Eq.~\eqref{eq:reg}. As the environment is tabular, all methods are implemented with CVXPY~\cite{cvxpy}\footnote{\color{black}In our experiments, CVXPY usually invokes Gurobi~\cite{gurobi} for linear programming and MOSEK~\cite{mosek} for other objectives during optimization.} for optimal numerical solutions. The mean and standard deviation  are obtained from $10$ independent runs with different seeds. We evaluate all methods with the \textbf{regret}, i.e., the gap between reward{\color{black}s} gained by {\color{black}the} learner{\color{black}'s and expert policies} (\textit{lower is better}). To be consistent with LobsDICE, in  Appendix~\ref{sec:sup-tabular}, we also compare the Total Variation (TV) distance {\color{black}for} the state and state-pair occupancies, i.e., $\text{TV}(d^\pi_{s}\|d^E_{s})$ and  $\text{TV}(d^\pi_{ss}\|d^E_{ss})$. 

\begin{figure*}[t]
    \centering
    \includegraphics[width=\linewidth]{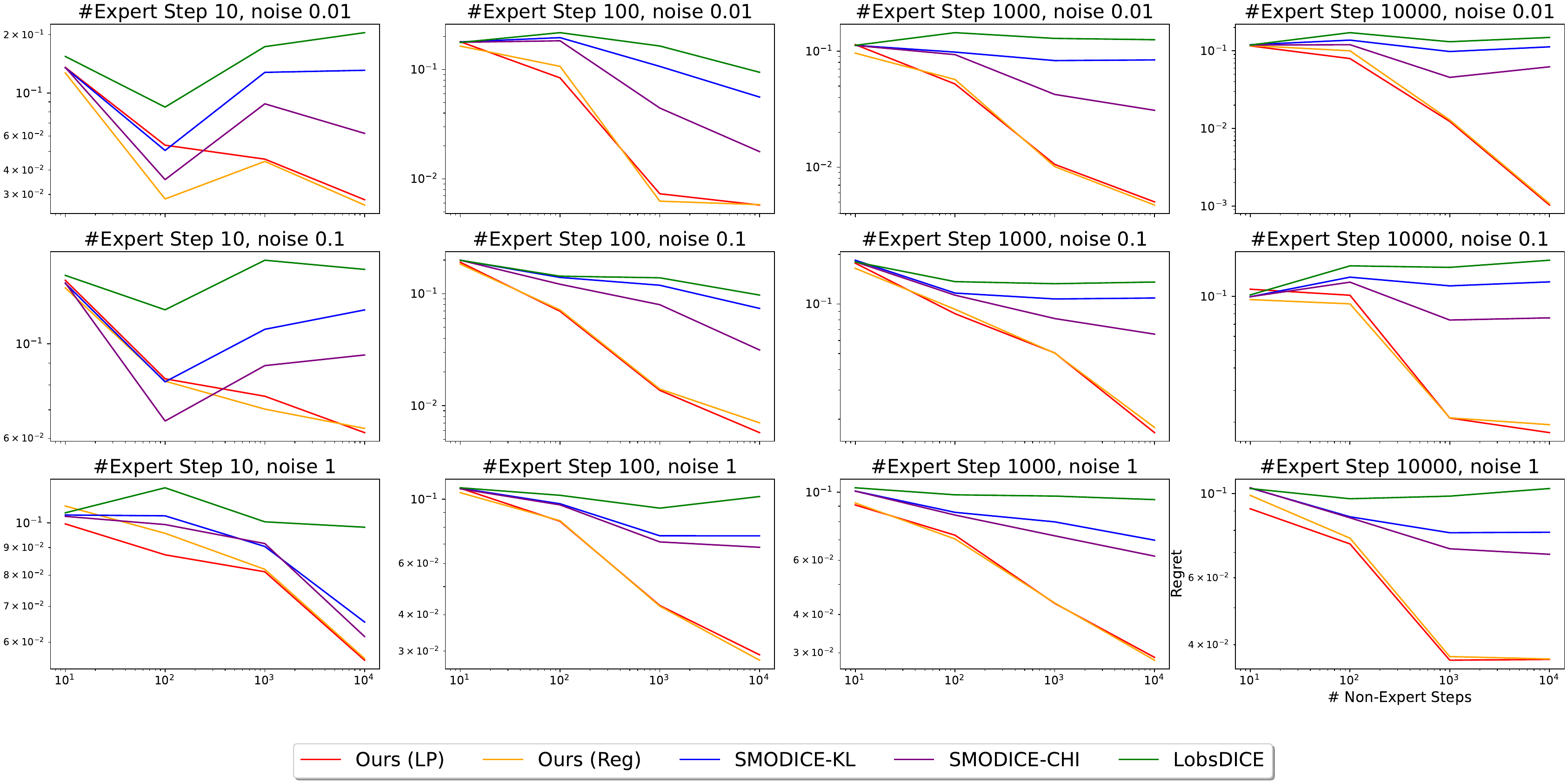}
    \caption{The regret (reward gap between learner and expert) of each method on a tabular environment. We observe our method to work {\color{black}the} best, regardless of the presence of a regularizer; {\color{black} the regularizer is more important in continuous MDPs.}}
    \label{fig:regret-tabular}
\end{figure*}

\textbf{Environment Setup.} Following the random MDP experiment in LobsDICE~\cite{Kim2022LobsDICEOL}, we randomly generate {\color{black}an} MDP with $|S|=20$ states, $|A|=4$ actions, and $\gamma=0.95$. The stochasticity of the MDP is controlled by {\color{black}$\eta\in[0, 1]$}, where $\eta=0$ is deterministic and $\eta=1$ is highly stochastic. Agents always start from one particular state, and aim to reach another particular state with reward $+1$, which is the only source of reward. We report the regret for different $\eta$, expert dataset sizes, and task-agnostic dataset sizes. The only difference from LobsDICE {\color{black}is}: the expert policy is deterministic instead of being softmax, as we found the high connectivity of the MDP states to lead to a near-uniform value function. Thus, the softmax expert policy is highly suboptimal and near-uniform. See Appendix~\ref{sec:expdetail} for {\color{black}explanation} and Appendix~\ref{sec:sup-tabular} for results.

% Each action transits to four states $\{s_1, s_2, s_3, s_4\}$ with probability $(1-\beta)X+\beta Y$, where $X\in\Delta^4$ is a uniformly generated one-hot vector, and $Y\in\Delta^4$ is sampled from Dirichlet distribution $\text{Dir}(1, 1, 1, 1)$. $\beta\in[0, 1]$ controls the stochasticity of the 

\textbf{Experimental Setup.} As the environment is tabular, as mentioned above, we use CVXPY~\cite{cvxpy} to solve for the optimal policy for each method using the \textit{primal} formulation. For example, we directly solve Eq.~\eqref{eq:primal_main} to get the learner's policy $\pi$. Following SMODICE, for estimating transition function and the task-agnostic average policy $\pi^I$, we simply count the state-action pair and transitions from the task-agnostic dataset $I$, i.e., the transition probability $p(s'|s,a)=\frac{\#[(s,a,s')\in I]}{\#[(s,a)\in I]}$, and $\pi^I(a|s)=\frac{\#[(s,a)\in I]}{\#[s\in I]}$ ($\#$ stands for ``the number of''). Similarly, the expert state occupancy $d^E_s$ is estimated by $d^E_s(s)=\frac{\#[s\in E]}{|E|}$, where $|E|$ is the size of the expert dataset $E$.
Notably, if the denominator is $0$, the distribution will be estimated as uniform. {\color{black}As the environment is tabular, we use the simplest distance metric, described in the beginning of Sec.~\ref{sec:contra}, i.e., $c(s_i,s_j)=1$ if $s_i\neq s_j$ and $0$ otherwise.}  

\vspace{6pt}

\textbf{Main Results.} Fig.~\ref{fig:regret-tabular} shows the regret of each method. We observe our method with or without regularizer to perform similarly  and to achieve the lowest regret across expert dataset sizes in $\{10, 100, 1000, 10000\}$, task-agnostic (non-expert) dataset sizes in $\{10, 100, 1000, 10000\}$, and noise levels $\eta\in\{0.01, 0.1, 1\}$. The gap increases with the task-agnostic dataset size, which shows that our method works better when the MDP dynamics are more accurately estimated. LobsDICE struggles in this scenario, albeit being the best in minimizing the divergence to the softmax expert, which is more stochastic and suboptimal (see Appendix~\ref{sec:sup-tabular} for details). %, as consistent with LobsDICE.   

\subsection{More Complex Environments}
\label{sec:MuJoCoexp}

\begin{figure*}[t]
    \centering
    \includegraphics[width=\linewidth]{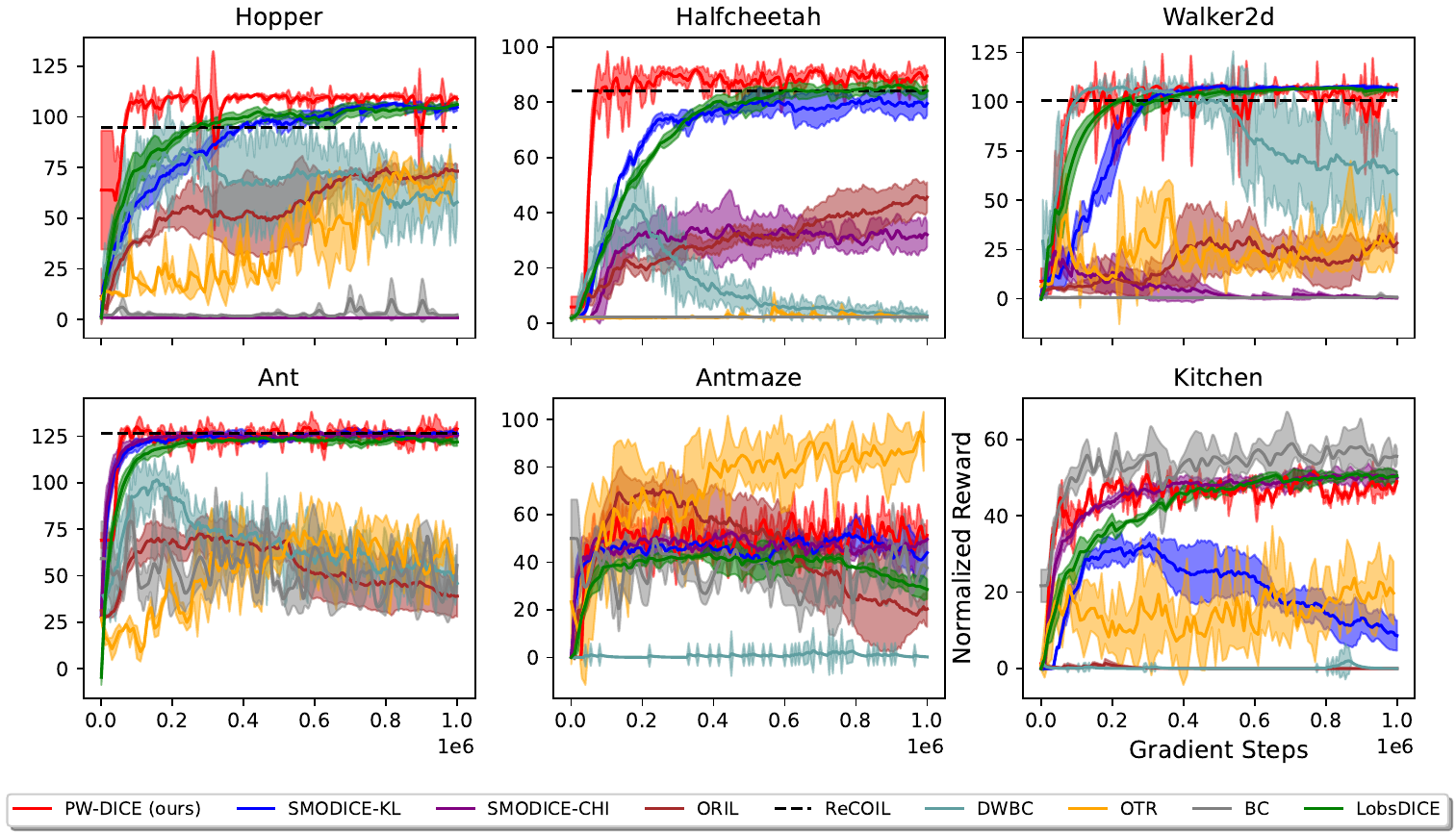}
    \caption{Performance comparison on the MuJoCo testbed. SMODICE-KL and SMODICE-CHI stand for variants of SMODICE using different $f$-divergences (KL or $\chi^2$). Our method generally works the best {\color{black}(i.e., has the highest normalized reward)} among all baselines.}
    \label{fig:MuJoCo-main}
\end{figure*}

\textbf{Baselines.} We adopt seven baselines in our study: state-of-the-art DICE methods SMODICE~\cite{ma2022smodice}, LobsDICE~\cite{Kim2022LobsDICEOL}, and ReCOIL~\cite{Sikchi2023ImitationFA}, non-DICE method ORIL~\cite{Zolna2020OfflineLF}, Wasserstein-based method OTR~\cite{luo2023otr}, DWBC~\cite{DWBC} with extra access to the expert action, and the plain Behavior Cloning (BC). As we have no access to the ReCOIL code, we directly report the final numbers from their paper. Mean and standard deviation  are obtained from $3$ independent runs with different seeds. We measure the performance using the average reward (\textit{higher is better}).

\textbf{Environment and Experimental Setup.} Following SMODICE~\cite{ma2022smodice}, we test PW-DICE on four standard OpenAI gym MuJoCo environments: hopper, halfcheetah, ant, and walker2d, as well as two more challenging MuJoCo testbeds, antmaze and Franka kitchen. The datasets that we use are identical to those in SMODICE (see Appendix~\ref{sec:expdetail} for details). The metric we use is the normalized average reward, where higher reward indicates better performance\footnote{We use the same normalization standard as  D4RL~\cite{fu2020d4rl} and SMODICE~\cite{ma2022smodice}.}. If the final reward is similar, the algorithm with fewer gradient step updates is better. We plot the reward curve, which illustrates the change of the mean and standard deviation of the reward with the number of gradient steps. See Appendix~\ref{sec:expdetail} for hyperparameters.
 
\textbf{Main Results.} Fig.~\ref{fig:MuJoCo-main} shows the results on the MuJoCo testbed, where our method achieves {\color{black}performance} comparable to or better than baselines on all four testbeds. SMODICE with KL-divergence and LobsDICE work decently well, while the other methods struggle.

Note, OTR~\cite{luo2023otr} struggles on most environments despite using {\color{black} the} primal Wasserstein distance, which is probably because the assigned reward calculated by the Wasserstein distance is not always reasonable. See Fig.~\ref{fig:otr-illu} for examples.

\begin{figure}[t]
    \centering
    \subfigure[hopper (success)]{
    \begin{minipage}[b]{0.47\linewidth}
    \includegraphics[width=\linewidth]{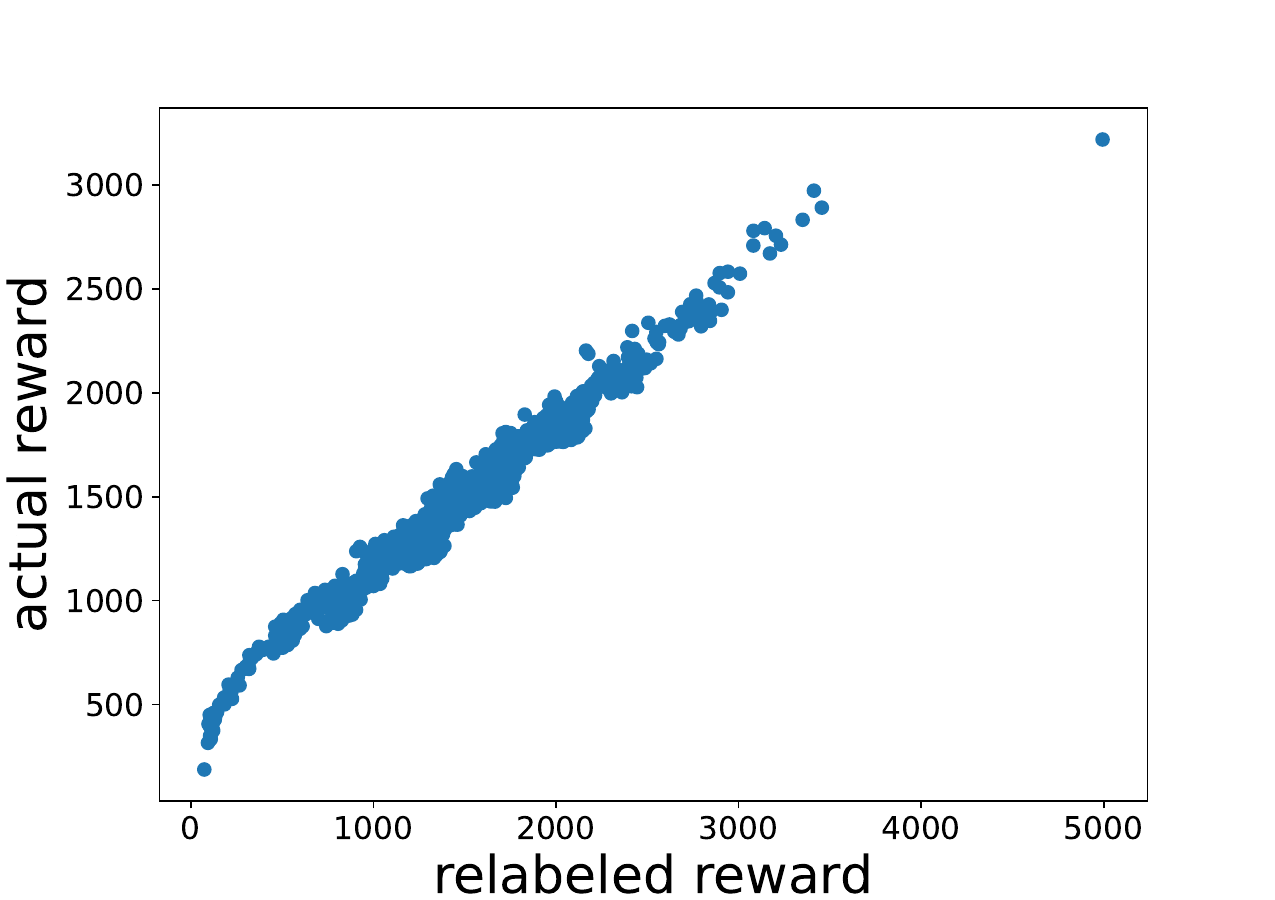}%{}
    \end{minipage}
    }
    \subfigure[halfcheetah ({\color{black}failure})]{
    \begin{minipage}[b]{0.47\linewidth}
    \includegraphics[width=\linewidth]{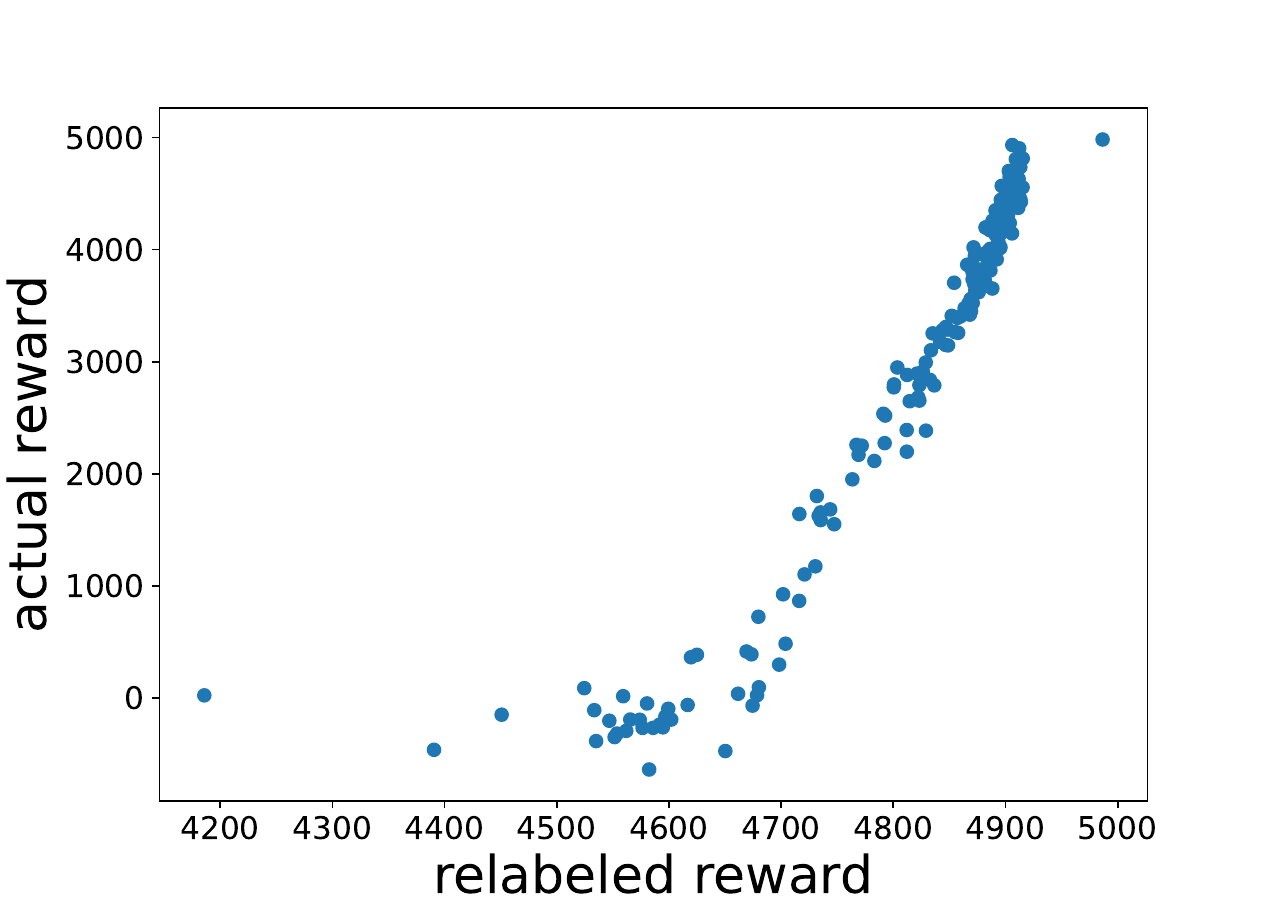}%{}
    \end{minipage}
    }
    \caption{An illustration of successful (coherent with the OTR paper) and failing reward assignment in OTR~\cite{luo2023otr}. OTR performs Wasserstein matching between uniform distributions over {\color{black}the} states of each trajectory in the task-agnostic dataset and the expert dataset, instead of between policy distributions. The reward is calculated from the matching result. Such a solution may fail to differentiate good and bad trajectories by giving similar rewards, {\color{black} as shown in the failure case \textbf{b)}}.}
    \label{fig:otr-illu}
\end{figure}
% \textbf{Is our design of distance metric useful?} To better understand how our learned embedding works, We illustrate a qualitative result in Fig.~\ref{}, where  ...
% To better show the importance and effectiveness of our distance metric design, we test the robustness of PW-DICE with respect to distorted state representations. Specifically, for state $s\in\mathbb{R}^{1\times n}$, we randomly generate a distortion matrix $D=0.1I+D'\in\mathbb{R}^{n\times n}$, where each element of $D'$ is independently and randomly sampled from $\mathcal{N}(0, 4^2)$. The new state exposed to the agent $s$ (in both dataset and evaluation) is calculated by $s'=D's$. We compare our method against SMODICE on several MuJoCo environments, which is shown in Fig.~\ref{fig:distort}; the result shows that our method is indeed more robust than SMODICE. 

% \begin{figure}
%     \centering
%     \includegraphics{}
%     \caption{Caption}
%     \label{fig:distort}
% \end{figure}

\textbf{Is our design of distance metric useful?} We illustrate the importance and effectiveness of our distance metric design through qualitative and quantitative studies. 
For a qualitative evaluation, we draw $4$ different trajectories from the D4RL dataset of the MuJoCo hopper environment, and compare the t-SNE~\cite{van2008visualizing} visualization result (for better readability, we only plot $150$ steps of the trajectory). The result in Fig.~\ref{fig:visualization} shows that our embedding successfully learns the topology of reachability between the states, which separates different trajectories and connects states in the same trajectory irrespective of their distance.  

\begin{figure}[t]
    \centering
    \subfigure[State]{
    \begin{minipage}[b]{0.47\linewidth}
    \includegraphics[width=\linewidth]{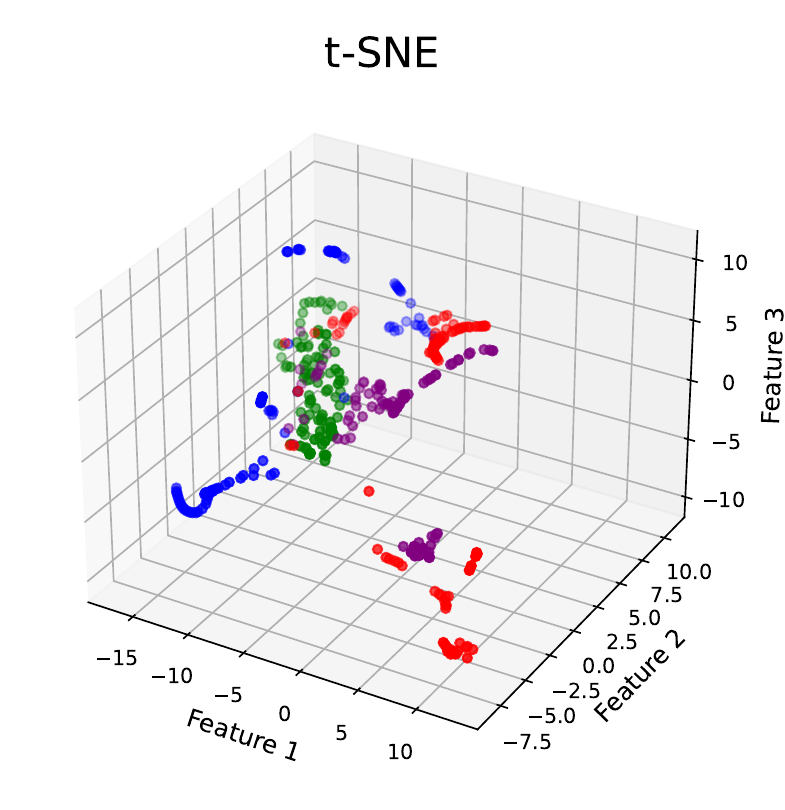}
    \end{minipage}
    }
    \subfigure[Contrastive Embedding]{ % this figure is retrieved from google picture
    \begin{minipage}[b]{0.47\linewidth}
    \includegraphics[width=\linewidth]{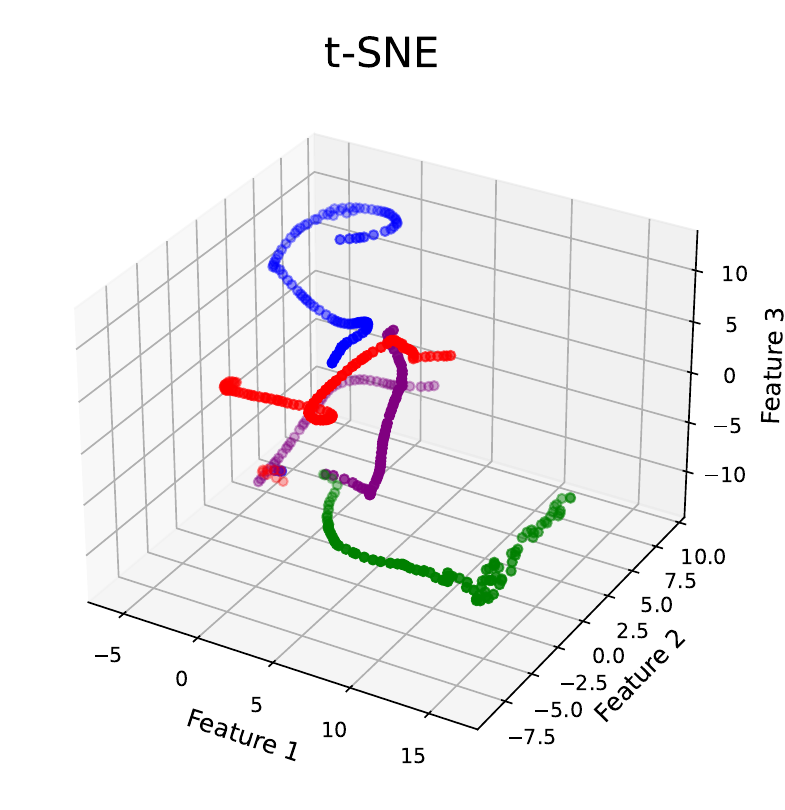}
    \end{minipage}
    }
    \caption{The t-SNE visualization of {\color{black}\textbf{a)}} the states and {\color{black}\textbf{b)}} their corresponding contrastive embeddings. Different colors stand for different trajectories. Note {\color{black}that} the trajectories in the embedding space {\color{black}are} separated despite their proximity in the original state space, and states along the same trajectory are connected despite being separated in the original state space.}
    \label{fig:visualization}
\end{figure}

For a quantitative evaluation, we conduct an ablation study on the distance metric used in PW-DICE. Specifically, we test the result of PW-DICE with $c(s,s')=R(s)$, $c(s,s')=\|s-s'\|^2_2$ (Euclidean), $c(s,s')=1-\frac{s^Ts'}{|s||s'|}$ (cosine similarity), $c(s,s')$ from contrastive learning and their combinations. The result is illustrated in Fig.~\ref{fig:ablation-dist-MuJoCo}. The result shows that both our design of distance and the combination of cosine similarity and $R(s)$ works well, while distance metrics with a single component fail (including Euclidean distance implied by Rubinstein duality). 

\begin{figure}[t]
    \centering
    \includegraphics[width=\linewidth]{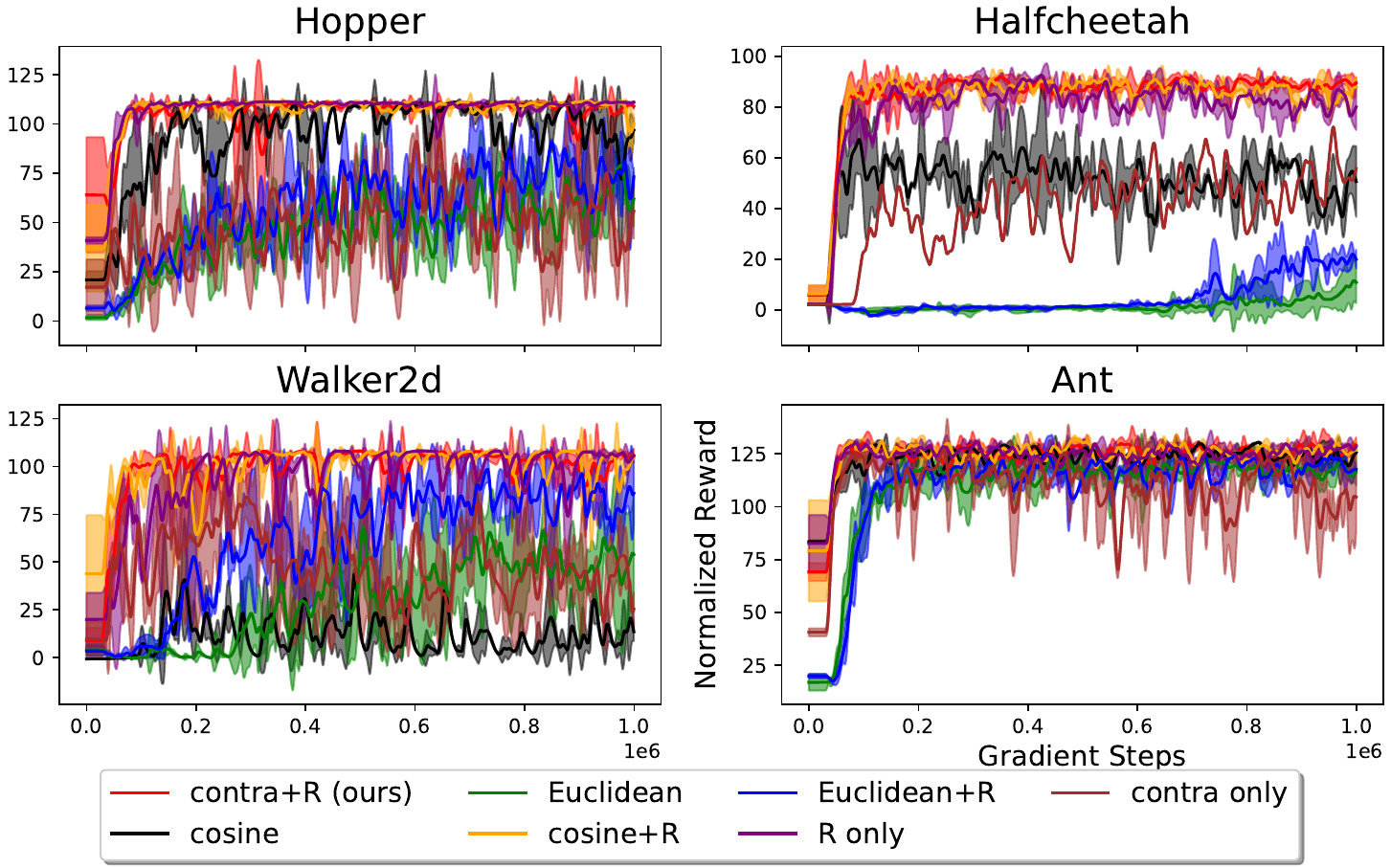}
    \caption{Ablations on the choice of distance metric{\color{black}s}. Our choice of $c(s,s')$, which combines a contrastively learned distance and {\color{black}the discriminator-based component} $R$, performs {\color{black}the} best. {\color{black}The} Euclidean distance fails in our scenario, which further proves the importance of using the primal form instead of the Rubinstein dual form.}
    \label{fig:ablation-dist-MuJoCo}
\end{figure}
{\color{black}
\textbf{Ablations on $\epsilon_1$ and $\epsilon_2$.} In order to show the robustness of PW-DICE to the choice of $\epsilon_1$ and $\epsilon_2$, we conduct an ablation study % on the choice of $\epsilon_1$ and $\epsilon_2$ 
on the MuJoCo environment. Specifically, we test $\epsilon_1\in\{0.1, 0.5, 1\}\times \epsilon_2\in\{0.1, 0.5, 1\}$. The result is shown in Fig.~\ref{fig:ablation-epsilon}. While some choice of hyperparameters leads to failure, PW-DICE is generally robust to the selection of $\epsilon_1$ and $\epsilon_2$. Generally, $\epsilon_1$ should be small to maintain good performance. See more ablations in Appendix~\ref{sec:app-exp-res}.

\begin{figure}[t]
    \centering
    \includegraphics[width=\linewidth]{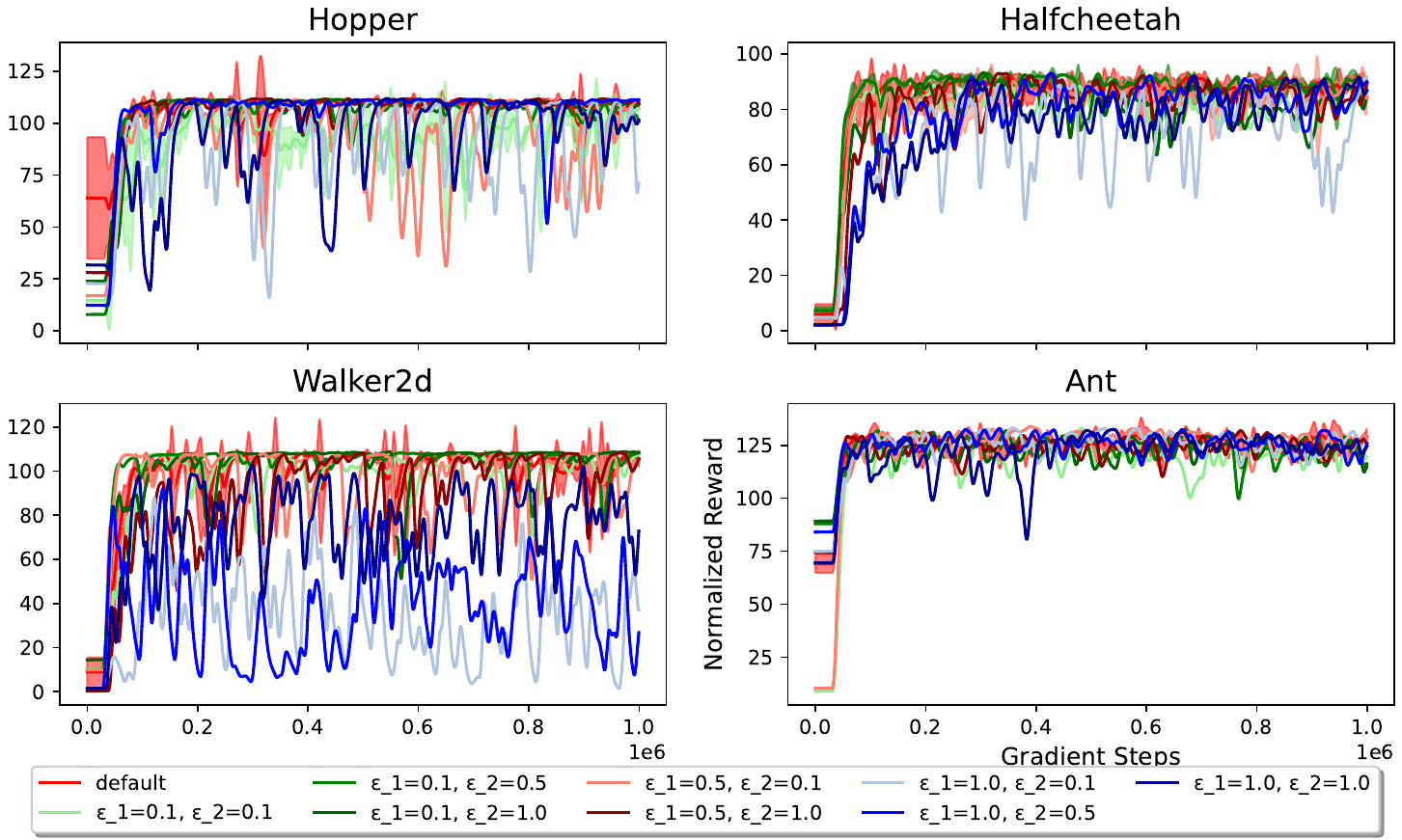}
    \caption{\color{black}Ablation of $\epsilon_1$ and $\epsilon_2$ on the MuJoCo testbed: $\epsilon_1=0.1$ is marked in green, $\epsilon_1=0.5$ is marked in red, and $\epsilon_1=1.0$ is marked in blue. The deeper the color is, the larger $\epsilon_2$ is. Our method is generally robust to hyperparameter changes, though some choice results in failure. Generally, large $\epsilon_1$ leads to worse performance.}
    \label{fig:ablation-epsilon}
\end{figure}

\textbf{Robustness against distorted state representations.} One important motivation for using a learned distance metric is that a fixed distance metric might be limited to the state representation. For example, the Euclidean distance could perform well when learning navigation for a point-mass, where coordinates are given as states. However, the Euclidean distance will no longer be accurate when some of the dimensions undergo scaling (e.g., due to metric changes from inches to meters). While scaling each dimension independently could be alleviated by state normalization, in this experiment we consider a more complicated \textit{distortion} to the state representations.

More specifically, for state $s\in\mathbb{R}^{1\times n}$, we randomly generate a distortion matrix $D=0.1I+D'\in\mathbb{R}^{n\times n}$, where each element of $D'$ is independently and randomly sampled from $\mathcal{N}(0, 4^2)$. The new state exposed to the agent (both in the dataset and evaluation) is calculated as $s'=D's$. We compare our method against SMODICE on several MuJoCo environments. Results are shown in Fig.~\ref{fig:distort}. We observe that our method is generally more robust to poor state representations than SMODICE. 

\begin{figure}[t]
    \centering
    \includegraphics[width=\linewidth]{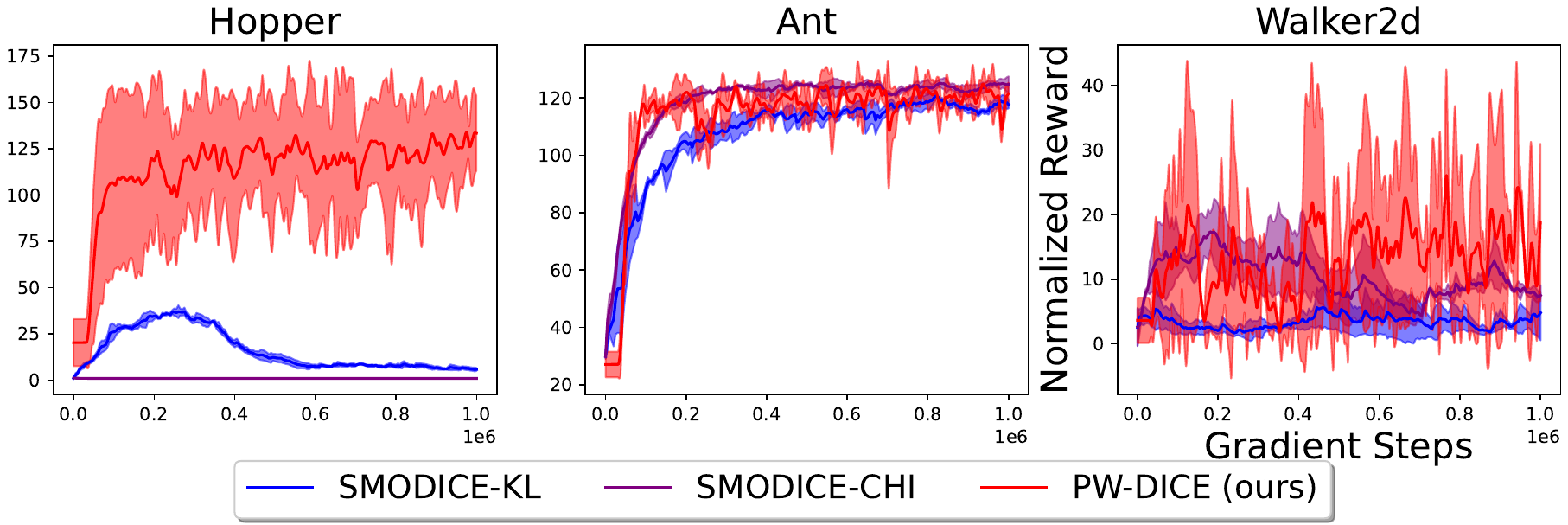}
    \caption{\color{black}Performance comparison between our proposed method, PW-DICE, and SMODICE under distorted state representations. Our method generally outperforms SMODICE.}
    \label{fig:distort}
\end{figure}

} %\label{sec:epsablation}
%  we still need to test performance on plain euclidean / normalized cosine similarity distance.  
 %\label{sec:epsablation}
%  we still need to test performance on plain euclidean / normalized cosine similarity distance.  

\section{Related Work}
\label{sec:relatedwork}
\textbf{Wasserstein Distance for Imitation Learning.} As a metric which is capable of leveraging geometric properties of distributions and which yields gradients for distributions with different support sets, {\color{black}the} Wasserstein distance (also known as \textit{Optimal Transport})~\cite{kantorovich1960mathematical} is a popular choice when studying distribution divergence minimization. % in recent years, and 
It is widely used in IL/RL~\cite{Agarwal2021ContrastiveBS,fickinger2022gromov,xiao2019wasserstein,dadashi2020primal,garg2021iq}. Among them, SoftDICE~\cite{sun2021softdice} is the most similar work to our PW-DICE, which also optimizes {\color{black}the} Wasserstein distance under the DICE framework. However, SoftDICE and most Wasserstein-based IL algorithms~\cite{sun2021softdice, xiao2019wasserstein, WDAIL, liu2019state} use Rubinstein-Kantorovich duality~\cite{KR:58,COTFNT}, which limits the underlying distance metric to be Euclidean. There are a few methods optimizing {\color{black}the} primal Wasserstein distance. For example, OTR~\cite{luo2023otr} computes the primal Wasserstein distance between two trajectories and assigns rewards accordingly for offline RL.  PWIL~\cite{dadashi2020primal} uses greedy coupling to simplify the computation of the Wasserstein distance. However, the former struggles in our experiment{\color{black}al} settings, and the latter only optimizes an upper bound of the Wasserstein distance. Moreover, both {\color{black}methods} only use fixed heuristic distance metrics, such as Euclidean~\cite{dadashi2020primal} and cosine~\cite{luo2023otr}. Our PW-DICE {\color{black}addresses} these issues.

%gromov-wasserstein: Agarwal2021ContrastiveBS,fickinger2022gromov. unrelated to our work, work on cross-domain (distance between domains)

% PWIL, WAIL, W-GAIL, OTR, softdice, Wasserstein Distance guided Adversarial Imitation Learning with Reward Shape Exploration, Cross-Domain Imitation Learning via Optimal Transport (gromov), State Alignment-based Imitation Learning, Imitation Learning with Sinkhorn Distances, IQ-Learn: Inverse soft-Q Learning for Imitation, Unlabeled Imperfect Demonstrations in Adversarial Imitation Learning

\textbf{Offline Imitation Learning from Observation.} Offline Learning from Observation (LfO) aims to learn from expert observations with no labeled action, which is useful in robotics where the expert action is either not available (e.g., {\color{black}in} video{\color{black}s}~\cite{Pari2021TheSE}) or not applicable (e.g., from a different embodiment~\cite{TCN2017}). Three major directions are present in this area: 1) offline planning or RL with assigned, similarity-based reward~\cite{bco, Kumar2019LearningNS}; 2) occupancy divergence minimization, which includes iterative inverse-RL methods~\cite{Zolna2020OfflineLF,Xu2019PositiveUnlabeledRL, Torabi2018GenerativeAI} and DICE~\cite{ma2022smodice,DBLP:conf/iclr/KimSLJHYK22,Kim2022LobsDICEOL,lee2021optidice,Zhu2021OffPolicyIL}; 3) action pseudo-labeling, where the missing actions are predicted with an inverse dynamic model~\cite{TCN2017,chen2019generative,wu2019fisl}. Our PW-DICE falls in the second category but generalizes over SMODICE, unifies $f$-divergence and Wasserstein, and empirically improves upon existing methods.

\textbf{Contrastive Learning for State Representations.} Contrastive learning methods, such as InfoNCE~\cite{infonce} and SIMCLR~\cite{chen2020simple}, aim to find a good representation that satisfies similarity and dissimilarity constraints between particular pairs of data points. Contrastive learning is widely used in reinforcement learning, especially with visual input~\cite{laskin2020curl, Pari2021TheSE, TCN2017} and for meta RL~\cite{Fu2020TowardsEC} to improve the generalizability of agents and mitigate the curse of dimensionality. In these {\color{black}methods}, similarity constraints can come from different augmentations of the same state~\cite{laskin2020curl, Pari2021TheSE}, multiview alignment~\cite{TCN2017}, consistency after reconstruction~\cite{MCRL}, or task context~\cite{Fu2020TowardsEC}. Different from prior work, PW-DICE uses contrastive learning to {\color{black}identify} a good distance metric considering state reachability, while still adopting the reward from the DICE work.

% aug: CURL, VINN
%multiview: TCN
% consistency: masked contrastive
% task context: meta-RL

% CURL; VINN; Unsupervised State Representation Learning in Atari; Towards More Generalizable One-shot Visual Imitation Learning; TCN; Towards Effective Context for Meta-Reinforcement Learning: an Approach based on Contrastive Learning; Contrastive Behavioral Similarity Embeddings for Generalization in Reinforcement Learning; Masked Contrastive Representation Learning for Reinforcement Learning; Integrating contrastive learning with dynamic models for reinforcement learning from images
\section{Conclusion}

In this paper, we propose PW-DICE, a DICE method that uses the primal form of the Wasserstein distance and a  contrastively learned distance metric. By adding adequate pessimistic regularizers, we formulate an unconstrained convex optimization and retrieve the policy using weighted behavior cloning. Our method is a generalization of SMODICE, unifying $f$-divergence and Wasserstein minimization in imitation learning. This generalization enables better performance than multiple baselines, such as SMODICE~\cite{ma2022smodice} and LobsDICE~\cite{Kim2022LobsDICEOL}. 

\textbf{Limitations and Future Directions.} In order to obtain an unconstrained optimization formulation, we add KL terms to the objective. This introduces a logsumexp into the final objective. Some {\color{black}studies} argue that logsumexp adds instability to the optimization, {\color{black}due to the use of minibatches: minibatch gradient estimates for logarithms and exponentials of expectations are biased~\cite{sun2021softdice}}. {\color{black}Although we did not} observe this {\color{black}and it has been found to be tolerable in prior work~\cite{ma2022smodice}, this} may be a potential shortcoming for PW-DICE {\color{black}in} even more challenging environments. Thus, one next step is to find a more robust formulation while maintaining the beneficial properties of PW-DICE.

\vspace{-1pt}
\section*{Acknowledgements}

This work was supported in part by NSF under Grants 2008387, 2045586, 2106825, MRI 1725729, NIFA award 2020-67021-32799, the Jump ARCHES endowment through the Health Care Engineering Systems Center at Illinois and the OSF Foundation, and the IBM-Illinois Discovery Accelerator Institute.
\vspace{-1pt}
\section*{Impact Statement}

Our work automates decision-making processes by utilizing expert observations as well as past experience data. While our {\color{black}effort} improves the efficiency of automated task-solving, it could also lead to negative societal impacts in several aspects. For example, since our work is mostly tested on locomotion tasks, there {\color{black}exists} a potential risk {\color{black}of} harmful {\color{black}applications} (e.g., military) of our proposed decision-making techniques. Also, the improvement of automated decision-making {\color{black}may} potentially {\color{black}result in} a reduction of job {\color{black}opportunities}.

% In the unusual situation where you want a paper to appear in the
% references without citing it in the main text, use \nocite
%\nocite{langley00}

\bibliography{example_paper}
\bibliographystyle{icml2024}

%%%%%%%%%%%%%%%%%%%%%%%%%%%%%%%%%%%%%%%%%%%%%%%%%%%%%%%%%%%%%%%%%%%%%%%%%%%%%%%
%%%%%%%%%%%%%%%%%%%%%%%%%%%%%%%%%%%%%%%%%%%%%%%%%%%%%%%%%%%%%%%%%%%%%%%%%%%%%%%
% APPENDIX
%%%%%%%%%%%%%%%%%%%%%%%%%%%%%%%%%%%%%%%%%%%%%%%%%%%%%%%%%%%%%%%%%%%%%%%%%%%%%%%
%%%%%%%%%%%%%%%%%%%%%%%%%%%%%%%%%%%%%%%%%%%%%%%%%%%%%%%%%%%%%%%%%%%%%%%%%%%%%%%
\newpage
\appendix
\onecolumn
\newpage 
\appendix

\section*{Appendix: Offline Imitation from Observation via Primal Wasserstein State Occupancy Matching}

The appendix is organized as follows. We first present the pseudo-code of PW-DICE in Sec.~\ref{sec:pseudocode}. Then, we
rigorously introduce the most important mathematical concepts of our work in Sec.~\ref{sec:mathcon}, which include state, state-action, and state-pair occupancy, as well as $f$-divergences and Fenchel conjugate. After that, in Sec.~\ref{sec:proof}, we provide detailed  derivations omitted in the main paper, as well as the corresponding proofs. In Sec.~\ref{sec:expdetail}, we provide a detailed description of our experiments. In Sec.~\ref{sec:app-exp-res}, we provide additional experimental results, including auxiliary metrics,  experimental results using an identical softmax expert (which is actually suboptimal, see Sec.~\ref{sec:softmax}) as LobsDICE~\cite{Kim2022LobsDICEOL} in the tabular experiment, and ablations in MuJoCo environments. In Sec.~\ref{sec:notlist}, we summarize our notation. Finally, in Sec.~\ref{sec:resource}, we list the computational resource that we use during the training process.

\section{Pseudo-code}
\label{sec:pseudocode}

\begin{algorithm}[tb]
   \caption{PW-DICE}
   \label{alg:code}
\begin{algorithmic}
   \STATE {\bfseries Input:} Expert state-only dataset $E$, task-agnostic state-action dataset $I$, distance metric $c:|S|\times|S|\rightarrow\mathbb{R}$
   \STATE {\bfseries Input:} Triple head network for dual variable $\lambda_{\theta_1}(s, h):|S|\times\{0,1,2\}\rightarrow\mathbb{R}$ parameterized by $\theta_1$, actor $\pi_{\theta_2}(a|s)$ parameterized by $\theta_2$, the number of epoches $N$, batch size $B$, learning rate $\eta$.
   \STATE Initialize initial state dataset $I_{\text{ini}}=\{\}$.
   \FOR{$\tau\in I$}
   \STATE Add first state of $\tau$ to $I_{\text{ini}}$ 
   \ENDFOR
   %\REPEAT
   \FOR{$\text{epoch}=1$ {\bfseries to} $N$}
   \FOR{$(s,a,s')\in I$}
   \STATE Sample state $s^I$ from $I$, state $s^E$ from $E$ 
   \STATE $l_1\leftarrow \frac{1}{\epsilon_1}(\lambda_{\theta_1}(s^I, 1)+\lambda_{\theta_1}
   (s^E, 2)-c(s^I,s^E))-\log B$
   \STATE $l_2\leftarrow \frac{1}{\epsilon_2}(-\gamma\lambda_{\theta_1}(s', 0)+\lambda_{\theta_1}(s, 0)-\lambda_{\theta_1}(s, 1))-\log B$
   \STATE Sample initial states $s_{\text{ini}}$ from $I_{\text{ini}}$
   \STATE $l_3\leftarrow -\frac{1}{B}[(1-\gamma)\lambda_{\theta_1}(s_{\text{ini}}, 0)+\lambda_{\theta_1}(s^E, 2)]$
   \STATE $l\leftarrow\text{logsumexp}(l_1)+\text{logsumexp}(l_2)+l_3$
   \STATE $\theta_1\leftarrow\theta_1-\eta\frac{\partial l}{\partial \theta_1}$
   %\IF{$x_i > x_{i+1}$}
   %\STATE Swap $x_i$ and $x_{i+1}$
   %\STATE $noChange = false$
   %\ENDIF
   \ENDFOR
   \FOR{$(s,a,s')\in I$}
   \STATE $v\leftarrow \frac{1}{\epsilon_2}(-\gamma\lambda_{\theta_1}(s', 0)+\lambda_{\theta_1}(s, 0)-\lambda_{\theta_1}(s, 1))$
   \STATE $l\leftarrow\exp(v)\log\pi_{\theta_2}(a|s)$
   \STATE $\theta_2\leftarrow\theta_2-\eta\frac{\partial l}{\partial \theta_2}$
   \ENDFOR
   \ENDFOR
   %\UNTIL{The training process is over}
\end{algorithmic}
\end{algorithm}

Alg.~\ref{alg:code} details the training process of our main algorithm. Upon implementation, we normalize coefficient $v$ over the whole task-agnostic dataset $I$ for better stability. See Sec.~\ref{sec:mjc} for the implementation detail of contrastive learning for the distance metric.

\section{Mathematical Concepts}
\label{sec:mathcon}
In this section, we introduce three important  concepts used in the paper, which are state/state-action/state-pair occupancy, $f$-divergence, and Fenchel conjugate. The first one is the key concept used throughout this work, the second is used in our motivations, % and Thm.~\ref{thm:smodice}, 
and the last is used in Sec.~\ref{sec:optdetail}.

\subsection{State, State-Action, and State-Pair Occupancy} Consider an MDP $(S, A, T, r, \gamma)$ with initial state distribution $p_0$ and infinite horizon; at the $t$-th timestep, we denote the current state as $s_t$ and the action as $a_t$. Then, with a fixed policy $\pi$, the probability of $\Pr(s_t=s)$ and $\Pr(a_t=a)$ for any $s,a$ are determined. Based on this, the \textit{state occupancy}, which is the state visitation frequency under policy $\pi$, is defined as $d^\pi_s(s)=(1-\gamma)\sum_{t=0}^\infty\gamma^t\Pr(s_t=s)$. Similarly, we define the  \textit{state-action occupancy} as $d^\pi_{sa}(s,a)=(1-\gamma)\sum_{t=0}^\infty \gamma^t \Pr(s_t=s, a_t=a)$. Some work such as LobsDICE also uses \textit{state-pair occupancy}, which is defined as $d^\pi_{ss}(s,s')=(1-\gamma)\sum_{t=0}^\infty\gamma^t 
 \Pr(s_t=s,s_{t+1}=s')$. In this work, we denote the average policy that generates the task-agnostic dataset $I$ as $\pi^I$ with state occupancy $d^I_s$ and state-action occupancy $d^I_{sa}$, and the expert policy that generates the expert dataset $E$ as $\pi^E$ with state occupancy $d^E_s$.

\subsection{$f$-divergence} The $f$-divergence is a measure of distance between probability distributions $p, q$ and is widely used in the machine learning community~\cite{Seyed2019Divergence}. For two probability distributions $p, q$ on domain $\mathcal{X}$ based on any continuous and convex function $f$, the $f$-divergence between $p$ and $q$ is defined as

\begin{equation}
    D_f(p\|q)=\mathbb{E}_{x\sim q}\left[f\left(\frac{p(x)}{q(x)}\right)\right].
\end{equation}

For instance, when $f(x)=x\log x$, we have $D_f(p\|q)=\mathbb{E}_{x\sim q}\frac{p(x)}{q(x)}\log\frac{p(x)}{q(x)}=\mathbb{E}_{x\sim p}\log\frac{p(x)}{q(x)}$, which induces the KL-divergence. When $f(x)=(x-1)^2$, we have $D_f(p\|q)=\mathbb{E}_{x\sim q}((\frac{p(x)-q(x)}{q(x)})^2)$, which induces the $\chi^2$-divergence.

\subsection{Fenchel Conjugate} Fenchel conjugate is widely used in DICE methods for either debiasing estimations~\cite{DBLP:conf/nips/NachumCD019} or solving formulations with stronger constraints to get numerically more stable objectives~\cite{ma2022smodice}. PW-DICE uses the Fenchel conjugate for the latter. For a vector space $\Omega$ and a convex, differentiable function $f:\Omega\rightarrow\mathbb{R}$, the Fenchel conjugate of $f(x)$ is defined as

\begin{equation}
    f_*(y)=\max_{x\in\Omega}\langle x, y\rangle-f(x),
\end{equation}

where $\langle\cdot,\cdot\rangle$ is the inner product over $\Omega$.

\section{Mathematical Derivations}
\label{sec:proof}

In this section, we provide the detailed  derivations omitted in the main paper due to the page limit. In Sec.~\ref{sec:smodice}, we briefly introduce SMODICE to clarify the motivation of using Wasserstein distance and how SMODICE is related to our proposed PW-DICE. In Sec.~\ref{sec:optdetail}, we provide a detailed derivation of our objective, omitted in Sec.~\ref{sec:reg}. Sec.~\ref{sec:component} and Sec.~\ref{sec:invariance} are complements to Sec.~\ref{sec:optdetail}: in Sec.~\ref{sec:component}, we provide a detailed derivation on the elements of condensed representation of constraints, $A$ and $b$, and in Sec.~\ref{sec:invariance}, we explain why additional constraints are applied during one step of the derivation process while the optimal solution remains the same. Finally, in Sec.~\ref{sec:thm2}, we provide a detailed proof for the claim that our method is a generalization of SMODICE in Sec.~\ref{sec:reg}.

\subsection{SMODICE}
\label{sec:smodice}

SMODICE~\cite{ma2022smodice} is a state-of-the-art offline LfO method. It minimizes the $f$-divergence between the state occupancy of the learner's policy $\pi$ and the expert policy $\pi^E$, i.e., the objective is

\begin{equation}
    \min_\pi D_f(d^\pi_s(s)\|d^E_s(s)), \text{s.t.}\ \pi \text{ is feasible.}
\end{equation}

Here, the feasibility of $\pi$ is the same as the Bellman flow constraint (the second row of constraints in Eq.~\eqref{eq:primal_main}) in the main paper. To take the only information source of environment dynamics, which is the task-agnostic dataset $I$, into account, the objective is relaxed to

\begin{equation}
\label{eq:smodiceprimal}
\max_\pi \mathbb{E}_{s\sim d^\pi}\log\frac{d^E_s(s)}{d^I_s(s)}-D_f(d^\pi_{sa}(s,a)\|d^I_{sa}(s,a)), \text{s.t. $\pi$ is a feasible policy.}
\end{equation}

Here, $D_f$ can be any divergence that is not smaller than KL-divergence (SMODICE mainly studies $\chi^2$-divergence). The first term, $\log\frac{d^E_s(s)}{d^I_s(s)}$ indicates the relative importance of the state. The more often the expert visits a particular state $s$ than non-expert policies, the more possible that $s$ is a desirable state. Reliance on such a ratio introduces a theoretical limitation: the assumption that $d^I_s(s)>0$ wherever $d^E_s(s)>0$ must be made, which does not necessarily hold in a high-dimensional space. Thus, we introduce a hyperpamater of $\alpha$ to mix the distribution in the denominator in our reward design.

By transforming the constrained problem into an unconstrained problem in the Lagrange dual space, SMODICE optimizes the following objective (assuming the use of KL-divergence):

\begin{equation}
\label{eq:smodicedual}
    \min_V (1-\gamma)\mathbb{E}_{s\sim p_0} [V(s)]+\log \mathbb{E}_{(s,a,s')\sim I}\exp\left[\log\frac{d^E_s(s)}{d^I_s(s)}+\gamma V(s')-V(s)\right],
\end{equation}
where $p_0$ is the initial state distribution and $\gamma$ is the discount factor. As stated in Sec.~\ref{sec:reg}, this objective is a special case of PW-DICE with $c(s,s')=\log\frac{d^E_s(s)}{d^I_s(s)}$, $\epsilon_2=1, \epsilon_1\rightarrow 0$. LobsDICE~\cite{Kim2022LobsDICEOL} is similar in spirit; however, it minimizes the state-pair divergence $\text{KL}(d^\pi_{ss}\|d^E_{ss})$ instead. 

\subsection{Detailed Derivation of Our  Objective}
\label{sec:optdetail}

As mentioned in Eq.~\eqref{eq:primal_main} in Sec.~\ref{sec:reg}, our primal objective is 

\begin{equation}
\begin{aligned}
&\min_{\Pi, \pi}\sum_{s_i\in S}\sum_{s_j\in S}\Pi(s_i, s_j)c(s_i,s_j), \text{s.t. }  d^\pi_{sa}\geq 0, \Pi\geq 0;\\ 
&\forall s\in S, d^\pi_s(s)=(1-\gamma)p_0(s)+\gamma\sum_{\bar{s},\bar{a}}d^\pi_{sa}(\bar{s},\bar{a})p(s|\bar{s},\bar{a});\\
&\forall s_i, s_j\in S,\sum_k\Pi(s_k, s_j)=d^E_s(s_j), \sum_k\Pi(s_i,s_k)=d^\pi_s(s_i).
\label{eq:primal_main_app}
\end{aligned}
\end{equation}

Before smoothing our objective, for readability, we rewrite our main objective in Eq.~\eqref{eq:primal_main_app} as an LP problem over a single vector $x=\begin{bmatrix}\Pi\\d^\pi_{sa}\end{bmatrix}\in\mathbb{R}^{|S|\times(|S|+|A|)}$, where $\Pi\in\mathbb{R}^{|S|\times|S|}$ and $d^\pi_{sa}\in\mathbb{R}^{|S|\times|A|}$ are flattened in a row-first manner. Correspondingly, we extend the distance $c$ between states to $c': (|S|(|S|+|A|))\times (|S|(|S|+|A|))\rightarrow\mathbb{R}$, such that $c'=c$ on the original domain of $c$ and $c'=0$ otherwise. Further, we summarize all linear equality constraints in $Ax=b$.  Eq.~\eqref{eq:primal_main} is then equivalent to
\begin{equation}
    \min_{x\geq 0} (c')^Tx \quad\text{s.t.}\quad Ax=b, x\geq 0.
\label{eq:simplify}
\end{equation}

It is easy to see that the Lagrange dual form of Eq.~\eqref{eq:simplify} is also a constrained optimization. %In order to remove the constraints in the dual
In order to convert the optimization to an unconstrained one, we modify the objective as follows (same as Eq.~\eqref{eq:reg} in the main paper):
\begin{equation}
\begin{aligned}
\min_x (c')^Tx&+\epsilon_1 D_f(\Pi\|U)+\epsilon_2 D_f(d^\pi_{sa}\|d^I_{sa}),\\ &\text{s.t. } Ax=b, x\geq 0,
\label{eq:reg_app}
\end{aligned}
\end{equation}
where $U(s,s')=d^E_s(s)d^I_s(s')$, i.e., $U$ is the product of two independent distributions $d^E_s$ and $d^I_s$. $\epsilon_1>0, \epsilon_2>0$ are hyperparameters, and $D_f$ can be any $f$-divergence. Note, although an $f$-divergence is used, unlike SMODICE~\cite{ma2022smodice} or LobsDICE~\cite{Kim2022LobsDICEOL}, such formulation does not require data coverage of the task-agnostic data over the expert data. The two regularizers are ``pessimistic,'' which encourages the agents to stay within the support set of the dataset. This has been used in offline IL/RL~\cite{jin2021pessimism}. 

With the regularized objective in Eq.~\eqref{eq:reg}, we now consider its Lagrange dual problem:
\begin{equation}
\begin{aligned}
\label{eq:lag1}
\max_\lambda\min_{x\geq 0}L(\lambda, x),\text{ where }
L(\lambda, x)=(c')^Tx+\epsilon_1D_f(\Pi\|U)+\epsilon_2D_f(d^\pi_{sa}\|d^I_{sa})-\lambda^T(Ax-b).
\end{aligned}
\end{equation}

% is row-first clear enough? 

While Eq.~\eqref{eq:lag1} only has the non-negativity constraint, its domain is the non-negative numbers. Thus the objective can be optimized as being unconstrained. To obtain a practical and stable solution, a single-level optimization is preferred. To do so, one could consider using the KKT condition~\cite{boyd2004convex}, and set the derivative of the inner-level optimization to $0$. However, such an approach will lead to an exp function in the objective~\cite{MITnote, Kim2022LobsDICEOL}, which is numerically unstable~\cite{Kim2022LobsDICEOL}. To avoid this, we first rewrite Eq.~\eqref{eq:lag1} \textit{with negated $L(\lambda, x)$} to separate $\Pi$ and $d^\pi_{sa}$ in $x$:

\begin{equation}
\begin{aligned}
\label{eq:separate}
\min_\lambda&\max_{x\geq 0}-L(\lambda, x)\\
\min_\lambda&\left\{\epsilon_1\max_{\Pi\in\Delta(S^2)}\left[\frac{(A_1^T\lambda-c)}{\epsilon_1}^T\Pi-D_f(\pi\|U)\right]+\epsilon_2\max_{d^\pi_{sa}\in\Delta(S\cdot A)}\left[\frac{(A_2^T\lambda)}{\epsilon_2}^Td^\pi_{sa}-D_f(d^\pi_{sa}\|d^I_{sa})\right]-b^T\lambda\right\}.    
\end{aligned}
\end{equation}

In Eq.~\eqref{eq:separate}, we have $A=\begin{bmatrix}A_1\\A_2\end{bmatrix}$, where $A_1\in\mathbb{R}^{(|S|\times|S|)\times M}$, $A_2\in\mathbb{R}^{(|S|\times|A|)\times M}$, and $M=3|S|$ is the number of equality constraints in the primal form. See Sec.~\ref{sec:component} for elements in $A, A_1, A_2$, and $b$. Two points are worth noting in Eq.~\eqref{eq:separate}.

First, we append two extra constraints, which are $\Pi\in\Delta, d^\pi_{sa}\in\Delta$. These constraints do not affect the final result for the following fact:

\begin{lemma}
\label{thm:lemma}
For any MDP and feasible expert policy $\pi^E$, the inequality constraints in Eq.~\eqref{eq:primal_main} with $\Pi\geq 0, d^\pi_{sa}\geq 0$ and $\Pi\in\Delta, d^\pi_{sa}\in\Delta$ are equivalent.
\end{lemma}

The detailed proof of Lemma~\ref{thm:lemma} is given in  Appendix~\ref{sec:invariance}. In a word, the optimal solution of Eq.~\eqref{eq:lag1}, as long as it satisfies all constraints in the primal form, must have $\Pi\in\Delta, d^\pi_{sa}\in\Delta$. 

Second, we decompose the max operator into two independent maximizations, as the equality constraints that correlate $\Pi$ and $d^\pi_{sa}$ are all relaxed in the dual. %, and $b^T\lambda$ is independent from the maximization. 
% warning: add this in the appendix!
 With Eq.~\eqref{eq:separate}, we now apply the following theorem from  SMODICE~\cite{ma2022smodice}:

\begin{theorem} % space-saving version
\label{thm:smodice}
With mild assumptions~\cite{pmlr-v54-dai17a}, for any $f$-divergence $D_f$, probability distribution $p,q$ on domain $\mathcal{X}$ and function $y: \mathcal{X}\rightarrow\mathbb{R}$, we have

\begin{equation}
\max_{p\in\Delta(\mathcal{X})}\mathbb{E}_{x\sim p}[y(x)]-D_f(p\|q)=\mathbb{E}_{x\sim q}[f_*(y(x))].
\end{equation}

For maximizer $p^*(x)=\mathop{\arg\max}_{p\in\Delta(\mathcal{X})}\mathbb{E}_{x\sim q}[f_*(y(x))]$, we have $p^*(x)=q(x)f'_*(y(x))$, where $f_*(\cdot)$ is the Fenchel conjugate of $f$, and $f'_*$ is its derivative.
\end{theorem}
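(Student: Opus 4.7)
\textbf{Proof plan for Theorem~\ref{thm:smodice}.} The plan is to reduce the constrained variational problem on the probability simplex to a pointwise scalar maximization, which is exactly the definition of the Fenchel conjugate. First I would restrict attention to measures $p\ll q$, since otherwise $D_f(p\|q)=+\infty$ and such $p$ cannot be maximizers. Writing $r(x)=p(x)/q(x)$, the left-hand side becomes
\begin{equation*}
\max_{r\geq 0,\ \mathbb{E}_q[r]=1}\ \mathbb{E}_q\bigl[r(x)\,y(x)-f(r(x))\bigr].
\end{equation*}
This reparameterization isolates the divergence term as an $\mathbb{E}_q[f(r)]$ expression and turns the objective into something that is separable across $x$, up to the single scalar constraint $\mathbb{E}_q[r]=1$.

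Next I would temporarily drop the simplex constraint and perform pointwise maximization. For each fixed $x$ the inner problem is $\sup_{r\geq 0}\bigl(r\,y(x)-f(r)\bigr)$, which by definition equals $f_*(y(x))$; since $f$ is assumed convex and differentiable, the unique maximizer is $r^\star(x)=f'_*(y(x))$ via the standard identity $f'_*=(f')^{-1}$. Substituting back yields the claimed equality $\max \cdots =\mathbb{E}_q[f_*(y(x))]$, with candidate maximizer $p^\star(x)=q(x)\,f'_*(y(x))$. Feasibility $p^\star\geq 0$ follows because $f'_*$ is nonnegative on the relevant range under the usual $f$-divergence conventions.

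The main obstacle, and the content of the ``mild assumptions'' borrowed from \citet{pmlr-v54-dai17a}, is justifying that dropping the normalization constraint is without loss. The cleanest way I would approach this is via Lagrangian duality: introduce a multiplier $\mu$ for $\mathbb{E}_q[r]=1$, so the shifted problem has pointwise maximizer $q(x)f'_*(y(x)-\mu)$, and then invoke strong duality (guaranteed by a Slater-type condition, e.g.\ $q>0$ and $f$ lower semicontinuous, convex, with effective domain containing a neighborhood of $1$) to conclude that the min-max equals the max-min. Under the standard assumption that $f_*$ is defined on the shifted-by-constant class used in the $f$-GAN / variational-divergence literature (so that $\mu$ can be absorbed into $y$ without changing the conjugate), one recovers $\mu=0$ and the pointwise solution $p^\star(x)=q(x)f'_*(y(x))$ is automatically a probability density. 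Finally I would verify the equality by plugging $p^\star$ back into the original objective and comparing with $\mathbb{E}_q[f_*(y(x))]$, closing the argument.
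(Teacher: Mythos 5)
Your reduction to the likelihood ratio $r=p/q$, the pointwise conjugate computation, and the Lagrangian dual $\min_\mu\bigl[\mu+\mathbb{E}_q[f_*(y-\mu)]\bigr]$ are all the right moves, but the last step---``one recovers $\mu=0$''---is a genuine gap, and it is exactly where the content of the theorem lives. Write $\bar f_*(t)=\sup_{r\geq 0}\bigl(rt-f(r)\bigr)$ for the pointwise (unconstrained) conjugate you are using. If the theorem's $f_*$ meant $\bar f_*$, the statement would simply be false: for KL, $f(x)=x\log x$ gives $\bar f_*(t)=e^{t-1}$, and with $y\equiv 0$ the left-hand side is $\max_{p\in\Delta}-D_{\mathrm{KL}}(p\|q)=0$ while $\mathbb{E}_q[\bar f_*(0)]=e^{-1}$; for $\chi^2$ and constant $y\equiv c$ the two sides differ by $c^2/4$. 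The optimal multiplier is the normalization constant (for KL, $\mu^*=\log\mathbb{E}_q[e^y]-1$), which is generically nonzero; moreover, shifting $y$ by $\mu$ \emph{does} change $\mathbb{E}_q[\bar f_*(y-\mu)]$, so $\mu$ cannot be ``absorbed without changing the conjugate.'' A concrete symptom: your candidate maximizer $p^\star=q\,\bar f'_*(y)$ (for KL, $p^\star=q\,e^{y-1}$) has total mass $\mathbb{E}_q[e^{y-1}]\neq 1$ in general, so it is not even feasible for the constrained problem whose optimum it is supposed to attain.

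The statement is only correct under the reading that the paper (following SMODICE) actually uses: $f_*$ is the conjugate of $f$ \emph{restricted to the simplex}---the paper says ``the Fenchel dual function with $x\in\Delta$''---so that for KL the entire expression $\mathbb{E}_{x\sim q}[f_*(y(x))]$ is an abuse of notation for $\log\mathbb{E}_{x\sim q}[\exp(y(x))]$. This is visible in Eq.~\eqref{eq:final}, whose terms are of the form $\epsilon\log\mathbb{E}\exp(\cdot/\epsilon)$, and in the remark that the maximizer is $d^I_{sa}\cdot\mathrm{softmax}(\cdot)$ with ``the denominator of the softmax summing over all state-action pairs.'' Under this reading, the object you derived, $\min_\mu\bigl[\mu+\mathbb{E}_q[\bar f_*(y-\mu)]\bigr]$, \emph{is} the theorem's right-hand side: evaluating it for KL gives exactly $\log\mathbb{E}_q[e^y]$, and the associated maximizer $p^*=q\,\bar f'_*(y-\mu^*)$ is the Gibbs/softmax density $q e^y/\mathbb{E}_q[e^y]$, matching $p^*=q f'_*(y)$ with $f'_*$ the derivative of the constrained conjugate. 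So the repair is to stop at the dual and identify it with the constrained conjugate, rather than forcing $\mu=0$. Note also that the paper itself gives no inline proof---it defers to theorem $7.14^*$ of \citet{MITnote}---so the comparison here is against that intended statement; your outline, corrected as above, would supply the derivation the paper omits.
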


% \begin{theorem}
% \label{thm:smodice}
% With mild assumptions~\cite{pmlr-v54-dai17a}, for any $f$-divergence $D_f$, probability distribution $p,q$ on domain $\mathcal{X}$ and function $y: \mathcal{X}\rightarrow\mathbb{R}$, we have

% \begin{equation}
% \small
% \max_{p\in\Delta(\mathcal{X})}\mathbb{E}_{x\sim p}[y(x)]-D_f(p\|q)=\mathbb{E}_{x\sim q}[f_*(y(x))]
% \end{equation}

% also, for $p^*=\mathop{\arg\max}_{p\in\Delta(\mathcal{X})}\mathbb{E}_{x\sim q}[f_*(y(x))]$, we have 
% \begin{equation}
% \small
% p^*(x)=q(x)f'_*(y(x)),
% \end{equation}
% where $f_*(\cdot)$ is the Fenchel conjugate of $f$, and $f'_*$ is its derivative.
% \end{theorem}

A complete proof can be found at theorem $7.14^*$ in~\citet{MITnote}. The rigorous definition of $f$-divergence and Fenchel conjugate are in Appendix~\ref{sec:mathcon}. For this work, we mainly consider KL-divergence as $D_f$, which corresponds to $f(x)=x\log x$, and $f_*(x)=\text{logsumexp}(x)$ to be the Fenchel dual function with $x\in\Delta$~\cite{boyd2004convex}\footnote{$\chi^2$-divergence does not work as well as KL-divergence in MuJoCo environments. See Appendix~\ref{sec:chidiv} for details.}. With Thm.~\ref{thm:smodice}, we set $p=\Pi, x=\lambda, y(x)=\frac{A_1^T\lambda-c}{\epsilon_1}$ for the first max operator, and set $p=d^\pi_{sa}, x=\lambda, y(x)=\frac{A_2^T\lambda}{\epsilon_2}$ for the second max operator. Then, we get the following single-level convex objective:

\begin{equation}
\begin{aligned}
\label{eq:singlelevel}
\min_\lambda \epsilon_1 \log\mathbb{E}_{s_i\sim I, s_j\sim E}\exp\left(\frac{(A_1^T\lambda-c)^T}{\epsilon_1}\right)+\epsilon_2\log\mathbb{E}_{(s_i,a_j)\sim I}\exp\left(\frac{A^T_2\lambda}{\epsilon_2}\right)-b^T\lambda,
\end{aligned}
\end{equation}

with which, by considering the components of $A_1, A_2$ and $b$ in Appendix~\ref{sec:component}, we have our final objective stated in Eq.~\eqref{eq:final} and maximizer stated in Sec.~\ref{sec:reg}.

\subsection{Components of $A,b$ in Eq.~\eqref{eq:simplify}}
\label{sec:component}

In this subsection, we discuss in detail the entries of $A$, $b$ in Eq.~\eqref{eq:simplify}. In Eq.~\eqref{eq:simplify}, we summarize all equality constraints in Eq.~\eqref{eq:primal_main} as $Ax=b$, $x=\begin{bmatrix}\Pi\\d^\pi_{sa}\end{bmatrix}$, where $\Pi, d^\pi_{sa}$ are flattened in a row-first manner. Thus, we have $x_{:i|S|+j}=\Pi(s_i, s_j)$, and $x_{|S|^2+i|A|+j}=d^\pi_{sa}(s_i, a_j)$. 

We further assume that in $A$ and $b$, the first $|S|$ rows are the Bellman flow constraints
\begin{equation}
\forall s, \sum_{a}d^\pi_{sa}(s,a)-\gamma \sum_{\bar{s},\bar{a}}p(s|\bar{s},\bar{a})d^\pi_{sa}(\bar{s},\bar{a})=(1-\gamma)p_0(s).
\label{eq:con1}
\end{equation}
The second $|S|$ rows are the $\sum_j\Pi(s_i,s_j)=d^\pi_s(s_i)$ marginal constraints

\begin{equation}
\label{eq:con2}
\forall s, \sum_{s'}\Pi(s, s')=\sum_{a}d^\pi_{sa}(s,a).
\end{equation}

The third $|S|$ rows are the $\sum_i\Pi(s_i,s_j)=d^E_s(s_j)$ constraints
\begin{equation}
\forall s, \sum_{s'}\Pi(s',s)=\sum_{a}d^E_{sa}(s,a).
\label{eq:con3}
\end{equation}

Thus, 
we have $A_{i, |S|^2+j|A|+k}=-\gamma p(s_i|s_j,a_k)$ for $i\in\{1,2,\dots,|S|\}$, $A_{i, |S|^2+i|A|:|S|^2+(i+1)|A|}=1$ for $i\in\{1,2,\dots,|S|\}$ (Eq.~\eqref{eq:con1}), $A_{i+|S|,i|S|+j}=1$ for $i\in\{1,2,\dots,|S|\}$, $A_{i+|S|,|S|^2+i|A|+j}=-1$ (Eq.~\eqref{eq:con2}), and $A_{i+2|S|, j|S|+i}=1$ (Eq.~\eqref{eq:con3}). Other entries of $A$ are $0$. $A_1$ in Eq.~\eqref{eq:separate} are the first $|S|\times|S|$ rows of $A$, and $A_2$ are the last $|S|\times|A|$ rows of $A$.

For vector $b$, we have
\begin{equation}
b=\begin{bmatrix}(1-\gamma)p_0\\0\\d^E_{s}
\end{bmatrix}.
\end{equation}

\setcounter{lemma}{0}
\subsection{Lemma~\ref{thm:lemmaapp}}
\label{sec:invariance}

In this section, we provide a proof of Lemma~\ref{thm:lemmaapp} used in Appendix~\ref{sec:optdetail}. The Lemma reads as follows:

\begin{lemma}
\label{thm:lemmaapp}
For any MDP and feasible expert policy $\pi^E$, the inequality constraints in Eq.~\eqref{eq:primal_main} with $\Pi\geq 0, d^\pi_{sa}\geq 0$ and $\Pi\in\Delta, d^\pi_{sa}\in\Delta$ are equivalent.
\end{lemma}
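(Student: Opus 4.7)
The plan is to prove both inclusions of the equivalence separately, with the nontrivial direction being that non-negativity together with the equality constraints of Eq.~\eqref{eq:primal_main_app} \emph{forces} the two variables into the probability simplex. The forward direction, namely $\Pi\in\Delta$ and $d^\pi_{sa}\in\Delta$ implies $\Pi\geq 0$ and $d^\pi_{sa}\geq 0$, is immediate from the definition of the simplex. So essentially all work is in the reverse direction.

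For the reverse direction I would first sum the Bellman flow equality $d^\pi_s(s)=(1-\gamma)p_0(s)+\gamma\sum_{\bar s,\bar a}d^\pi_{sa}(\bar s,\bar a)p(s|\bar s,\bar a)$ over $s$. Using $\sum_s p_0(s)=1$ and $\sum_s p(s|\bar s,\bar a)=1$, this collapses to $\sum_s d^\pi_s(s)=(1-\gamma)+\gamma\sum_{\bar s,\bar a}d^\pi_{sa}(\bar s,\bar a)$. Next I would sum the marginal constraint $\sum_k\Pi(s_k,s_j)=d^E_s(s_j)$ over $j$ and use the fact that $d^E_s$ is an honest probability distribution coming from a feasible expert policy $\pi^E$ (so $\sum_j d^E_s(s_j)=1$) to conclude $\sum_{i,j}\Pi(s_i,s_j)=1$. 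Combining this with the other marginal constraint $\sum_k\Pi(s_i,s_k)=d^\pi_s(s_i)$ summed over $i$ gives $\sum_i d^\pi_s(s_i)=1$. Plugging into the earlier identity yields $(1-\gamma)+\gamma\sum d^\pi_{sa}=1$, i.e.\ $\sum_{\bar s,\bar a}d^\pi_{sa}(\bar s,\bar a)=1$. Together with $d^\pi_{sa}\geq 0$ and $\Pi\geq 0$, we obtain $d^\pi_{sa}\in\Delta$ and $\Pi\in\Delta$.

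The main (and only mildly subtle) obstacle is making sure the chain of sums is logically valid regardless of which variables are treated as primitive; in particular, $d^\pi_s$ is \emph{defined} via the Bellman flow equality, not as an independent variable, so one must be careful to chain (i) the definition of $d^\pi_s$ in terms of $d^\pi_{sa}$, (ii) the marginal constraint relating $\Pi$ to $d^\pi_s$, and (iii) the marginal constraint relating $\Pi$ to $d^E_s$ in the correct order to close the loop. Once this bookkeeping is done, the computation is a one-line algebraic cascade. A small remark is also warranted that the assumption ``feasible expert policy'' is used only to justify $\sum d^E_s=1$; no further property of $\pi^E$ is needed. This makes the lemma purely structural: the appended simplex constraints in Eq.~\eqref{eq:separate} are redundant, and hence adding them does not change the optimum, which is exactly what is needed to legitimize the use of Theorem~\ref{thm:smodice} in the derivation of Eq.~\eqref{eq:singlelevel}.
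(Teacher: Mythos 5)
Your proof is correct, and it differs from the paper's in a small but real way in how it establishes $\sum_{s,a}d^\pi_{sa}(s,a)=1$. The paper runs two \emph{independent} arguments: it gets $\sum_{s,s'}\Pi(s,s')=1$ from the expert-side marginal constraint exactly as you do, but it obtains the normalization of $d^\pi_{sa}$ from the Bellman flow constraint alone, using the identification $d^\pi_s(s)=\sum_a d^\pi_{sa}(s,a)$ so that summing over $s$ yields the self-referential equation $\sum_{s,a}d^\pi_{sa}=(1-\gamma)+\gamma\sum_{s,a}d^\pi_{sa}$, solved by $1$ whenever $\gamma\neq 1$; the matching variable $\Pi$ plays no role there. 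You instead chain the constraint families: the expert marginal gives $\sum_{s,s'}\Pi(s,s')=1$, the learner marginal then gives $\sum_s d^\pi_s(s)=1$, and the summed flow equation turns this into $1=(1-\gamma)+\gamma\sum_{s,a}d^\pi_{sa}$, forcing $\sum_{s,a}d^\pi_{sa}=1$ whenever $\gamma\neq 0$. Your route buys something: it never invokes the marginalization identity $d^\pi_s=\sum_a d^\pi_{sa}$, so it works verbatim when $d^\pi_s$ is read, as in the main-text statement of Eq.~\eqref{eq:primal_main}, purely as the quantity defined by the flow equation---exactly the bookkeeping subtlety you flag. The paper's route buys decoupling: each normalization stands on its own, so $d^\pi_{sa}\in\Delta$ would hold even if the $\Pi$-constraints were dropped. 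The price is a complementary edge case: your cancellation needs $\gamma>0$, the paper's needs $\gamma<1$; both are harmless under the paper's standing assumption $\gamma\in(0,1)$, but it is worth stating explicitly which one you rely on.
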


\begin{proof}
According to the equality constraint, $\sum_{s}\Pi(s,s')=d^E_s(s')$ for any $s'$. Thus, we have $\sum_{s'}\sum_{s}\Pi(s,s')=\sum_{s'}d^E_s(s')=1$ by the definition of state occupancy. Thus $\Pi\geq 0$ is equivalent to $\Pi\geq \Delta$. Similarly, by summing over both sides of the Bellman flow equality constraint, we have
\begin{equation}
\begin{aligned}
\sum_{s}d^\pi_s(s)&=\sum_{s}(1-\gamma)p_0(s)+\sum_{s}\gamma\sum_{\bar{s},\bar{a}}d^\pi_{sa}(\bar{s},\bar{a})p(s|\bar{s},\bar{a})\\
\sum_{s,a}d^\pi_{sa}(s,a)&=(1-\gamma)+\gamma\sum_{s}\sum_{\bar{s},\bar{a}}d^\pi_{sa}(\bar{s},\bar{a})p(s|\bar{s},\bar{a})\\
\sum_{s,a}d^\pi_{sa}(s,a)&=(1-\gamma)+\gamma\sum_{s'}\sum_{s,a}d^\pi_{sa}(s,a)p(s'|s,a)\\
\sum_{s,a}d^\pi_{sa}(s,a)(1-\gamma\sum_{s'}p(s'|s,a))&=1-\gamma\\
\sum_{s,a}d^\pi_{sa}(s,a)&=1
\end{aligned}
\end{equation}

given that $p_0$ and the transition function are legal. Thus, $d^\pi_{sa}\geq 0$ is equivalent to $d^\pi_{sa}\in\Delta$.

\end{proof}

Intuitively, by adding the extra constraints, we can assume that redundant equality constraints exist in Eq.~\eqref{eq:primal_main}, and they are not relaxed in the Lagrange dual. By imposing more strict constraints over the dual form, the Fenchel conjugate yields a numerically more stable formulation.

%As mentioned in Sec.~\ref{sec:fenchel}, Two modifications are made from Eq.~\eqref{eq:lag1} to Eq.~\eqref{eq:separate}: one is the separation of max operator, and the other is adding the extra constraint that $\Pi\in\Delta, d^\pi_{sa}\in\Delta$. The former does not affect the optimization problem since the constraint between $\Pi$ and $d^\pi_{sa}$ is already relaxed in the dual (i.e., in the dual problem $\Pi$ and $d^\pi_{sa}$ are independent); the latter does not affect the optimal solution because
\iffalse
% \subsection{Theorem~\ref{thm:smodice}}
% \label{sec:thm1}
% Thm.~\ref{thm:smodice} is stated as follows:

% \begin{theorem}
% With mild assumptions~\cite{pmlr-v54-dai17a}, for any $f$-divergence $D_f$, probability distribution $p,q$ on domain $\mathcal{X}$ and function $y: \mathcal{X}\rightarrow\mathbb{R}$, we have

% \begin{equation}
% \max_{p\in\Delta(\mathcal{X})}\mathbb{E}_{x\sim p}[y(x)]-D_f(p\|q)=\mathbb{E}_{x\sim q}[f_*(y(x))]
% \end{equation}

% also, for $p^*=\mathop{\arg\max}_{p\in\Delta(\mathcal{X})}\mathbb{E}_{x\sim q}[f_*(y(x))]$, we have 
% \begin{equation}
% p^*(x)=q(x)f'_*(y(x)),
% \end{equation}
% where $f_*(\cdot)$ is the Fenchel conjugate of $f$, and $f'_*$ is its derivative.
% \end{theorem}

% This theorem is utilized in SMODICE~\cite{ma2022smodice} and our work to get a more robust optimization objective. The proof of the theorem is out of scope of this work; see Sec. $7.14^*$~\cite{MITnote} for the detailed proof of the theorem.
\fi
\subsection{Proof of Generalization over SMODICE}
\label{sec:thm2}

In Sec.~\ref{sec:reg}, we claim that our proposed method, PW-DICE, is a generalization over SMODICE. More specifically, we have the following claim and corollary:

\begin{claim}
\label{thm:generalapp}
If $c(s_i,s_j)=-\log\frac{d^E_s(s_i)}{d^I_s(s_i)}, \epsilon_2=1$, then as $\epsilon_1\rightarrow 0$, Eq.~\eqref{eq:final} is equivalent to the SMODICE objective with KL divergence.  
\end{claim}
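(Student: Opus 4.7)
My plan is to substitute $c(s_i,s_j)=-R(s_i)$, where $R(s):=\log\frac{d^E_s(s)}{d^I_s(s)}$, and $\epsilon_2=1$ into Eq.~\eqref{eq:final}, take $\epsilon_1\to 0^+$, and then eliminate the auxiliary dual coordinates $\lambda_{i+|S|}$ and $\lambda_{i+2|S|}$ in turn. Once both are removed, the minimization over $\lambda_i$ should coincide exactly with the SMODICE dual in Eq.~\eqref{eq:smodicedual} under the sign identification $V(s)=-\lambda_s$.

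First, by the Laplace principle (i.e.\ $\epsilon\log\mathbb{E}[\exp(X/\epsilon)]\to\operatorname*{ess\,sup}X$ as $\epsilon\to 0^+$), the first term of Eq.~\eqref{eq:final} converges pointwise in $\lambda$ to
$$T_1(\lambda)=\sup_{s_i\in\mathrm{supp}(d^I_s),\,s_j\in\mathrm{supp}(d^E_s)}\bigl[\lambda_{i+|S|}+\lambda_{j+2|S|}+R(s_i)\bigr].$$
The key structural observation is that because $c$ depends only on $s_i$, this supremum \emph{factorizes}: $T_1(\lambda)=\sup_{s_i}[\lambda_{i+|S|}+R(s_i)]+\sup_{s_j}\lambda_{j+2|S|}$. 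This is the dual-side counterpart of the paper's primal remark that ``the total matching cost is fixed for any matching plan given $d^\pi_s$,'' and it is what decouples $\lambda_{i+|S|}$ from $\lambda_{j+2|S|}$ in the limit.

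Next, after the factorization the variables $\lambda_{j+2|S|}$ appear only as $\sup_{s_j}\lambda_{j+2|S|}-\mathbb{E}_{s_j\sim E}\lambda_{j+2|S|}\geq 0$, with equality iff $\lambda_{j+2|S|}$ is constant on $\mathrm{supp}(d^E_s)$, so minimizing them out contributes $0$. The crucial remaining step is eliminating $\lambda_{i+|S|}$: I will plug in the ansatz $\lambda_{i+|S|}=C-R(s_i)$ for a free scalar $C$ and verify (i) that $C$ cancels between the two terms and (ii) that this choice achieves the global minimum. Optimality will follow from the sharp inequality
$$\sup_i\bigl[w_i+R(s_i)\bigr]+\log\mathbb{E}_{s_i\sim d^I_s}\bigl[\rho_i\,e^{-w_i}\bigr]\ \geq\ \log\mathbb{E}_{s_i\sim d^I_s}\bigl[\rho_i\,e^{R(s_i)}\bigr],$$
where $\rho_i$ collects the remaining $\lambda_i$-dependent factors arising from the second term of Eq.~\eqref{eq:final}, with equality iff $w_i+R(s_i)$ is constant on $\mathrm{supp}(d^I_s)$.

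Plugging both optima back leaves the residual objective $\log\mathbb{E}_{(s_i,a_j)\sim I}\exp\bigl(R(s_i)+\lambda_i-\gamma\mathbb{E}_{s_k\sim p(\cdot|s_i,a_j)}\lambda_k\bigr)-(1-\gamma)\mathbb{E}_{s\sim p_0}\lambda_i$, which is precisely Eq.~\eqref{eq:smodicedual} under $V(s)=-\lambda_s$. I anticipate two main subtleties: (a) interchanging the limit $\epsilon_1\to 0$ with the outer $\min_\lambda$, which I expect to handle via monotonicity of $\epsilon\mapsto\epsilon\log\mathbb{E}[\exp(X/\epsilon)]$ combined with convexity of the smoothed objective; and (b) cleanly handling dual coordinates at states outside $\mathrm{supp}(d^I_s)\cup\mathrm{supp}(d^E_s)$, since those coordinates have zero weight in every expectation and every supremum in sight, and can be sent to $-\infty$ without affecting the limit.
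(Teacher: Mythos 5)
Your proof is correct, and at the strategic level it retraces the paper's own dual-perspective argument: substitute the SMODICE cost, eliminate the block $\lambda_{\cdot+2|S|}$, then eliminate $\lambda_{\cdot+|S|}$ at $\lambda_{i+|S|}=C-R(s_i)$ with the constant cancelling, and identify the residual with Eq.~\eqref{eq:smodicedual} under the sign flip $V(s)=-\lambda_s$ (the paper's main text says ``$\lambda_i$ corresponds to $V(s)$'' but its own appendix computation confirms your negation). Where you genuinely differ is in how the $\epsilon_1\to 0$ limit is handled. The paper declines to treat the first term as a max: it argues by a first-order contradiction argument that \emph{at the optimum} the exponent $\lambda_{s+|S|}+\lambda_{s'+2|S|}-c(s,s')$ must be constant over the support, so the logsumexp collapses to an expectation, after which the auxiliary blocks cancel against the linear terms and can be set to zero ``without loss of generality.'' You instead take the pointwise Laplace limit (a sup), observe that it factorizes over the product support precisely because $c$ depends only on $s_i$ (the dual shadow of the paper's primal remark about the matching cost being fixed given $d^\pi_s$), and then minimize out each block via explicit sharp inequalities --- $\sup$ versus mean for $\lambda_{\cdot+2|S|}$, and your exponential-tilting inequality for $\lambda_{\cdot+|S|}$ --- each with a clean equality condition. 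This makes the elimination steps airtight where the paper's ``infinitesimal increment'' reasoning is loose; and your flagged subtlety (a), interchanging $\lim_{\epsilon_1\to 0}$ with $\min_\lambda$, is a gap the paper's dual proof silently shares, so acknowledging it with the monotonicity of $\epsilon\mapsto\epsilon\log\mathbb{E}[e^{X/\epsilon}]$ puts you ahead rather than behind. What you do not have is the paper's second, much shorter primal-perspective proof: with $\epsilon_1=0$ the cost term satisfies $\sum_{s,s'}c(s,s')\Pi(s,s')=-\mathbb{E}_{s\sim d^\pi_s}[R(s)]$ by the row-marginal constraint, so the two primal programs coincide outright; that argument never touches the logsumexp structure and is what lets the paper extend the claim to arbitrary $f$-divergences (Corollary~\ref{thm:corolapp}), whereas your dual route is tied specifically to KL.
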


\begin{corollary}
\label{thm:corolapp}
If $c(s_i,s_j)=-\log\frac{d^E_s(s_i)}{d^I_s(s_i)}, \epsilon_2=1$, then as $\epsilon_1\rightarrow 0$, Eq.~\eqref{eq:separate} is equivalent to SMODICE with any $f$-divergence.
\end{corollary}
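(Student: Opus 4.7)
The plan is to take the singular limit $\epsilon_1 \to 0^+$ in Eq.~\eqref{eq:final}, exploit the separability of $c(s_i,s_j) = -R(s_i)$ in $s_j$ to peel off the two Wasserstein marginal dual variables $\lambda_{j+2|S|}$ and $\lambda_{i+|S|}$ one at a time, and then identify what remains with the SMODICE Bellman dual via the change of variables $V(s) = -\lambda_s$. Along the way I would use $\epsilon_2=1$ and the assumed cost $c(s_i,s_j) = -\log(d^E_s(s_i)/d^I_s(s_i))$ from the hypothesis.

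First I would invoke the standard fact that $\epsilon \log \mathbb{E}\exp(X/\epsilon) \to \operatorname{ess\,sup} X$ as $\epsilon \to 0^+$, so the first term of Eq.~\eqref{eq:final} converges to
\[
\max_{(s_i, s_j)\in \mathrm{supp}(I)\times \mathrm{supp}(E)}\bigl[\lambda_{i+|S|} + \lambda_{j+2|S|} - c(s_i,s_j)\bigr].
\]
Because $c(s_i,s_j)=-R(s_i)$ does not involve $s_j$, this maximand splits additively in $s_i$ and $s_j$, so the joint maximum factors as $\max_i[\lambda_{i+|S|}+R(s_i)] + \max_j \lambda_{j+2|S|}$. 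Combining the latter term with the $-\mathbb{E}_{s\sim E}\lambda_{j+2|S|}$ piece already present in Eq.~\eqref{eq:final}, the $\lambda_{j+2|S|}$-dependence collapses to $\max_j \lambda_{j+2|S|} - \mathbb{E}_E \lambda_{j+2|S|}\ge 0$, with equality iff $\lambda_{j+2|S|}$ is constant on $\mathrm{supp}(E)$; hence without loss of generality we may take $\lambda_{j+2|S|}\equiv 0$ at the optimum.

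Second I would reparameterize $\lambda_{i+|S|}=-R(s_i)+w_i$ (a bijection in $w_i$) and set $Y_i = -\gamma \mathbb{E}_{s_k}\lambda_k + \lambda_i + R(s_i)$. With $\epsilon_2=1$ the $w_i$-dependent piece becomes $\max_i w_i + \log \mathbb{E}_{(s_i,a_j)\sim I}\exp(Y_i - w_i)$; letting $G = \max_i w_i$ and using $\exp(Y_i-w_i)\ge\exp(Y_i-G)$ gives the lower bound $G + \log\mathbb{E}\exp(Y_i) - G = \log \mathbb{E}\exp(Y_i)$, attained whenever $w_i\equiv G$ for any constant $G$. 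Thus the entire $\lambda_{i+|S|}$ optimization collapses to the scalar $\log \mathbb{E}\exp(Y_i)$. Applying $V(s)=-\lambda_s$ so that $-\gamma \mathbb{E}_{s_k}\lambda_k + \lambda_i = \gamma V(s') - V(s)$ and $-(1-\gamma)\mathbb{E}_{p_0}\lambda_s = (1-\gamma)\mathbb{E}_{p_0}V(s)$ produces exactly the SMODICE-KL dual from Eq.~\eqref{eq:smodicedual}. For Corollary~\ref{thm:corolapp}, I would start one step earlier at Eq.~\eqref{eq:separate}: the limit $\epsilon_1\to 0^+$ drops $\epsilon_1 D_f(\Pi\|U)$ but keeps $\Pi\in\Delta(S^2)$, so $\max_{\Pi\in\Delta(S^2)}\langle A_1^T\lambda-c,\Pi\rangle$ is the pointwise maximum regardless of $f$; the two marginal-variable peel-offs above are purely algebraic and $f$-agnostic; and Theorem~\ref{thm:smodice} applied to the inner maximization with the chosen $f$ replaces $\log\mathbb{E}\exp(\cdot)$ by $\mathbb{E}_I[f_*(\cdot)]$, yielding the general-$f$ SMODICE dual.

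The main obstacle will be a clean justification of the singular limit itself: showing that $\min_\lambda F_{\epsilon_1}(\lambda)$ converges to the minimum of the pointwise limit functional rather than merely that the functionals converge pointwise. Convexity of each $F_{\epsilon_1}$ in $\lambda$ together with the monotonicity $\epsilon_1 \log \mathbb{E}\exp(\cdot/\epsilon_1)\downarrow \max(\cdot)$ as $\epsilon_1\downarrow 0$ lets one push the limit through $\min$ by a standard $\Gamma$-convergence argument, but care is needed because the limit functional is not strictly convex in $\lambda_{i+|S|}$ (there is a one-parameter family of optimizers indexed by the constant $G$ above) and because the coordinates $\lambda_{j+2|S|}$ outside $\mathrm{supp}(E)$ remain unconstrained; the latter is easily handled by observing that those coordinates do not appear in the post-limit objective at all and may be set arbitrarily.
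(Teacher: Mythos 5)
Your treatment of the KL case (the Claim) is essentially sound, and it is a legitimate dual-side alternative to the paper's own dual-perspective proof. But the statement at issue is the Corollary, i.e., the \emph{general}-$f$ case, and there your proof has a genuine gap: the assertion that ``the two marginal-variable peel-offs above are purely algebraic and $f$-agnostic'' is false for the second peel-off. Eliminating $\lambda_{i+|S|}$ rests on the identity $G+\log\mathbb{E}\exp(Y_i-G)=\log\mathbb{E}\exp(Y_i)$, i.e., on the exact shift-invariance of logsumexp, which is special to KL. For a general conjugate, the corresponding quantity is $\min_{w}\left\{\max_i w_i+\mathbb{E}_I\left[f_*(Y_i-w_i)\right]\right\}$, and this is \emph{not} $\mathbb{E}_I\left[f_*(Y_i)\right]$. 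Take the paper's own $\chi^2$ setting, $f_*(x)=\tfrac12(x+1)^2$, and a single atom with $Y=10$: the minimum over $w$ is attained at $w=10$ with value $10.5$, whereas $\mathbb{E}_I\left[f_*(Y)\right]=60.5$. So for general $f$ your route lands on a dual with a leftover normalization scalar, $\min_{V,G}\,(1-\gamma)\mathbb{E}_{p_0}V+G+\mathbb{E}_I\left[f_*(R+\gamma V'-V-G)\right]$, which is a different functional from SMODICE's stated elementwise-$f_*$ objective; equating the two then requires a separate strong-duality (or primal-equivalence) argument that your proposal does not supply. (A smaller slip: $\epsilon\log\mathbb{E}\exp(X/\epsilon)$ \emph{increases} to $\operatorname{ess\,sup}X$ as $\epsilon\downarrow 0$, by monotonicity of $L^p$ norms, rather than decreasing; your $\Gamma$-convergence sketch should be adjusted accordingly, though that issue is repairable.)

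This is precisely why the paper proves the Corollary from the primal side, where the argument is immediate and $f$-agnostic for free. Since $c(s_i,s_j)=-\log\frac{d^E_s(s_i)}{d^I_s(s_i)}$ depends only on its first argument, the marginal constraint $\sum_{s'}\Pi(s,s')=d^\pi_s(s)$ forces the Wasserstein cost to equal $-\mathbb{E}_{s\sim d^\pi_s}\log\frac{d^E_s(s)}{d^I_s(s)}$ for \emph{every} feasible coupling, and a feasible coupling always exists (e.g., the product $d^\pi_s\otimes d^E_s$). Hence at $\epsilon_1\rightarrow 0$, $\epsilon_2=1$ the regularized primal Eq.~\eqref{eq:reg_app} is the negation of SMODICE's primal Eq.~\eqref{eq:smodiceprimal} with identical constraints, for arbitrary $D_f$; the equivalence of Eq.~\eqref{eq:separate} with the SMODICE dual then follows because both are Lagrange duals of the same convex program. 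If you want a purely dual-side proof of the Corollary, you must either carry the normalization scalar $G$ through and invoke strong duality to identify the two dual forms, or redo the peel-off with the correct simplex-constrained conjugate; as written, the ``$f$-agnostic'' claim is exactly where the proof breaks.
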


We first provide a simple proof from the primal perspective:

\begin{proof} (Primal Perspective)
According to Eq.~\eqref{eq:smodiceprimal} and Eq.~\eqref{eq:primal_main}, the SMODICE and PW-DICE primal objectives are as follows:

\begin{equation}
\begin{aligned}
\min_x (c')^Tx+\epsilon_1 D_f(\Pi\|U)+\epsilon_2 D_f(d^\pi_{sa}\|d^I_{sa}), \text{s.t. } Ax=b, x\geq 0;\text{ (PW-DICE)}\\
\max_\pi \mathbb{E}_{s\sim d^\pi}\log\frac{d^E_s(s)}{d^I_s(s)}-D_f(d^\pi_{sa}(s,a)\|d^I_{sa}(s,a)), \text{s.t. $\pi$ is a feasible policy. (SMODICE)}\\
\label{eq:compare}
\end{aligned}
\end{equation}

Here, $x=\begin{bmatrix}d^\pi_s\\\Pi\end{bmatrix}$. Note: 
1) $Ax=b, x\geq 0$ contains three equality constraints: the Bellman flow equation (which is the same as ``$\pi$ is a feasible policy''), $\sum_{s'}\Pi(s,s')=d^\pi_s(s),$ and $\sum_{s}\Pi(s,s')=d^E(s')$; 
2) $(c')^Tx=\sum_{s,s'}c(s,s')\Pi(s,s')$. 
Thus, we have
\begin{equation}
\sum_{s}\sum_{s'}c(s,s')\Pi(s,s')=\sum_{s}\log\frac{d^E_s(s)}{d^I_s(s)}\sum_{s'}\Pi(s,s')=-\mathbb{E}_{s\sim d^\pi_s}\log\frac{d^E_s(s)}{d^I_s(s)}.
\end{equation}

Therefore, when $\epsilon_1=0, \epsilon_2=1$, $c'(s,s')=c(s,s')=-\log \frac{d^E_s(s)}{d^I_s(s)}$, the objective of PW-DICE and SMODICE are negated version of each other (with one maximizing and the other minimizing), and the constraints on $d^\pi_{sa}$ are identical. Since $\Pi$ is also solvable (one apparent solution is $\Pi=d^\pi_s\otimes d^E_s$), the two objectives are identical, and thus the objectives in Eq.~\eqref{eq:compare} are equivalent. Both the Claim and the Corollary are proved, since we do not specify $D_f$.
\end{proof}

However, this claim is unintuitive in its dual form: as we always have $\epsilon_1>0, \epsilon_2>0$ in the dual form, the behavior of  $\lim_{\epsilon_1\rightarrow 0}\epsilon_1\log \mathbb{E}_{s_i\sim I, s_j\sim E}\exp\left(\frac{\lambda_{i+|S|}+\lambda_{j+2|S|}-c(s_i,s_j)}{\epsilon_1}\right)$ in Eq.~\eqref{eq:final} is non-trivial. Thus, here we give another proof for Claim~\ref{thm:generalapp} directly from the dual perspective for KL-divergence as $D_f$ in the continuous space:

\begin{proof} (Dual Perspective, KL-divergence, continuous space)
First, we prove by contradiction that 
\begin{equation}
    \lim_{\epsilon_1\rightarrow 0}\epsilon_1\log \mathbb{E}_{s\sim I,s'\sim E}\exp{\left(\frac{\lambda_{s+|S|}+\lambda_{s'+2|S|}-c(s,s')}{\epsilon_1}\right)}
\end{equation}
is not the $\max$ operator, because at the optimum we have $\lambda_{s+|S|}+\lambda_{s'+2|S|}-c(s,s')$ to be equal for every $d^I_s(s)>0, d^E_s(s')>0$. Otherwise, assume the state pair $(s,s')$ has the largest $\lambda_{s+|S|}+\lambda_{s'+2|S|}-c(s_0,s'_0)$; because $\epsilon_1$ can be arbitrarily close to $0$, there exists $\epsilon_1$ small enough such that there exists $s\neq s_0$ or $s'\neq s'_0$ that makes the infinitesimal increment of $\lambda_s$ or $\lambda_s'$ worthy (i.e., partial derivative with respect to $\lambda_s$ or $\lambda_s'$ greater than $0$).

Then, we have 
\begin{equation}
\label{eq:lim}
\begin{aligned}
    &\lim_{\epsilon_1\rightarrow 0}\epsilon_1\log \mathbb{E}_{s\sim I,s'\sim E}\exp{\left(\frac{\lambda_{s+|S|}+\lambda_{s'+2|S|}-c(s,s')}{\epsilon_1}\right)}\\=&\mathbb{E}_{s\sim I,s'\sim E}\left(\lambda_{s+|S|}+\lambda_{s'+2|S|}-c(s,s')\right)\\=&\mathbb{E}_{s\sim I}\left[\lambda_{s+|S|}+\log\frac{d^E_s(s)}{d^I_s(s)}\right]+\mathbb{E}_{s'\sim E}\lambda_{s'+2|S|}.
\end{aligned}
\end{equation}

Note that $\lambda_{s'+2|S|}$ in Eq.~\eqref{eq:lim} is cancelled out with the linear term $-\mathbb{E}_{s'\sim E}\lambda_{s'+2|S|}$ in the objective (see Eq.~\eqref{eq:final}) later, so the value of $\lambda_{s'+2|S|}$ does not matter anymore. That means, for any $\lambda_{s'+2|S|}$, there exists an optimal solution. Therefore, without loss of generality, we let $\lambda_{s'+2|S|}=0$. The objective then becomes 

\begin{equation}
\begin{aligned}
&\epsilon_1\log \mathbb{E}_{s\sim I}\exp\left(\frac{\lambda_{s+|S|}+\log\frac{d^E_s(s)}{d^I_s(s)}}{\epsilon_1}\right)+\\&\epsilon_2\log \mathbb{E}_{(s,a,s')\sim I}\exp{\left(\frac{-\gamma \lambda_{s'}+\lambda_s-\lambda_{s+|S|}}{\epsilon_2}\right)}-(1-\gamma)\mathbb{E}_{s\sim p_0}\lambda_s.
\end{aligned}
\end{equation}

Then, we can use the same trick on $\epsilon_1\rightarrow 0$ and infer that
$\lambda_{s+|S|}=-\log\frac{d^E_s(s)}{d^I_s(s)}+Q$, where $Q$ is some constant. Then, we have our optimization objective to be

\begin{equation}
L(\lambda)=Q+\epsilon_2\log \mathbb{E}_{(s,a,s')\sim I}\exp{\left(\frac{-\gamma \lambda_{s'}+\lambda_s+\log\frac{d^E_s(s)}{d^I_s(s)}-Q}{\epsilon_2}\right)}-(1-\gamma)\mathbb{E}_{s\sim p_0}\lambda_s.
\end{equation}

Note that $Q$ is cancelled out again, which means that the value of $Q$ does not matter. Without loss of generality, we set $Q=0$, and then we obtain the SMODICE objective with KL-divergence.
\end{proof}

% \section{Implementation Details}

\section{Experimental Details}
\label{sec:expdetail}
\subsection{Tabular MDP} 

\textbf{Experimental Settings.} We adopt the tabular MDP experiment from LobsDICE~\cite{Kim2022LobsDICEOL}. For the tabular experiment, there are $20$ states in the MDP and $4$ actions for each state $s$; each action $a$ leads to four uniformly chosen states $s'_1, s'_2, s'_3, s'_4$. The vector of probability distribution over the four following states is determined by the formula $(p(s'_1|s,a), p(s'_2|s,a)), p(s'_3|s,a), p(s'_4|s,a)=(1-\eta)X+\eta Y$, where $X\sim\text{Categorical}(\frac{1}{4}, \frac{1}{4},\frac{1}{4},\frac{1}{4})$, and $Y\sim\text{Dirichlet}(1, 1, 1, 1)$. $\eta\in[0, 1]$ controls the randomness of the transition: {\color{black}$\eta=0$} means deterministic, and $\eta=1$ means highly stochastic. The agent always starts from state $s_0$, and can only get a reward of $+1$ by reaching a particular state $s_{x}$. $x$ is chosen such that the optimal value function  $V^*(s_0)$ is minimized. The discount factor $\gamma$ is set to $0.95$. 

\textbf{Dataset Settings.} For each MDP, the expert dataset is generated using a deterministic optimal policy with infinite horizon, and the task-agnostic dataset is generated similarly but with a uniform policy. Note that we use a different expert policy from the softmax policy of LobsDICE, because we found the value function for each state to be quite close due to the high connectivity of the MDP. Thus, the ``expert'' softmax policy is actually near-uniform and severely suboptimal.

\textbf{Selection of Hyperparameters.} There is no hyperparameter selection for SMODICE. For LobsDICE, we follow the settings in their paper, which is $\alpha=0.1$. For our method, we use $\epsilon_1=\epsilon_2=0.01$ for the version with regularizer, and $\epsilon_1=\epsilon_2=0$ for the version with Linear Programming (LP).

\subsection{MuJoCo Environment}
\label{sec:mjc}
\textbf{Experimental Settings.} Following SMODICE~\cite{ma2022smodice}, we test four widely adopted MuJoCo locomotion environments: hopper, halfcheetah, ant, and walker2d, and two more challenging locomotion environments, antmaze and kitchen. Below is the detailed description for each environment.  See Fig.~\ref{fig:env-illu} for an illustration.

\begin{enumerate}
    \item \textbf{Hopper.} Hopper is a 2D environment where the agent controls a single-legged robot to jump forward. The state is $11$-dimensional, which includes the angle and velocity for each joint of the robot; the action is $3$-dimensional, each of which controls the torque applied on a particular joint.
    \item \textbf{Halfcheetah}. In Halfcheetah, the agent controls a cheetah-like robot to run forward. Similar to Hopper, the environment is also 2D, with $17$-dimensional state space describing the coordinate and velocity and $6$-dimensional action space controlling torques on its joints.
    \item \textbf{Ant.} Ant is a 3D environment where the agent controls a quadrupedal robotic ant to move forward. The $111$-dimensional state space includes the coordinate and velocity of each joint. The action space is $8$-dimensional.
    \item \textbf{Walker2d.} Walker2d, as its name suggests, is a 2D environment where the agent controls a two-legged robot to walk forward. The state space is $27$-dimensional and the action space is $8$-dimensional.  

    \item\textbf{Antmaze}. In this work, we consider the U-maze task, where the agent needs to manipulate a $8$-DoF robotic ant with $29$-dimensional state to crawl from one end of the maze to another, as illustrated in Fig.~\ref{fig:env-illu}.
    
    % NeurIPS 23 version: \item\textbf{Antmaze.} In this environment, the agent controls a robotic ant with $29$-dimensional state space and $8$-dimensional action space to move from one end of a u-shaped maze to the other end.
    
    \item\textbf{Kitchen.} Franka Kitchen in D4RL is a challenging environment, where the agent manipulates a 9-DoF robotic arm and tries to complete $4$ sequential subtasks. Subtask candidates include moving the kettle, opening the microwave, turning on the bottom or top burner, opening the left or right cabinet, and turning on the lights. The state space describes the status of the robot and the goal and current location of the target items. The state is $60$-dimensional. 
    
    % NeurIPS 23 version: \item\textbf{Kitchen.} In this environment, the agent controls a 9-DoF robotic arm to complete a sequence of $4$ subtasks; possible subtasks include opening the microwave, moving the kettle, turning on the light, turning on the bottom burner, turning on the top burner, opening the left cabinet, and opening the right cabinet. The state space is $60$-dimensional, which includes the configuration of the robot, goal location of the items to manipulate, and current position of the items.

\end{enumerate}

\begin{figure}
    \centering
    \begin{minipage}[c]{0.3\linewidth}
\subfigure[Hopper]{\includegraphics[width=\linewidth]{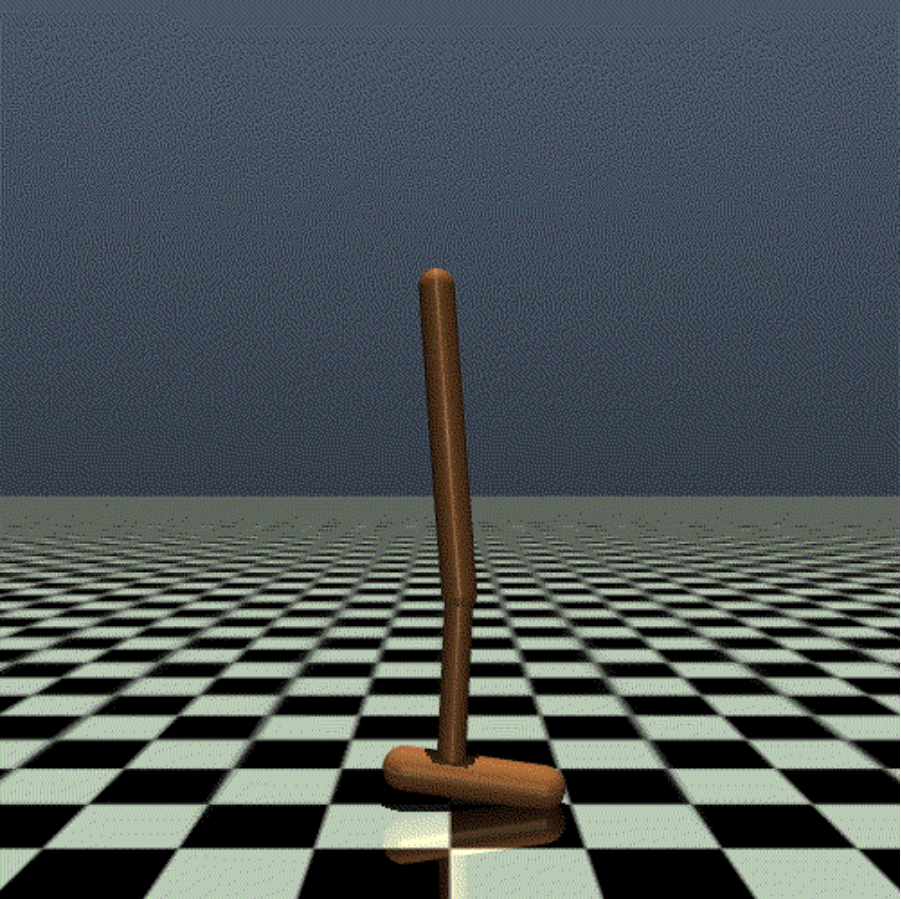}}
\subfigure[Halfcheetah]{\includegraphics[width=\linewidth]{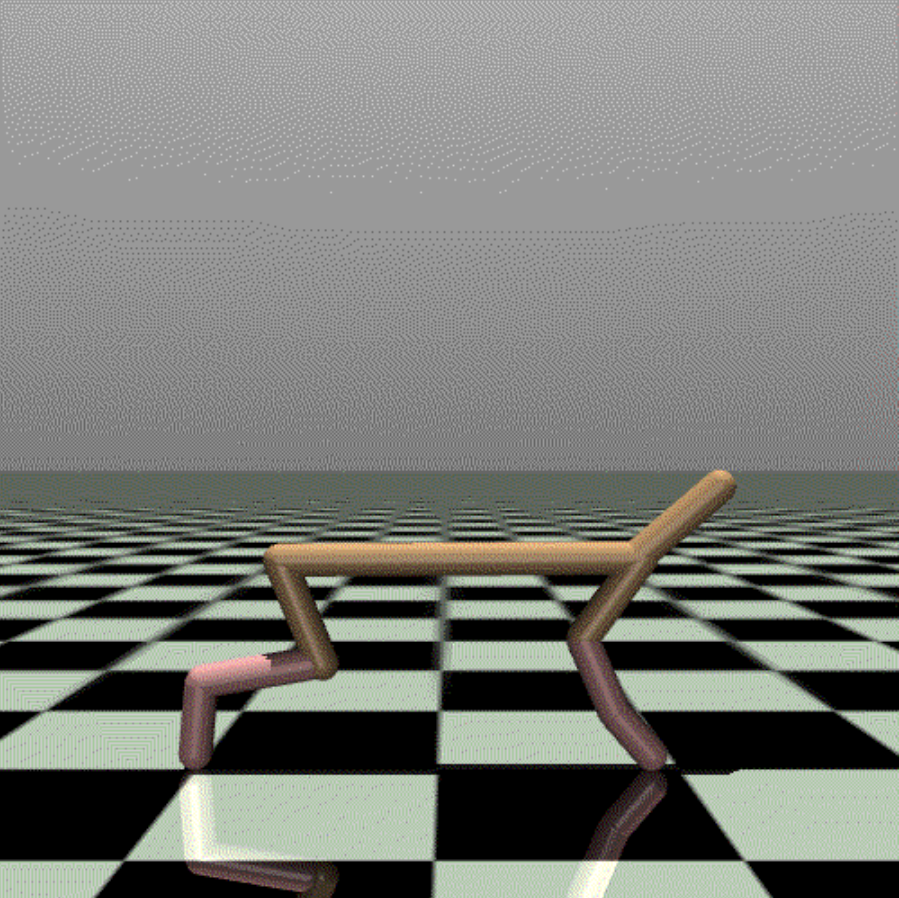}}
\end{minipage}
\begin{minipage}[c]{0.3\linewidth}
\subfigure[Ant]{\includegraphics[width=\linewidth]{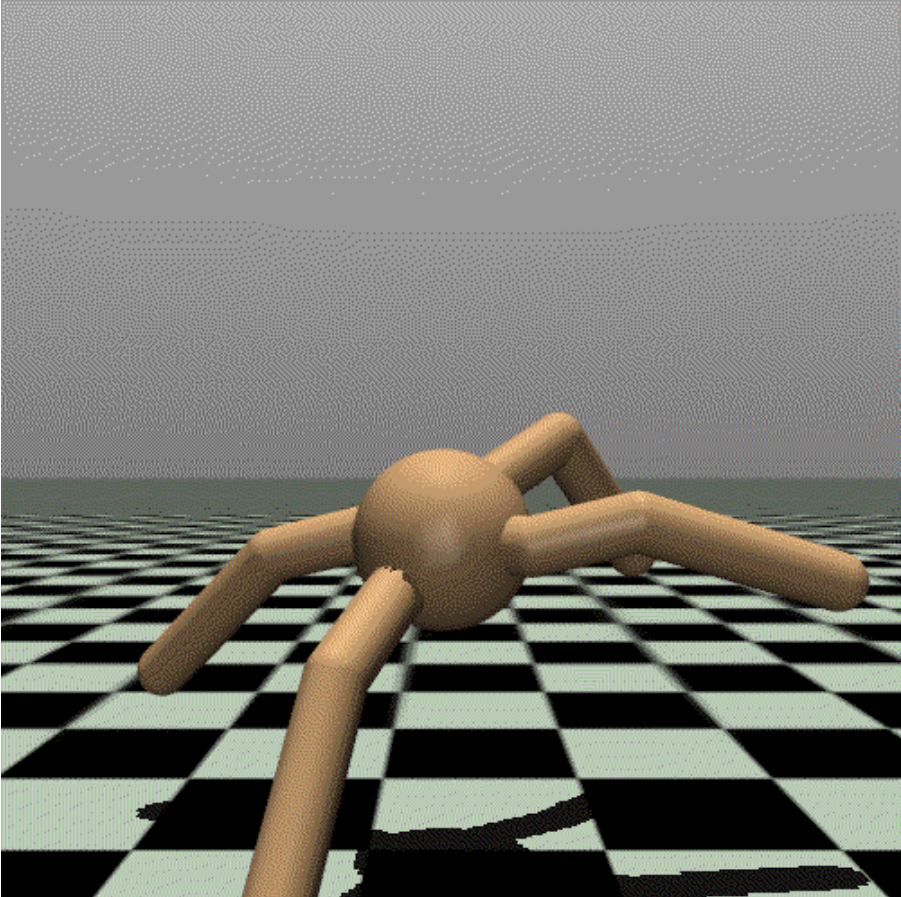}}
\subfigure[Walker2d]{\includegraphics[width=\linewidth]{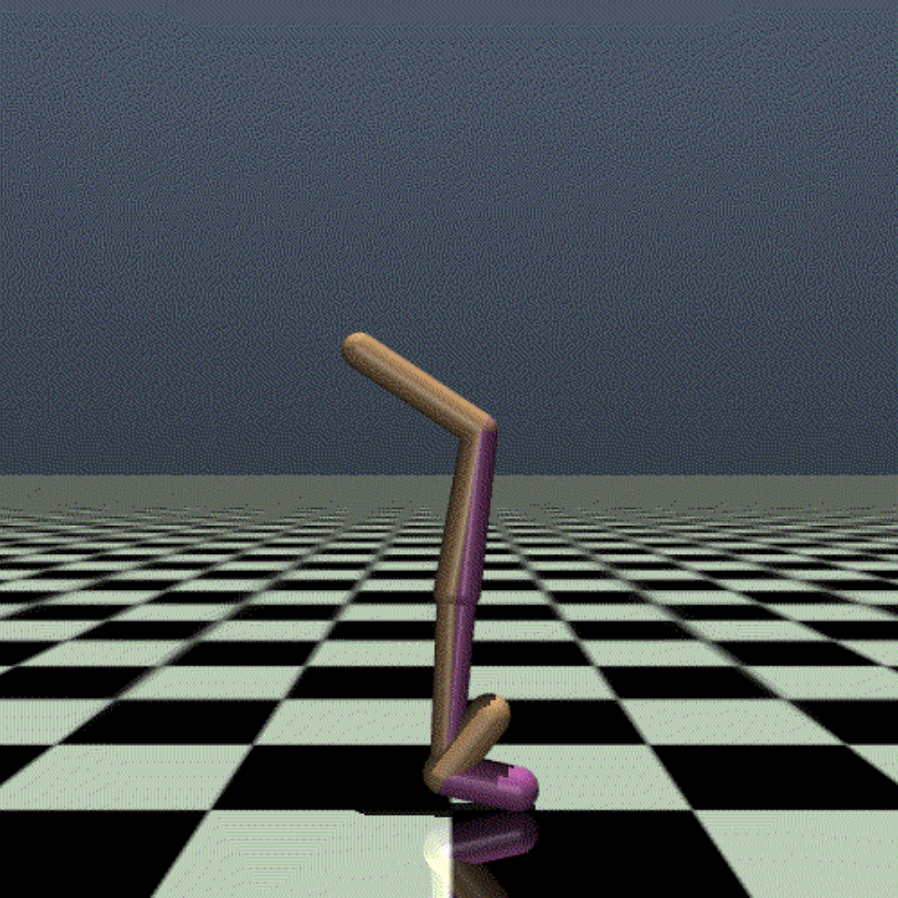}}
\end{minipage}
\begin{minipage}[c]{0.3\linewidth}
\subfigure[Antmaze]{\includegraphics[width=\linewidth]{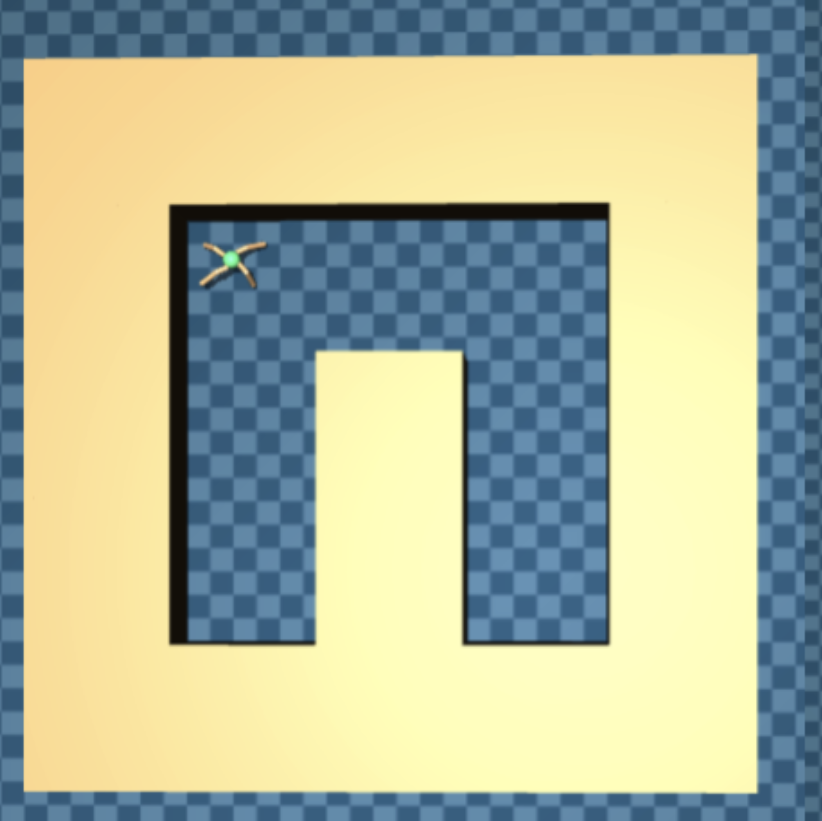}}
\subfigure[Kitchen]{\includegraphics[width=\linewidth]{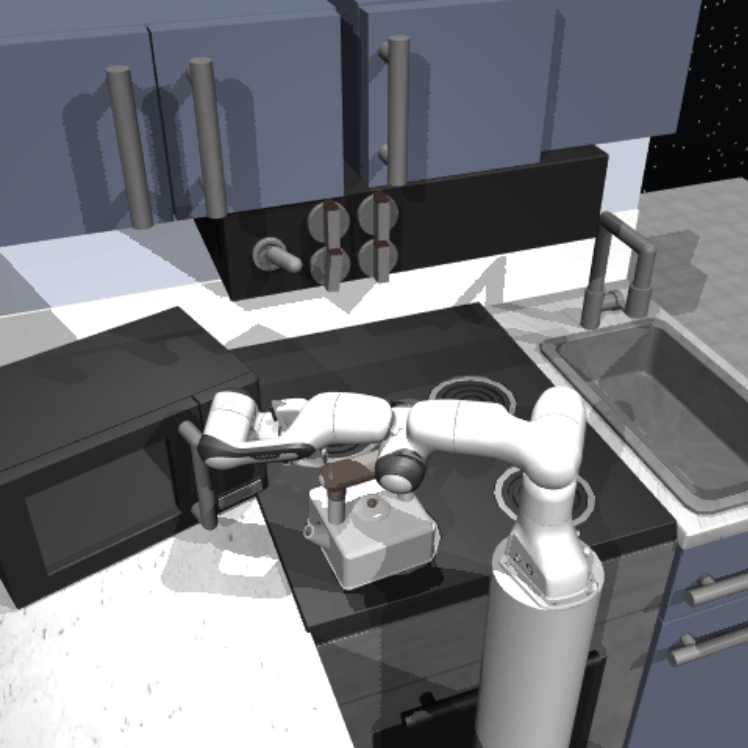}}
\end{minipage}

\caption{Illustration of environments tested in Sec.~\ref{sec:MuJoCoexp} based on OpenAI Gym~\cite{1606.01540} and D4RL~\cite{fu2020d4rl}.}
    \label{fig:env-illu}
\end{figure}

\textbf{Dataset Settings.} We adopt the same settings as SMODICE~\cite{ma2022smodice}. SMODICE uses a single trajectory ($1000$ states) from the ``expert-v2'' dataset in D4RL~\cite{fu2020d4rl} as the expert dataset $E$. For the task-agnostic dataset $I$, SMODICE uses the concatenation of $200$ trajectories ($200K$ state-action pairs) from ``expert-v2'' and  the whole ``random-v2'' dataset ($1M$ state-action pairs).

\textbf{Selection of Hyperparameters.} Tab.~\ref{tab:uniquehpp} summarizes our hyperparameters, which are also the hyperpameters of plain Behavior Cloning if applicable. For baselines (SMODICE, LobsDICE, ORIL, OTR, and DWBC), we use the hyperparameters reported in their paper (unless the hyperparameter values in the paper and the code differ, in which  case we report the values from the code).

\textbf{Implementation of Contrastively Learned Distance.} We build our contrastive learning module on the implementation of FIST~\cite{Hakhamaneshi2022FIST}. The contrastive learning embedding framework is trained for $200$ epochs over the task-agnostic dataset. In each epoch, we use a batch size of $4096$, where each data point consists of a consecutive state pair $(s_i, s'_i)$, $i\in\{1,2,\dots,4096\}$, and the states are $n$-dimensional. An encoding layer $g(s):\mathbb{R}^{n}\rightarrow\mathbb{R}^{M\times 1}$ with $M=32$ embedding dimensions is first applied on each $s_i$ and $s'_i$. Then, the score matrix $L\in\mathbb{R}^{4096\times 4096}$ is calculated, such that $L_{i,j}=g(s_i)^TWg(s'_j)$. $W\in\mathbb{R}^{M\times M}=\text{softplus}(W_0)\text{softplus}(W_0^T)$ is a learnable matrix, and it is structured for better stability. Finally, each row of $L$ is viewed as the score for a $4096$-way classification. The label is the identity matrix. Such training paradigm gives us the  loss also provided in Eq.~\eqref{eq:infonce}:

\begin{equation}
\label{eq:infonce-app}
L_c=\log\frac{\exp\left(q^TWk_+\right)}{\exp\left(q^TWk_+\right)+\sum_{k_-}\exp\left(q^TWk_-\right)}.
\end{equation}
Here, $q=g(s_i)$ is the query (anchor), $W$ is the weight matrix, $k_+=g(s'_i)$ is a positive key, and $k_-\in\{g(s'_j)|j\neq i\}$ are negative keys. This objective essentially amounts to a $4096$-way classification task, where for the $i$-th sample the correct label is $i$. 

\begin{table}[t]
    \centering
    \scriptsize
    \begin{tabular}{cccc}
        \hline
          Type & Hyperparameter & Value & Note  \\
        \hline
        Disc. & Network Size & [256, 256] \\
              & Activation Function & Tanh \\
              & Learning Rate & 0.0003 \\
              & Training Length & 40K steps \\
              & Batch Size & 512 \\ 
              & Optimizer & Adam \\ 
        Actor & Network Size & [256, 256] \\
              & Activation Function & ReLU \\
              & Learning Rate & 0.001 \\
              & Weight Decay & $10^{-5}$ \\
              & Training length & 1M steps \\
              & Batch Size & 1024 \\
              & Optimizer & Adam \\
              & Tanh-Squashed & Yes \\
        Critic & Network Size & [256, 256] \\
               & Activation Function & ReLU \\
               & Learning Rate & 0.0003 \\
               & Training Length & 1M steps \\
               & Batch Size & 1024 \\
               & Optimizer & Adam \\
               & $\epsilon_1$  & kitchen 0.01, others 0.5 & coefficient for the KL regularizer \\
               & $\epsilon_2$ & kitchen 2, others 0.5 & coefficient for the KL regularizer \\
               &  $\alpha$ & 0.01 & mixing coefficient to the denominator of $R(s)$ \\
               &  $\beta$ & 5 & coefficient for combination of distance metric \\
               &  $\gamma$ & 0.998 & discount factor in our formulation \\
                  
        \hline
    \end{tabular}
    \caption{Our selection of hyperparameters. We use the same network architecture and optimizer as SMODICE~\cite{ma2022smodice}.}
    \label{tab:uniquehpp}
\end{table}

\section{Additional Experimental Results}
\label{sec:app-exp-res}
\subsection{Supplementary Results for Tabular Environment}
\label{sec:sup-tabular}
\subsubsection{State and State-pair Total Variation (TV) distance}

In this section, we show the Total Variation (TV) divergence between the state occupancies of the learner and the expert and the state-pair occupancies between the learner and the expert, i.e., $\text{TV}(d^\pi_{s}\|d^E_{s})$ and $\text{TV}(d^\pi_{ss}\|d^E_{ss})$. Fig.~\ref{fig:statediv-tabular} shows the result of the state occupancy distance between the learner's and the expert policies. Fig.~\ref{fig:statepairdiv-hard} shows the distance between the state-pair occupancies. We observe our method to work better than SMODICE and LobsDICE. 

\begin{figure}
    \centering
    \includegraphics[width=\linewidth]{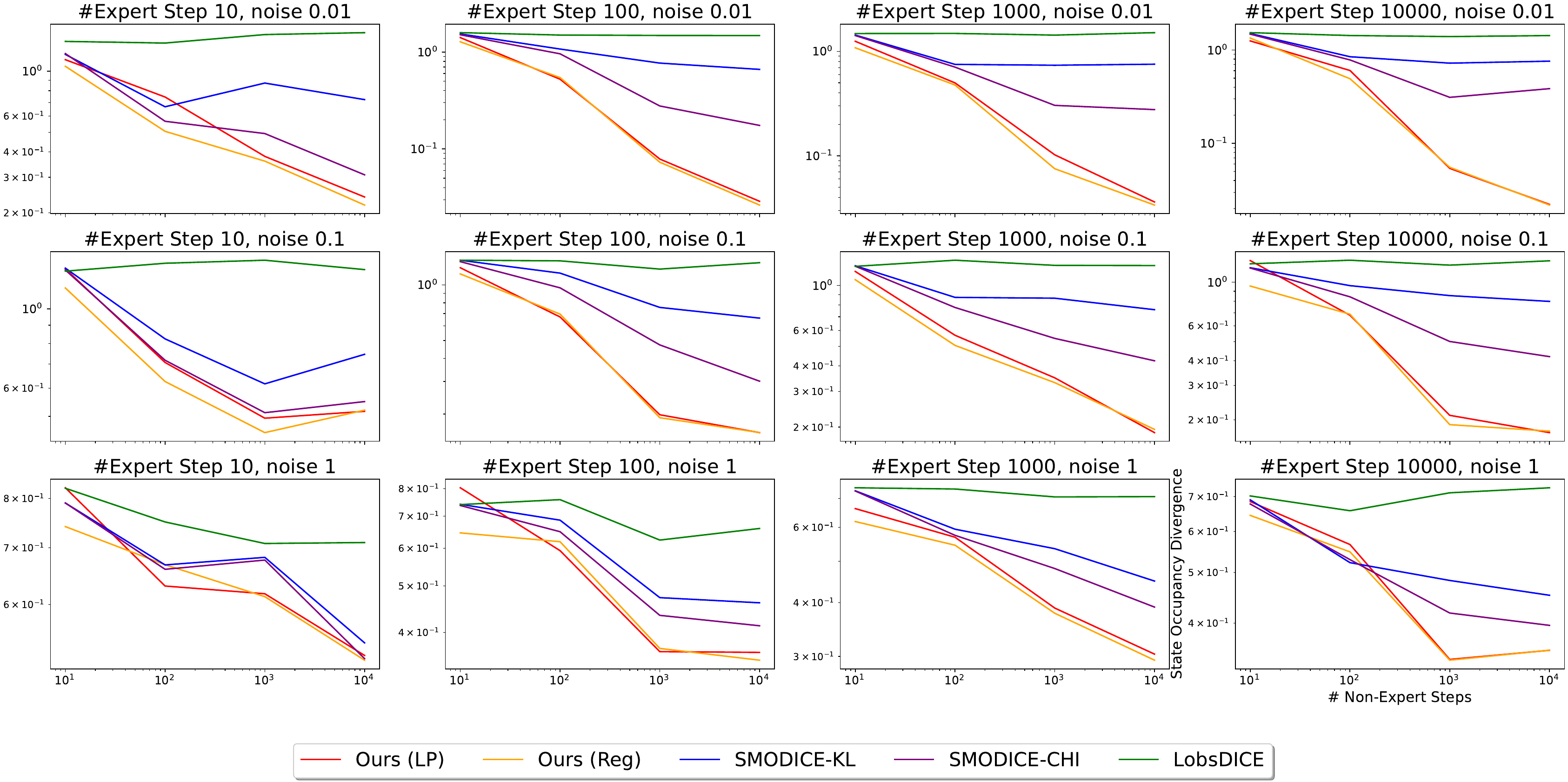}
    \caption{TV distance $\text{TV}(d^\pi_s\|d^E_s)$ of each method on tabular environments. Our method, both with and without regularizer, works comparably well as the baselines for a small task-agnostic dataset, and prevails with larger task-agnostic dataset (more accurate estimated dynamics).}
    \label{fig:statediv-tabular}
\end{figure}

\begin{figure}
    \centering
    \includegraphics[width=\linewidth]{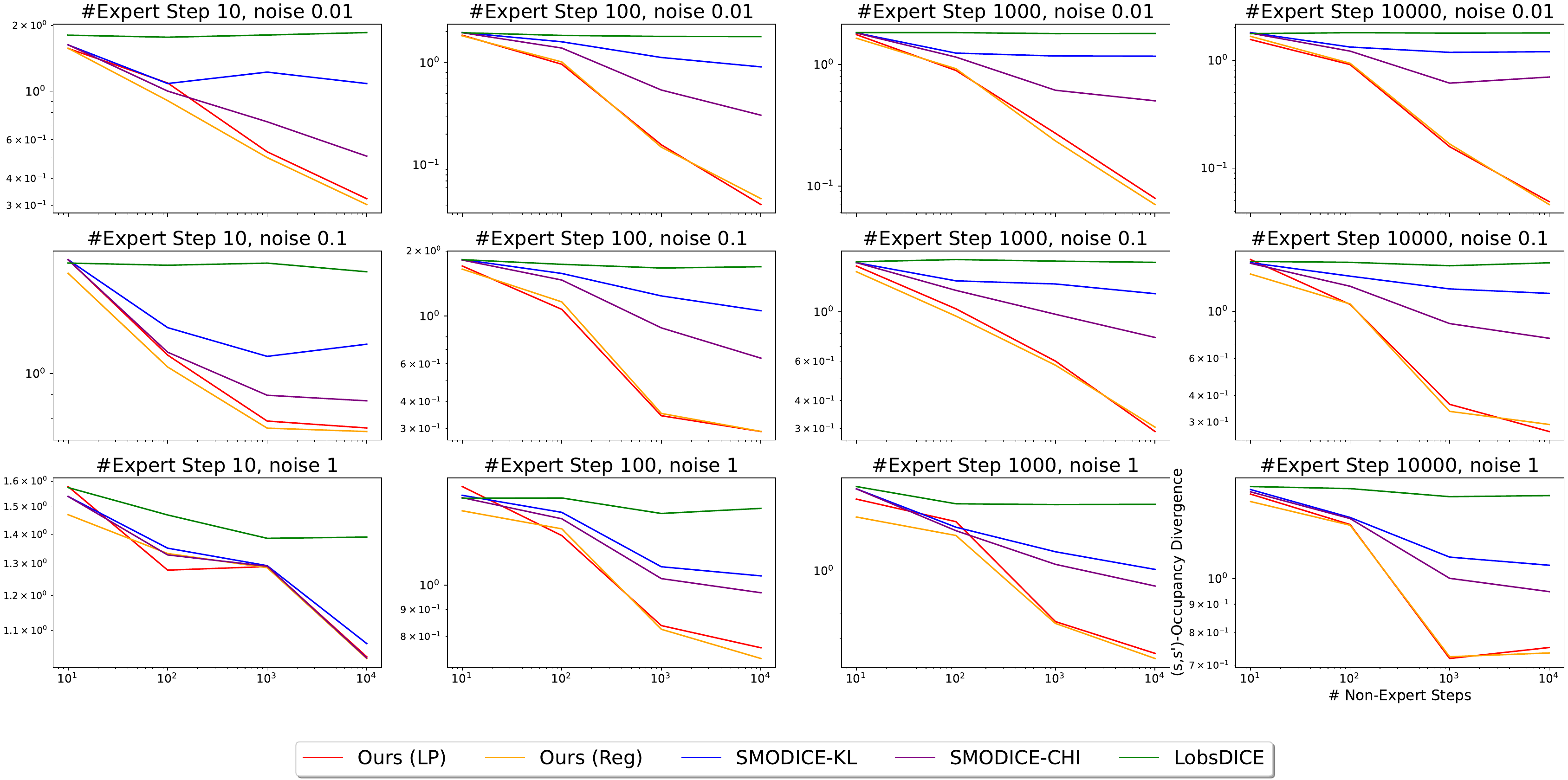}
    \caption{State-pair occupancy TV distance between the learner and expert ($\text{TV}(d^\pi_{ss}\|d^E_{ss})$) on tabular environments. Similar to TV distance between state-occupancies and regret, our method works the best, especially with larger task-agnostic dataset.}
    \label{fig:statepairdiv-hard}
\end{figure}

\subsubsection{Tabular Experiment with Softmax Expert}
\label{sec:softmax}
To be consistent with LobsDICE~\cite{Kim2022LobsDICEOL}, we also report experimental results using exactly the same setting of LobsDICE, which uses a highly sub-optimal expert. Fig.~\ref{fig:regret-soft}, Fig.~\ref{fig:statediv-soft}, and Fig.~\ref{fig:statepairdiv-soft} show the regret, state occupancy divergence $\text{TV}(d^\pi_s\|d^E_s)$, and state-pair occupancy divergence $\text{TV}(d^\pi_{ss}\|d^E_{ss})$ of each method in this setting respectively. The result shows that our method does not perform well in minimizing occupancy divergence, as the coefficient of the $f$-divergence regularizer in our PW-DICE is much smaller or $0$, which means that our obtained policy is more deterministic and thus different from the highly stochastic ``expert'' policy. It is worth noting that our method, with accurate estimation of MDP dynamics (i.e., large size of the task-agnostic/non-expert dataset), is the only method that achieves negative regret, i.e., our method is even better than the ``expert'' policy. Also, our method with regularizer generally achieves lower regret.

\begin{figure}[t]
    \centering
    \includegraphics[width=\linewidth]{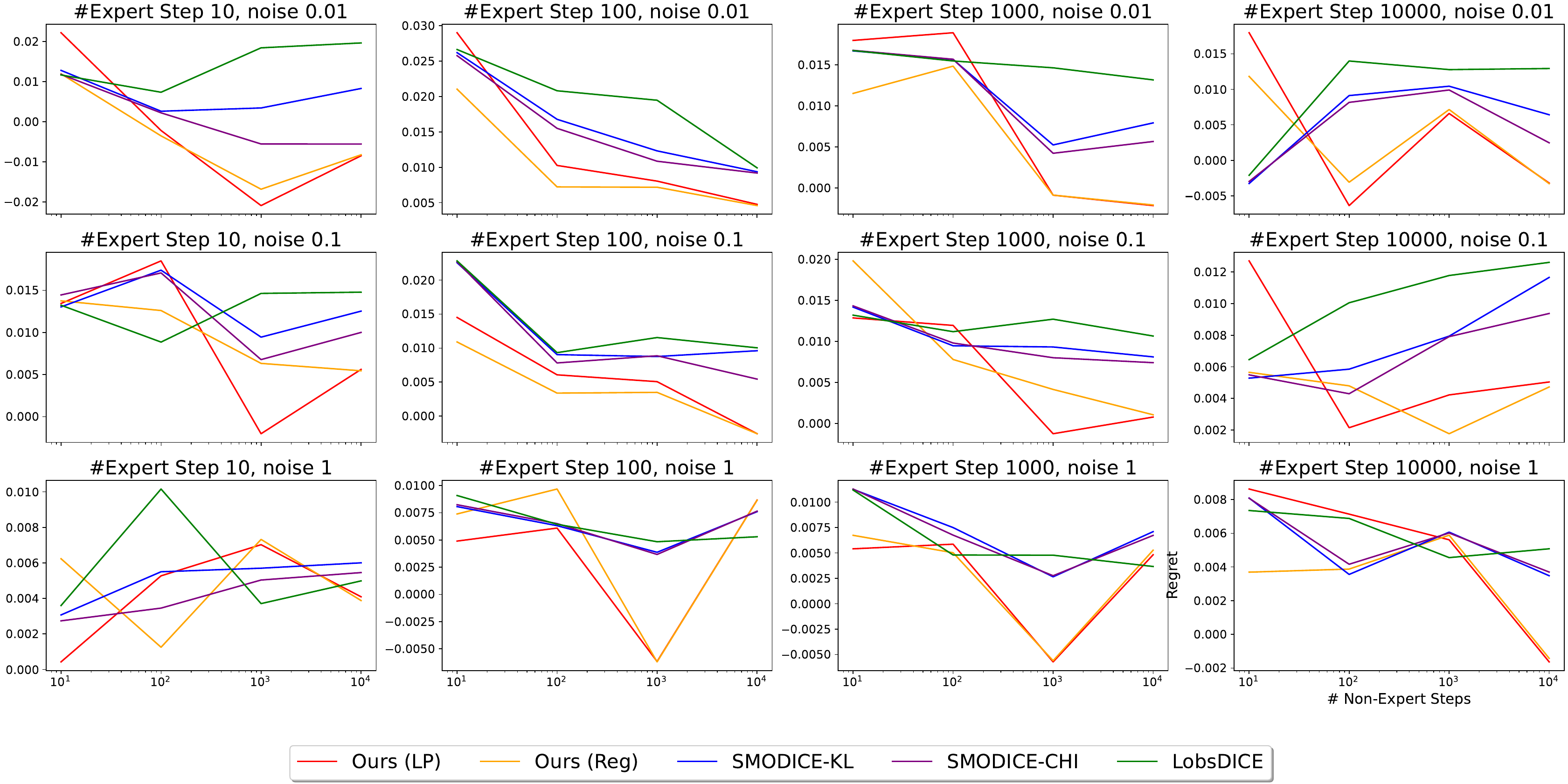}
    \caption{Regret of each method for tabular experiments with softmax expert. Our method with regularizer generally achieves the lower regret. Also, our method is the only one that achieves negative regret (i.e., better than the highly suboptimal ``expert'').}
    \label{fig:regret-soft}
\end{figure}

\begin{figure}[t]
    \centering
    \includegraphics[width=\linewidth]{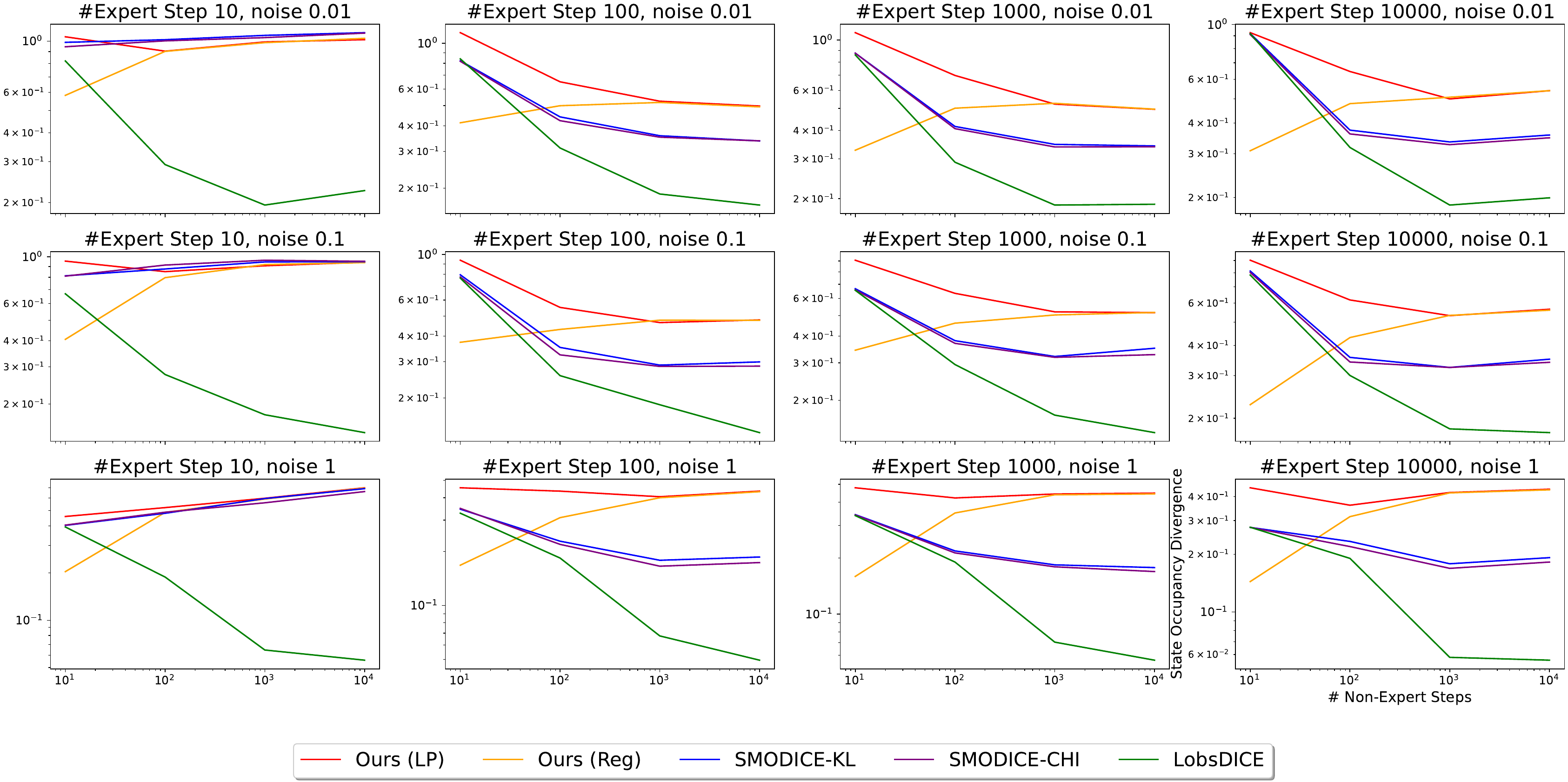}
    \caption{State occupancy TV distance $\text{TV}(d^\pi_s\|d^E_s)$ of each method for tabular experiments with softmax expert. Our method does not work well because the expert policy is highly stochastic.}
    \label{fig:statediv-soft}
\end{figure}

\begin{figure}[t]
    \centering
    \includegraphics[width=\linewidth]{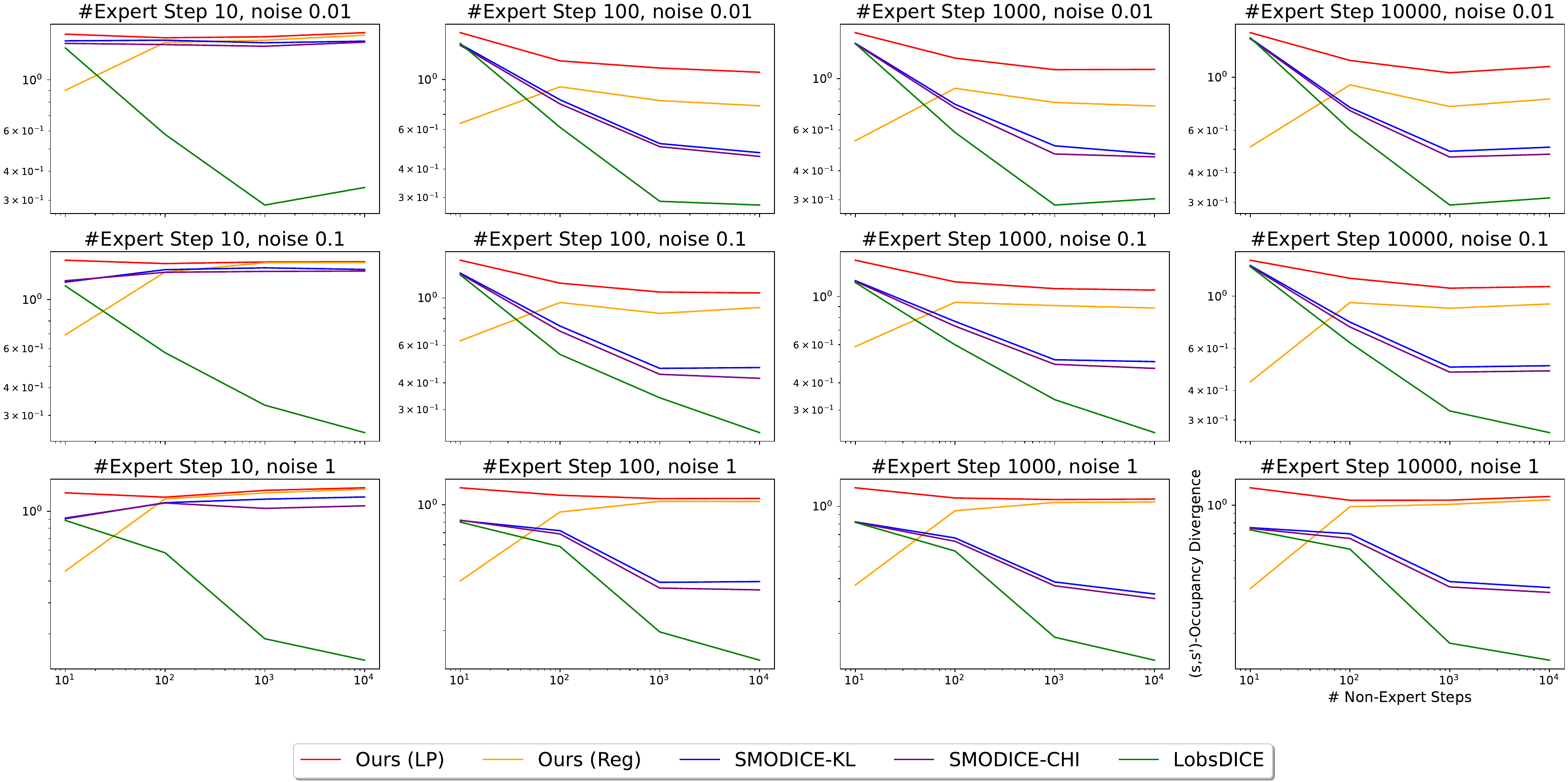}
    \caption{State-pair occupancy TV distance $\text{TV}(d^\pi_{ss}\|d^E_{ss})$ of each method for tabular experiments with softmax expert. Our method does not work well because the expert policy is highly stochastic.}
    \label{fig:statepairdiv-soft}
\end{figure}

%\subsection{Robustness of our Learned Distance Metric}
%\label{sec:robust}

%\subsection{The Effect of $\epsilon_1$ and $ \epsilon_2$}

% TODO: make page 21 and 22 more balanced
\subsection{PW-DICE with $\chi^2$-divergence on MuJoCo Environment}
\label{sec:chidiv}
In the main paper, we mainly considered PW-DICE with KL-divergence. However, as Corollary~\ref{thm:corolapp} suggests, the $D_f$ regularizer in PW-DICE can also be $\chi^2$-divergence. Suppose we use half $\chi^2$-divergence as SMODICE~\cite{ma2022smodice} does, i.e., $f(x)=\frac{1}{2}(x-1)^2$, $f_*(x)=\frac{1}{2}(x+1)^2$, and $f'(x)=x+1$.
With such a divergence, the final optimization objective of PW-DICE reads as follows:

\begin{equation}
\begin{aligned}
&\min_{\lambda}\frac{\epsilon_1}{2}\mathbb{E}_{s_i\sim I, s_j\sim E}\left(\frac{\lambda_{i+|S|}+\lambda_{j+2|S|}-c(s_i,s_j)}{\epsilon_1}+1\right)^2\\+&\frac{\epsilon_2}{2}\mathbb{E}_{(s_i,a_j,s_k)\sim I}\left(\frac{-\gamma\lambda_k+\lambda_i-\lambda_{i+|S|}}{\epsilon_2}+1\right)^2-\left[(1-\gamma)\mathbb{E}_{s\sim p_0}\lambda_{:|S|}+\mathbb{E}_{s\sim E}\lambda_{2|S|:3|S|}\right],
\end{aligned}
\end{equation}

and the policy loss is

\begin{equation}
E_{(s,a)\sim I}\max\left(0, \frac{-\gamma\mathbb{E}_{s_k\sim p(\cdot|s_i,a_j)}\lambda_k+\lambda_i-\lambda_{i+
|S|}}{\epsilon_2}\right).
\end{equation}

However, similar to SMODICE, we found that the $\chi^2$-divergence regularizer does not work well under MuJoCo environments, as the weight ratio between good and bad actions in the task-agnostic dataset is only proportional (instead of exponential) to $-\gamma\lambda_k+\lambda_i-\lambda_{i+|S|}$, and thus is not discriminative enough. As a result, the retrieved policy is highly stochastic. Fig.~\ref{fig:ablation-chi} shows the result of $\chi^2$-divergence, which is much worse than the KL-divergence result.

\begin{figure}
    \centering
    \includegraphics[width=\linewidth]{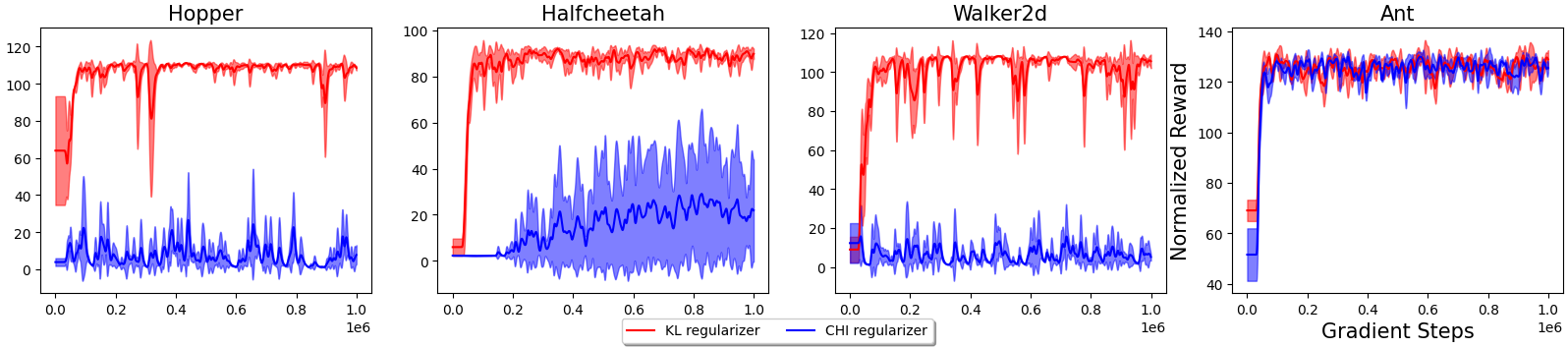}
    \caption{Performance comparison between $\chi^2$-divergence (blue) and KL-divergence (red) in PW-DICE. $\chi^2$-divergence does not work as well as KL-divergence.} 
    \label{fig:ablation-chi}
\end{figure}
{\color{black}
\subsection{Learning from Expert with Mismatched Dynamics}
In order to show that our method is robust with respect to different dynamics, we adopt the mismatched dynamics setting from SMODICE~\cite{ma2022smodice}, where the agent needs to learn from the same task-agnostic dataset as that in the main results of Sec.~\ref{sec:MuJoCoexp}, but with the expert dataset generated by an expert with very different dynamics; for example, one of the legs of the expert is amputated in the ant environment, and the torso of the expert is much shorter in the halfcheetah environment. We use exactly the same setting as SMODICE; Fig.~\ref{fig:mismatch} shows the result, which illustrates that our method is generally more robust to embodiment differences than SMODICE, LobsDICE, and ORIL.

\begin{figure}
    \centering
    \includegraphics[width=\linewidth]{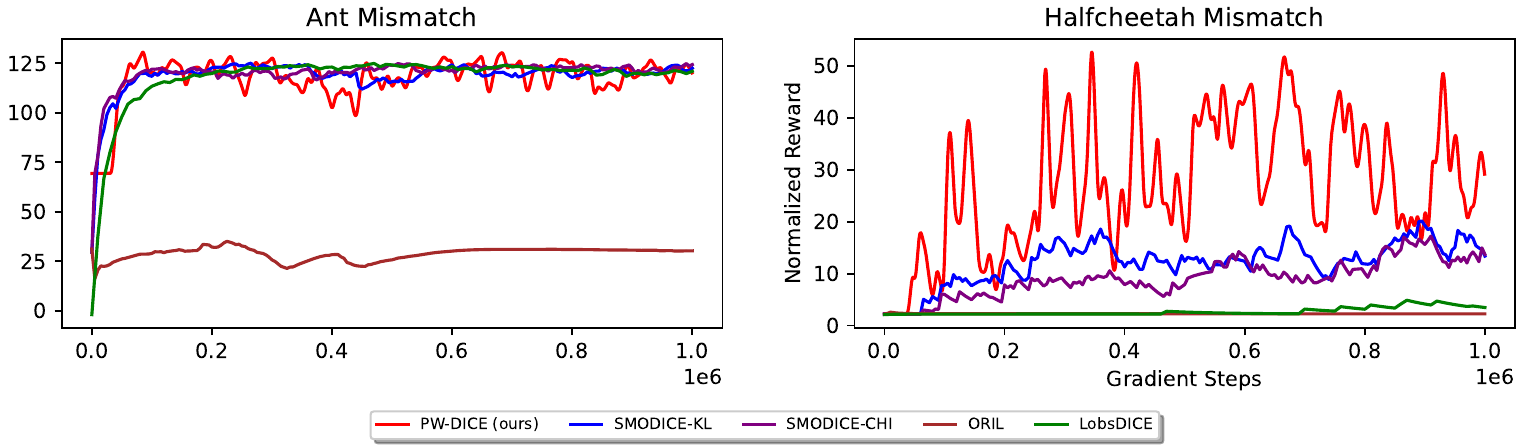}
    \caption{\color{black}Performance comparison on expert with mismatched dynamics; it is shown that our method is generally more robust than SMODICE, LobsDICE, and ORIL on the two environments.}
    \label{fig:mismatch}
\end{figure}

}
\section{Notations Table}
\label{sec:notlist}

Tab.~\ref{tab:notelist} lists the symbols that appear in the paper.

\begin{table}[t]
    \centering
    {\small
    \begin{tabular}{ccp{8cm}}
       \hline
        Name & Meaning & Note \\
        \hline
        $S$ & State space & $|S|$ is the size of state space\\
        $A$ & Action space & $|A|$ is the size of state space\\
        $\gamma$ & Discount factor & $\gamma\in(0, 1)$\\
        $r$ & Reward function &  $r(s,a)$ for single state-action pair \\
        $T$ & Transition function & \\
        $p$ & Transition (single entry) & $p(s'|s,a)\in\Delta(S)$ \\
        $p_0$ & Initial distribution & $p_0\in\Delta(S)$ \\
        $s$ & State & $s\in S$\\
        $a$ & Action & $a\in A$ \\
        $\bar{s}$ & Past state & \\
        $\bar{a}$ & Past action & \\
        $\tau$ & Trajectories & State-only or state-action; depend on context\\
        $E$ & Expert dataset & state-only expert trajectories \\
        $I$ & Task-agnostic dataset & state-action trajectories of unknown optimality \\
        $\pi$ & Learner policy & \\
        $\pi^E$ & Expert policy abstracted from $E$\\
        $\pi^I$ & Task-agnostic policy abstracted from $I$\\
        $d^\pi_{sa}$ & State-action occupancy of $\pi$ & \\
        $d^\pi_{s}$ & State occupancy of $\pi$ & 1) $\forall s\in\mathcal{S}, \sum_a d^\pi_{sa}(s,a)=d^\pi_s(s)$. This equation also applies similarly between $d^E_{sa}$ and $d^E_{s}$, as well as $d^I_{sa}$ and $d^I_{s}$. \newline
        2) $d^\pi_s(s)=(1-\gamma)\sum_{i=0}^\infty\gamma^i\Pr(s_i=s)$, where $s_i$ is the $i$-th state in a trajectory. This holds similarly for $d^I(s)$ and $d^E(s)$.\newline
        3) $d^\pi_{sa}(s,a)=d^\pi_s(s)\pi(a|s)$. This holds similarly for $d^E_{sa}, \pi_E$ and $d^I_{sa}, \pi_I$.\\
        $d^\pi_{ss}$ & State-pair occupancy of $\pi$ & \\
        $d^E_{s}$ & State occupancy of $\pi^E$\\ 
        $d^E_{ss}$ & State-pair occupancy of $\pi^E$ & \\
        $d^I_{sa}$ & State-action occupancy of $\pi^I$\\
        $d^I_{s}$ & State occupancy of $\pi^I$\\
        $\lambda$ & Dual variable \\
        $D_f$ & $f$-divergence \\
        $f_*(\cdot)$ & Fenchel conjugate of $f$ \\ 
        \hline
        $c$ & Matching cost for Wasserstein distance \\
        $c'$ & Matching cost for Wasserstein distance & With extended domain \\
        $\Pi$ & Wasserstein matching variable & $\sum_{s\in S}\Pi(s,s')=d^E_s(s')$, $\sum_{s'\in S}\Pi(s,s')=d^\pi_s(s)$\\
        $A$ & Equality constraint matrix \\
        $x$ & unified self-variable & concatenation of flattened $\Pi$ and $d^\pi_{sa}$ (row first) \\
        $b$ & Equality constraint vector & $Ax=b$ \\
        $U$ & Distribution as regularizer & product of $d^I_s$ and $d^E_s$\\
        $\mathcal{W}$ & Wasserstein distance\\
        {\color{black} $h(\cdot)$} & {\color{black}state discriminator} \\
        \hline
        $W$ & Weight matrix of contrastive learning & $W\in\mathbb{R}^{32\times 32}$\\
        $g(\cdot)$ & embedding to be learned & \\
        $L$ & score matrix & $L\in\mathbb{R}^{4096\times 4096}$\\
        $n$ & number of dimensions for state & \\
        $M$ & number of dimensions for embedding & $M=32$ \\
        {\color{black}$\beta$} & {\color{black}coefficient for learned embedding in distance metric}
    \end{tabular}
    \newline
    \caption{Complete list of notations used in the paper. The first part is for offline LfO settings, the second part lists notations specific to PW-DICE, and the third part is for notations used in contrastive learning (Appendix~\ref{sec:mjc}).}
    \label{tab:notelist}
    }
\end{table}

\section{Computational Resources}
\label{sec:resource}

All experiments are carried out with a single NVIDIA RTX 2080Ti GPU on an Ubuntu 18.04 server with 72 Intel Xeon Gold 6254 CPUs @ 3.10GHz. Given these resources, our method needs about $5-5.5$ hours to finish training in the MuJoCo environments (during which the training of the distance metric, including $R(s)$ and contrastive learning, takes $20-40$ minutes), while ORIL, SMODICE, and LobsDICE require about $2.5-3$ hours. As the actor network is identical across all methods, the inference speed is similar and is not a bottleneck.
%%%%%%%%%%%%%%%%%%%%%%%%%%%%%%%%%%%%%%%%%%%%%%%%%%%%%%%%%%%%%%%%%%%%%%%%%%%%%%%
%%%%%%%%%%%%%%%%%%%%%%%%%%%%%%%%%%%%%%%%%%%%%%%%%%%%%%%%%%%%%%%%%%%%%%%%%%%%%%%

\end{document}